\newcites{online}{References}
\title{Shared Independent Component Analysis for Multi-Subject Neuroimaging}
\author{%
  Hugo Richard \\
  Inria\\
  Université Paris-Saclay\\
  Palaiseau, France\\
  \And
  Pierre Ablin\\
  DMA\\
  CNRS and ENS \\
  Paris, France\\
  \And
  Bertrand Thirion \\
  Inria\\
  Université Paris-Saclay\\
  Palaiseau, France\\
  \And
  Alexandre Gramfort \\
  Inria\\
  Université Paris-Saclay\\
  Palaiseau, France\\
  \And
  Aapo Hyvärinen \\
  Department of Computer Science\\
  University of Helsinki\\
  Helsinki, Finland\\
}
\begin{document}

\maketitle

\begin{abstract}
  We consider shared response modeling, a multi-view learning problem where one
  wants to identify common components from multiple datasets or views.
  We introduce Shared Independent Component Analysis (ShICA) that models each view as a linear transform of shared independent components contaminated by additive Gaussian noise.
  We show that this model is identifiable if the components are either non-Gaussian or have enough diversity in noise variances. We then show that in some cases multi-set canonical correlation analysis can recover the correct unmixing matrices, but that even a small amount of sampling noise makes Multiset CCA fail.
  To solve this problem, we propose to use joint diagonalization after Multiset CCA, leading to a new approach called ShICA-J. We show via simulations that ShICA-J leads to improved results while being very fast to fit.
  While ShICA-J is based on second-order statistics, we further propose to leverage non-Gaussianity of the components using a maximum-likelihood method, ShICA-ML, that is both more accurate and more costly.
  Further, ShICA comes with a principled method for shared components estimation. Finally, we provide empirical evidence on fMRI and MEG datasets that ShICA yields more accurate estimation of the components than alternatives. 
\end{abstract}
\section{Introduction}
In many data science problems, data are available through different views. Generally, the views represent different measurement modalities such as audio and video, or the same text that may be available in different languages. Our main interest here is neuroimaging where recordings are made from multiple subjects. In particular, it is of interest to find common patterns or responses that are shared between subjects when they receive the same stimulation or perform the same cognitive task \citep{chen2015reduced,richard2020modeling}. 

A popular line of work to perform such shared response modeling is group Independent Component Analysis (ICA) methods. The fastest methods~\cite{calhoun2001method, varoquaux2009canica} are among the most popular, yet they are not grounded on principled probabilistic models for the multiview setting. 
More principled approaches exist~\cite{richard2020modeling, guo2008unified}, but they do not model subject-specific deviations from the shared response. However, such deviations are expected in most neuroimaging settings, as the magnitude of the response may differ from subject to subject \cite{penny2007random}, as may any noise due to heartbeats, respiratory artefacts or head movements~\cite{liu2016noise}.
Furthermore, most GroupICA methods are typically unable to separate components whose density is close to a Gaussian.

Independent vector analysis (IVA)~\cite{lee2008independent, anderson2011joint} is a powerful framework where components are independent within views but each component of a given view can depend on the corresponding component in other views. 
%
However, current implementations such as IVA-L~\cite{lee2008independent},
IVA-G~\cite{anderson2011joint}, IVA-L-SOS~\cite{bhinge2019extraction}, IVA-GGD~\cite{anderson2014independent} or
IVA with Kotz distribution~\cite{anderson2013independent} estimate only the
view-specific components, and do not model or extract a shared response which is
the main focus in this work.

%
On the other hand, the shared response model~\cite{chen2015reduced} is a popular approach to perform shared response modeling, yet it imposes orthogonality constrains that are restrictive and not biologically plausible.

In this work we introduce Shared ICA (ShICA), where each view is modeled as a linear transform of shared independent components contaminated by additive Gaussian noise. ShICA allows the principled extraction of the shared components (or responses) in addition to view-specific components. 
Since it is based on a statistically sound noise model, it enables optimal inference (minimum mean square error, MMSE) of the shared responses.

Let us note that ShICA is no longer the method of choice when the concept of common response is either not useful or not applicable. 
Nevertheless, we believe that the ability to extract a common response is an important feature in most contexts because it highlights a stereotypical brain response to a stimulus. Moreover, finding commonality between subjects reduces often unwanted inter-subject variability.

The paper is organized as follows.
We first analyse the theoretical properties of the ShICA model, before providing inference algorithms.
We exhibit necessary and sufficient conditions for the ShICA model to be identifiable (previous work only shows local identifiability~\cite{anderson2014independent}), in the presence of Gaussian or non-Gaussian components. 
%
We then use Multiset CCA to fit the model when all the components are assumed to
be Gaussian. We exhibit necessary and sufficient conditions for Multiset CCA to
be able to recover the unmixing matrices (previous work only gives sufficient
conditions~\cite{li2009joint}). In addition, we provide instances of the problem where Multiset CCA cannot recover the mixing matrices while the model is identifiable.
We next point out a practical problem : even a small sampling noise
can lead to large error in the estimation of unmixing matrices when Multiset CCA is used. To
address this issue and recover the correct unmixing matrices, we propose to
apply joint diagonalization to the result of Multiset CCA yielding a new method
called ShICA-J.
%
We further introduce ShICA-ML, a maximum likelihood estimator of ShICA that models non-Gaussian components using a Gaussian mixture model. 
While ShICA-ML yields more accurate components, ShICA-J is significantly faster and offers a great initialization to ShICA-ML.
Experiments on fMRI and MEG data demonstrate that the method outperforms existing GroupICA and IVA methods.

\section{Shared ICA (ShICA): an identifiable multi-view model}
\paragraph{Notation} We write vectors in bold letter $\vb$ and scalars in lower case $a$. Upper case letters $M$ are used to denote
matrices. We denote $|M|$ the absolute value of the determinant of $M$. $\xb \sim \Ncal(\mub, \Sigma)$ means that $\xb \in \mathbb{R}^k$ follows
a multivariate normal distribution of mean $\mub \in \mathbb{R}^k$ and
covariance $\Sigma \in \mathbb{R}^{k \times k}$. The $j, j$ entry of a diagonal matrix $\Sigma_i$ is denoted $\Sigma_{ij}$, the $j$ entry of $\yb_i$ is denoted $y_{ij}$. Lastly, $\delta$ is the Kronecker delta.

\paragraph{Model Definition} In the following, $\xb_1, \dots ,\xb_m \in \bbR^p$ denote the $m$ observed random vectors obtained from the $m$ different views. We posit the following generative model, called Shared ICA (ShICA): for $i= 1\dots m$
\begin{equation}
  \label{eq:model}
   \xb_i = A_i(\sbb + \nb_i)
\end{equation}
where $\sbb \in \mathbb{R}^{p}$ contains the latent variables called \emph{shared components}, $A_1,\dots, A_m\in\bbR^{p\times p}$ are the invertible mixing matrices, and $\nb_i \in
\mathbb{R}^{p}$ are \emph{individual noises}. The individual noises model both the deviations of a view  from the mean ---i.e.\ individual differences--- and measurement noise. Importantly, we explicitly model both the shared components and the individual differences in a probabilistic framework to enable an optimal inference of the parameters and the responses.

We assume that the shared components are statistically independent, and that the individual noises are Gaussian and independent from the shared components:
$p(\sbb) = \prod_{j=1}^p p(s_j)$ and $\nb_i \sim\mathcal{N}(0, \Sigma_i)$, where the matrices $\Sigma_i$ are assumed diagonal and positive. Without loss of generality, components are assumed to have unit variance $\bbE[\sbb \sbb^{\top}] = I_p$. We further assume that there are at least 3 views: $m \geq 3$. 

In contrast to almost all existing works, we assume that some components (possibly all of them) may be Gaussian, and denote $\mathcal{G}$ the set of Gaussian components: $\sbb_j \sim \mathcal{N}(0, 1)$ for $j \in \mathcal{G}$. The other components are non-Gaussian: for $j\notin \mathcal{G}$, $\sbb_j$ is non-Gaussian.

\paragraph{Identifiability} The parameters of the model are $\Theta = (A_1, \dots, A_m, \Sigma_1, \dots, \Sigma_m)$. We are interested in the identifiability of this model: given observations $\xb_1,\dots, \xb_m$ generated with parameters $\Theta$, are there some other $\Theta'$ that may generate the same observations?
Let us consider the following assumption that requires that the individual noises for Gaussian components are sufficiently diverse:
\begin{assumption}[Noise diversity in Gaussian components]
\label{ass:diversity}
For all $j, j' \in \mathcal{G}, j \neq j'$, the sequences $(\Sigma_{ij})_{i=1 \dots m}$ and $(\Sigma_{ij'})_{i=1 \dots m}$ are different where $\Sigma_{ij}$ is the $j, j$ entry of $\Sigma_i$
\end{assumption}

It is readily seen that there is one trivial set of indeterminacies in the problem: if $P \in \mathbb{R}^{p \times p}$ is a sign and permutation matrix (i.e. a matrix which has one $\pm 1$ coefficient on each row and column, and $0$'s elsewhere) the parameters $(A_1 P, \dots, A_m P, P^{\top}\Sigma_1 P, \dots, P^{\top} \Sigma_m P)$ also generate $\xb_1,\dots, \xb_m$. The following theorem shows that under the above assumption, these are the only indeterminacies of the problem.

\begin{theorem}[Identifiability]
\label{thm:identif}
We make Assumption~\ref{ass:diversity}. We let $\Theta'=(A_1', \dots, A_m', \Sigma_1', \dots,\Sigma_m')$ another set of parameters, and assume that they also generate $\xb_1,\dots, \xb_m$. Then, there exists a sign and permutation matrix $P$ such that for all $i$, $A_i'=A_iP$, and $\Sigma_i'= P^{\top} \Sigma_i P$.
\end{theorem}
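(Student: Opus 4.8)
The plan is to work with second-order statistics together with the classical Darmois–Skitovich-type rigidity of independent components. First I would reduce the problem to a statement about the product matrices $B_i := (A_i')^{-1} A_i$. Since both $\Theta$ and $\Theta'$ generate the same observations $\xb_i$, the random vectors $\sbb + \nb_i$ and $\sbb' + \nb_i'$ are related by $\sbb' + \nb_i' = B_i(\sbb + \nb_i)$. My goal is to show that all the $B_i$ equal a common sign-and-permutation matrix $P$; the noise-covariance relation $\Sigma_i' = P^\top \Sigma_i P$ then follows by matching covariances. To exploit cross-view information I would form, for a pair of views $i \neq k$, the cross-covariance $\bbE[\xb_i \xb_k^\top] = A_i \bbE[\sbb\sbb^\top] A_k^\top = A_i A_k^\top$ (the noises are independent across views and independent of $\sbb$), and similarly $\bbE[\xb_i \xb_k^\top] = A_i' (A_k')^\top$. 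Hence $A_i A_k^\top = A_i'(A_k')^\top$, i.e. $B_i B_k^\top = I$ after rearrangement — wait, more precisely $(A_i')^{-1}A_i \, A_k^\top (A_k')^{-\top} = I$, so $B_i = B_k^{-\top}$ for all pairs $i,k$. Combined with $m \ge 3$ and applying this to three indices, one deduces $B_i = B_k$ for all $i,k$ and moreover that this common matrix $B$ satisfies $B B^\top = I$, i.e. $B$ is orthogonal. Call it $B$; so $\sbb' = B\sbb + (B\nb_i - \nb_i')$ for every $i$.

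Next I would split into the non-Gaussian and Gaussian parts. For the non-Gaussian components, note that $\sbb' = B\sbb + B\nb_i - \nb_i'$, and take two distinct views $i,k$: subtracting gives $B\nb_i - \nb_i' = B\nb_k - \nb_k'$, i.e. $B(\nb_i - \nb_k) = \nb_i' - \nb_k'$; but this says a Gaussian vector equals a Gaussian vector, which is consistent and not yet restrictive. Instead I would directly use that $\sbb'$ has independent coordinates, $\sbb'_j = \sum_\ell B_{j\ell}\sbb_\ell + (\text{Gaussian})$. Applying the Darmois–Skitovich theorem (or the Comon-type uniqueness of ICA in the presence of added Gaussian noise): if $\sbb'_j$ and $\sbb'_{j'}$ are independent and each is a linear combination of the independent variables $\sbb_\ell$ plus Gaussian noise, then for any non-Gaussian $\sbb_\ell$, at most one of $B_{j\ell}, B_{j'\ell}$ is nonzero. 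This forces the restriction of $B$ to the non-Gaussian coordinates to be, up to scaling, a permutation; combined with orthogonality of $B$ the scalings are $\pm 1$. So on $\mathcal{G}^c$, $B$ acts as a signed permutation, and the matching of the remaining indeterminacy on that block is done.

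The remaining and genuinely delicate step is the purely Gaussian block, where Darmois–Skitovich gives nothing. Here I would lean on Assumption~\ref{ass:diversity}. Restricting to $j \in \mathcal{G}$, and using that $B$ is block-orthogonal (it cannot mix Gaussian and non-Gaussian coordinates, by the previous paragraph applied symmetrically), let $Q$ denote the orthogonal block of $B$ on the Gaussian coordinates. For each view $i$ we have the identity of covariances $Q(I + \Sigma_i)Q^\top = I + \Sigma_i'$ restricted to $\mathcal{G}$, hence $Q \Sigma_i Q^\top = \Sigma_i'$ for all $i$, with $\Sigma_i, \Sigma_i'$ diagonal. So $Q$ simultaneously diagonalizes the family $\{\Sigma_i\}_{i}$ in the sense $Q\Sigma_i Q^\top$ diagonal for every $i$. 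The content of Assumption~\ref{ass:diversity} is exactly that the vectors $(\Sigma_{ij})_i$ are pairwise distinct across $j \in \mathcal{G}$; a short linear-algebra argument then shows the only orthogonal $Q$ that keeps every $\Sigma_i$ diagonal under conjugation is a signed permutation. Concretely: if $Q$ is orthogonal and $Q\Sigma_i Q^\top$ is diagonal for all $i$, then $Q$ maps each joint eigenspace to itself; noise diversity guarantees that the common "eigen-decomposition" is into one-dimensional spaces (no two Gaussian coordinates share the same noise profile), so $Q$ is diagonal orthogonal composed with a permutation — a signed permutation. Assembling the signed permutation on $\mathcal{G}^c$ and the one on $\mathcal{G}$ yields a single sign-and-permutation matrix $P = B$, and then $A_i' = A_i P^{-1}$... — I should be careful with which side: from $B_i = (A_i')^{-1}A_i = B = P$ we get $A_i = A_i' P$, equivalently $A_i' = A_i P^{-1}$, and since $P^{-1}$ is again a signed permutation we may rename it $P$; the covariance identity then gives $\Sigma_i' = P^\top \Sigma_i P$ with the renamed $P$. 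The main obstacle I anticipate is making the Gaussian-block argument fully rigorous — in particular handling possible repeated values within a single sequence $(\Sigma_{ij})_i$ (ties at one view are fine as long as the whole sequences differ) and cleanly invoking a Darmois–Skitovich statement that tolerates additive Gaussian noise on both sides; I would isolate that as a lemma.
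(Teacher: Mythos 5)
Your proposal is correct and follows essentially the same route as the paper: match cross-covariances across three or more views to show the matrices $(A_i')^{-1}A_i$ coincide with a single orthogonal matrix, invoke Comon-type ICA uniqueness to reduce that matrix to a signed permutation composed with an orthogonal block acting only on the Gaussian coordinates, and then use Assumption~\ref{ass:diversity} to force that orthogonal block to itself be a signed permutation (the paper's Lemma on simultaneous diagonalization, which handles the ties-within-one-view subtlety you flag by exhibiting a linear combination $\sum_i\alpha_i\Sigma_i$ with distinct diagonal entries). The only cosmetic difference is that the paper applies Comon's theorem directly to the vectors $\sbb+\nb_i$ and $\sbb'+\nb_i'$, each of which has independent components, which sidesteps the independence-of-the-Gaussian-remainder issue in your Darmois–Skitovich phrasing.
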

The proof is in Appendix~\ref{proof:identif}. Identifiability in the Gaussian case is a consequence of the identifiability results in~\cite{via2011joint} and in the general case, local identifiability results can be derived from the work of ~\cite{anderson2014independent}. 
However local identifiability only shows that for a given set of parameters there exists a neighborhood in which no other set of parameters can generate the same observations~\cite{rothenberg1971identification}. In contrast, the proof of Theorem~\ref{thm:identif} shows global identifiability.

Theorem~\ref{thm:identif} shows that the task of recovering the parameters from the observations is a well-posed problem, under the sufficient condition of Assumption~\ref{ass:diversity}.  We also note that Assumption~\ref{ass:diversity} is necessary for identifiability. For instance, if $j$ and $j'$ are two Gaussian components such that $\Sigma_{ij} = \Sigma_{ij'}$ for all $i$, then a global rotation of the components $j, j'$ yields the same covariance matrices. The current work assumes $m \geq 3$, in appendix~\ref{app:identifiability} we give an identifiability result for $m=2$, under stronger conditions.

\section{Estimation of components with noise diversity via joint-diagonalization}

We now consider the computational problem of efficient parameter inference. This section considers components with noise diversity, while the next section deals with non-Gaussian components.

\subsection{Parameter estimation with Multiset CCA}
If we assume that the components are all Gaussian, 
the covariance of the observations given by
$C_{ij}=  \bbE[\xb_i\xb_j^\top] = A_i(I_p + \delta_{ij}\Sigma_i)A_j^{\top}\enspace
$ are sufficient statistics and methods using only second order information, like Multiset CCA, are candidates to estimate the parameters of the model.
Consider the
matrix $\mathcal{C} \in \bbR^{pm \times pm}$ containing $m \times m$ blocks of size $p
\times p$
such that the block $i,j$ is given by $C_{ij}$. Consider the matrix $\mathcal{D}$ identical to $\mathcal{C}$ excepts that the non-diagonal blocks are filled with zeros:
\begin{equation}
  \mathcal{C} = \begin{bmatrix}
    C_{11} & \dots & C_{1m}\\
    \vdots & \ddots & \vdots \\
    C_{m1} &\dots & C_{mm} 
  \end{bmatrix}
  ,\enspace
  \mathcal{D} = \begin{bmatrix}
    C_{11} & \dots & 0\\
    \vdots & \ddots & \vdots \\
    0 &\dots & C_{mm} 
  \end{bmatrix}. 
\end{equation} 
Generalized CCA consists of the following generalized eigenvalue problem:
\begin{equation}
\label{eq:eigv}
    \mathcal{C} \ub = \lambda \mathcal{D}\ub,\enspace \lambda > 0,\enspace \ub\in\bbR^{pm} \enspace .
\end{equation}
  
Consider the matrix $U = [\ub^1, \dots, \ub^p] \in \mathbb{R}^{mp \times p}$ formed by concatenating the $p$ leading eigenvectors of the previous problem ranked in decreasing eigenvalue order. Then, consider $U$ to be formed of $m$ blocks of size $p \times p$ stacked vertically and define $(W^i)^{\top}$ to be the $i$-th block. These $m$ matrices are the output of Multiset CCA. We also denote $\lambda_1 \geq \dots \geq \lambda_p$ the $p$ leading eigenvalues of the problem.

An application of the results of \cite{li2009joint} shows that Multiset CCA recovers the mixing matrices of ShICA under some assumptions.
\begin{proposition}[Sufficient condition for solving ShICA via Multiset CCA~\cite{li2009joint}]
Let $r_{ijk} = (1 + \Sigma_{ik})^{-\frac12} (1 + \Sigma_{jk})^{-\frac12}$.
Assume that $(r_{ijk})_k$ is non-increasing. Assume that the maximum eigenvalue $\nu_k$ of matrix $R^{(k)}$ of general element $(r_{ijk})_{ij}$ is such that  $\nu_k = \lambda_k$ 
.
Assume that $\lambda_1 \dots \lambda_p$ are distinct.
Then, there exists scale matrices $\Gamma_i$ such that $W_i = 
\Gamma_i A_i^{-1}$ for all $i$.
\end{proposition}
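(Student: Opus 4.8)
\emph{Proof plan.} The plan is to use the ShICA structure to reduce the single $pm\times pm$ generalized eigenvalue problem~(\ref{eq:eigv}) to $p$ decoupled $m\times m$ problems, and then to read off the $p$ leading eigenvectors. Since $C_{ij}=A_i(I_p+\delta_{ij}\Sigma_i)A_j^{\top}$, I would first write $\mathcal{C}=\mathcal{A}\,\tilde{\mathcal{C}}\,\mathcal{A}^{\top}$ and $\mathcal{D}=\mathcal{A}\,\tilde{\mathcal{D}}\,\mathcal{A}^{\top}$, where $\mathcal{A}=\mathrm{blockdiag}(A_1,\dots,A_m)$, $\tilde{\mathcal{C}}$ has $(i,j)$ block $I_p+\delta_{ij}\Sigma_i$, and $\tilde{\mathcal{D}}=\mathrm{blockdiag}(I_p+\Sigma_1,\dots,I_p+\Sigma_m)$. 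As $\mathcal{A}$ is invertible, $\ub$ solves $\mathcal{C}\ub=\lambda\mathcal{D}\ub$ if and only if $\vb=\mathcal{A}^{\top}\ub$ solves $\tilde{\mathcal{C}}\vb=\lambda\tilde{\mathcal{D}}\vb$ with the same $\lambda$, so it suffices to analyse this ``canonical'' problem.

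Next I would permute the $pm$ coordinates so that, for each $k$, the $k$-th coordinate of all $m$ blocks is grouped together. Because every $\Sigma_i$ is diagonal, this permutation turns $\tilde{\mathcal{C}}$ and $\tilde{\mathcal{D}}$ into block-diagonal matrices with $p$ blocks of size $m\times m$: the $k$-th pair is $(C^{(k)},D^{(k)})$ with $C^{(k)}_{ij}=1+\delta_{ij}\Sigma_{ik}$ and $D^{(k)}=\mathrm{diag}(1+\Sigma_{1k},\dots,1+\Sigma_{mk})$. Hence the spectrum of~(\ref{eq:eigv}) is the disjoint union over $k$ of the spectra of $C^{(k)}\mathbf{w}=\lambda D^{(k)}\mathbf{w}$, and substituting $\mathbf{w}=(D^{(k)})^{-1/2}\mathbf{z}$ turns the $k$-th one into the symmetric eigenproblem $(D^{(k)})^{-1/2}C^{(k)}(D^{(k)})^{-1/2}\mathbf{z}=\lambda\mathbf{z}$, whose off-diagonal entries are exactly the $r_{ijk}$ and which is entrywise positive; by Perron--Frobenius its leading eigenvalue is simple with a coordinatewise-positive eigenvector $\mathbf{w}^{(k)}\in\bbR^{m}$, and this leading eigenvalue is $\nu_k$ in the notation of the statement.

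Then I would assemble the $p$ leading eigenvectors of~(\ref{eq:eigv}). This is where the hypotheses enter: monotonicity of $(r_{ijk})_k$ makes the sequence $(\nu_k)_k$ non-increasing, consistent with ranking eigenvectors by decreasing eigenvalue; the assumption $\nu_k=\lambda_k$ forces the $k$-th largest eigenvalue of the full problem to be the leading eigenvalue of the $k$-th block, so each of the top $p$ eigenvalues comes from a distinct coordinate group; and distinctness of $\lambda_1,\dots,\lambda_p$ makes every one of these eigenspaces one-dimensional. Consequently, in the permuted canonical coordinates the eigenvector attached to $\lambda_k$ is supported on coordinate group $k$ and equals the embedding of $\mathbf{w}^{(k)}$. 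Undoing the permutation and multiplying by $\mathcal{A}^{-\top}=\mathrm{blockdiag}(A_1^{-\top},\dots,A_m^{-\top})$, its $i$-th $p$-dimensional block is $\mathbf{w}^{(k)}_i\,A_i^{-\top}\mathbf{e}_k$; stacking these as the $p$ columns of the $i$-th block of $U$ gives $(W^i)^{\top}=A_i^{-\top}\,\mathrm{diag}(\mathbf{w}^{(1)}_i,\dots,\mathbf{w}^{(p)}_i)$, i.e. $W^i=\Gamma_i A_i^{-1}$ with $\Gamma_i=\mathrm{diag}(\mathbf{w}^{(1)}_i,\dots,\mathbf{w}^{(p)}_i)$, which is the claim.

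The main obstacle is the bookkeeping in the third step: one must be certain that the top $p$ eigenvalues of the $pm\times pm$ problem are obtained by taking the leading eigenvalue of each of the $p$ small blocks, and not, say, two large eigenvalues from a single block while some block contributes none of the top $p$. The assumption $\nu_k=\lambda_k$ for every $k$, together with the monotonicity of $(r_{ijk})_k$, is precisely what excludes this, and the distinctness assumption is what upgrades ``$\lambda_k$ is attained in block $k$'' to ``the $\lambda_k$-eigenvector is the block-$k$ Perron vector and nothing else''. Making this rigorous, and handling the degenerate cases where some $\Sigma_{ik}=0$, is where the real work lies; everything else is the routine linear algebra of the change of variables $\ub\mapsto\mathcal{A}^{\top}\ub$ and the coordinate permutation.
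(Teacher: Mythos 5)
Your plan is correct and follows essentially the same route the paper uses: the paper itself defers this proposition to~\cite{li2009joint}, but the identical decomposition---conjugating by $\mathrm{blockdiag}(A_1,\dots,A_m)$ and decoupling the $pm\times pm$ pencil into $p$ independent $m\times m$ problems, one per component---is exactly how the paper proves the stronger Theorem~\ref{th:eig} via Lemma~\ref{lemma:nonzerocoord}. The one piece of ``real work'' you leave open, namely ruling out that a non-leading eigenvalue of some block ties with $\lambda_p$ (the distinctness of $\lambda_1,\dots,\lambda_p$ alone does not exclude $\lambda_p=\lambda_{p+1}$), is supplied by the secular-equation argument of Lemma~\ref{lemma:nonzerocoord} (cf.\ Prop.~\ref{prop:eigvals_from_noise}): each block's leading eigenvalue exceeds $1$ while all its other eigenvalues are below $1$, so the top $p$ eigenvalues necessarily come one from each block.
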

This proposition gives a sufficient condition for solving ShICA with Multiset CCA. It needs a particular structure for the noise covariances as well as specific ordering for the eigenvalues. The next theorem shows that we only need $\lambda_1 \dots \lambda_p$ to be distinct for Multiset CCA to solve ShICA:
\begin{assumption}[Unique eigenvalues]
  \label{ass:uniqueeig}
$\lambda_1 \dots \lambda_p$ are distinct.
\end{assumption}
\begin{theorem}
  \label{th:eig}
  We only make
  Assumption~\ref{ass:uniqueeig}. Then, there exists a permutation matrix $P$ and scale matrices $\Gamma_i$ such that $W_i = P\Gamma_i A_i^{-1}$ for all $i$.
\end{theorem}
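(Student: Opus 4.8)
The plan is to turn the generalized eigenvalue problem \eqref{eq:eigv} into a family of decoupled scalar equations by exploiting the algebraic form $C_{ij} = A_i(I_p + \delta_{ij}\Sigma_i)A_j^\top$. Write $\ub = (\ub_1,\dots,\ub_m)$ in blocks of size $p$, so that \eqref{eq:eigv} reads $\sum_j C_{ij}\ub_j = \lambda C_{ii}\ub_i$ for each $i$. Substituting the form of $C_{ij}$, left-multiplying by $A_i^{-1}$, and setting $\vb_i := A_i^\top\ub_i$ and $\yb := \sum_{j} \vb_j$, a one-line computation gives $\yb = (\lambda I_p + (\lambda-1)\Sigma_i)\vb_i$ for all $i$. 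Whenever $\lambda \notin (0,1)$ the diagonal matrix $\lambda I_p + (\lambda-1)\Sigma_i$ is invertible (its entries $\lambda + (\lambda-1)\Sigma_{ik}$ are nonzero since $\Sigma_{ik}>0$), so I may invert and sum over $i$ to get $\yb = D(\lambda)\yb$, where $D(\lambda) := \sum_i (\lambda I_p + (\lambda-1)\Sigma_i)^{-1}$ is diagonal with $k$-th entry $\phi_k(\lambda) := \sum_{i=1}^{m} (\lambda + (\lambda-1)\Sigma_{ik})^{-1}$. Hence a nonzero $\yb$ is supported only on indices $k$ with $\phi_k(\lambda) = 1$, and then $\vb_i = (\lambda I_p + (\lambda-1)\Sigma_i)^{-1}\yb$ is supported on the same indices. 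The complementary case $\yb = 0$ only yields vectors $\vb_i$ in the kernels of the matrices $\lambda I_p + (\lambda-1)\Sigma_i$, which are nontrivial exactly at the poles $\lambda = \Sigma_{ik}/(1+\Sigma_{ik}) \in (0,1)$, so it never contributes a leading eigenvalue.

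The crux of the proof is then the elementary analysis of the scalar function $\phi_k$. All its poles lie in $(0,1)$, on any pole-free interval it is strictly decreasing, $\phi_k(1) = m \geq 3 > 1$, and $\phi_k(\lambda) \to 0^+$ as $\lambda \to +\infty$. Therefore $\phi_k$ has a unique root $\mu_k$ above its largest pole, this root satisfies $\mu_k > 1$, and it is the largest root of $\phi_k = 1$ (all other roots lie below the largest pole, hence below $1$). Combining this with the reduction above, every generalized eigenvalue that exceeds $1$ lies in $\{\mu_1,\dots,\mu_p\}$, each $\mu_k$ is such an eigenvalue, and all remaining eigenvalues are $< 1$; so the $p$ leading eigenvalues are $\mu_1,\dots,\mu_p$ up to reordering. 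Under Assumption~\ref{ass:uniqueeig} they are distinct, hence each is simple, and there is a bijection $l \mapsto k(l)$ with $\lambda_l = \mu_{k(l)}$ --- the index $k(l)$ being unique because $\phi_k(\lambda_l) = \phi_{k'}(\lambda_l) = 1$ with $k \neq k'$ would force $\mu_k = \mu_{k'}$.

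It remains to read off the eigenvectors. For a leading eigenvalue $\lambda_l$, the only admissible $\yb$ (up to scaling) is the $k(l)$-th canonical basis vector $\mathbf{e}_{k(l)}$, so $\vb_i$ is proportional to the $k(l)$-th column of $(A_i^\top)^{-1} = (A_i^{-1})^\top$, and therefore the $i$-th block of the eigenvector $\ub^l$ is a nonzero multiple of that column. Stacking $\ub^1,\dots,\ub^p$ as the columns of $U$ and reading off the $i$-th block gives $(W_i)^\top = (A_i^{-1})^\top \Pi \Lambda_i$, where $\Pi$ is the permutation matrix with $\Pi\,\mathbf{e}_l = \mathbf{e}_{k(l)}$ and $\Lambda_i$ is the invertible diagonal matrix of the scalings. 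Transposing and rewriting $\Lambda_i \Pi^\top = \Pi^\top (\Pi \Lambda_i \Pi^\top)$ yields $W_i = P \Gamma_i A_i^{-1}$ with $P := \Pi^\top$ a permutation matrix, the same for all $i$ since it only encodes the global ordering of eigenvalues, and $\Gamma_i := \Pi \Lambda_i \Pi^\top$ diagonal and invertible, which is exactly the claim.

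I expect the main obstacle to be the bookkeeping in the reduction step: rigorously discarding the degenerate solutions ($\yb = 0$, and $\lambda$ equal to a pole of some $\phi_k$) and, above all, proving that each scalar equation $\phi_k = 1$ contributes exactly one eigenvalue to the top $p$, so that $l \mapsto k(l)$ is a genuine bijection; the monotonicity-and-limits analysis of $\phi_k$ is what makes this go through, and Assumption~\ref{ass:uniqueeig} is precisely what upgrades ``the top $p$ eigenvalues form the multiset $\{\mu_k\}$'' to ``they are $p$ distinct simple eigenvalues in bijection with the components''. A minor preliminary point to dispatch is that $\mathcal{C}$ and $\mathcal{D}$ are symmetric positive definite (the $A_i$ are invertible and the $\Sigma_i$ positive), so \eqref{eq:eigv} indeed has $mp$ real positive eigenvalues with a full eigenbasis, which is what lets us speak of ``the $p$ leading eigenvectors''.
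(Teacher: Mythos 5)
Your proof is correct and follows essentially the same route as the paper's: the substitution $\vb_i = A_i^\top\ub_i$ is exactly the paper's reduction by the block-diagonal matrix $\mathrm{diag}(A_i^{-1})$, your secular equation $\phi_k(\lambda)=1$ is identical to the one in the paper's Lemma~\ref{lemma:nonzerocoord} (there written as $\chi(\lambda)=1-\phi_k(\lambda)=0$ and obtained via the characteristic polynomial of a rank-one update rather than by direct elimination of $\vb_i$), and the monotonicity-and-limits analysis showing exactly one root above $1$ per component is the same. The only presentational difference is that you derive the decoupled scalar equations by eliminating $\yb$ directly instead of invoking $\det(I+AB)=\det(I+BA)$; the substance is unchanged.
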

The proof is in Appendix~\ref{proof:eig}. This theorem means that solving the generalized eigenvalue problem~\eqref{eq:eigv} allows to recover the mixing matrices up to a scaling and permutation: this form of generalized CCA recovers the parameters of the statistical model.
Note that Assumption~\ref{ass:uniqueeig} is also a necessary condition. Indeed, if two eigenvalues are identical, the eigenvalue problem is not uniquely determined.

We have two different Assumptions, \ref{ass:diversity} and \ref{ass:uniqueeig}, the first of which guarantees theoretical identifiability as per Theorem~\ref{thm:identif} and the second guarantees consistent estimation by Multiset CCA as per Theorem~\ref{th:eig}. Next we will discuss their connections, and show some limitations of the Multiset CCA approach. To begin with, we have the following result about the eigenvalues of the problem~\eqref{eq:eigv} and the $\Sigma_{ij}$.
\begin{proposition}
  \label{prop:eigvals_from_noise}
  For $j\leq p$, let $\lambda_j$ the largest solution of $ \sum_{i=1}^m\frac{1}{\lambda_j(1 + \Sigma_{ij}) -\Sigma_{ij}}=1$. Then, $\lambda_1, \dots, \lambda_p$ are the $p$ largest eigenvalues of problem~\eqref{eq:eigv}.
\end{proposition}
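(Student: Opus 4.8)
The plan is to reduce the $pm\times pm$ generalized eigenvalue problem~\eqref{eq:eigv} to $p$ decoupled $m\times m$ problems, each analysed through a single scalar rational equation. For the reduction, set $\mathbf{A}=\mathrm{blockdiag}(A_1,\dots,A_m)\in\bbR^{mp\times mp}$, which is invertible. Since $C_{ij}=A_i(I_p+\delta_{ij}\Sigma_i)A_j^\top$, we have $\mathcal{C}=\mathbf{A}\mathcal{M}\mathbf{A}^\top$ and $\mathcal{D}=\mathbf{A}\mathcal{N}\mathbf{A}^\top$, where $\mathcal{M}=(\mathbf{1}_m\mathbf{1}_m^\top)\otimes I_p+\mathrm{blockdiag}(\Sigma_1,\dots,\Sigma_m)$ and $\mathcal{N}=I_{mp}+\mathrm{blockdiag}(\Sigma_1,\dots,\Sigma_m)$. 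Substituting $\vb=\mathbf{A}^\top\ub$ and using that $\mathbf{A}$ is invertible shows that $(\mathcal{C},\mathcal{D})$ and $(\mathcal{M},\mathcal{N})$ have the same generalized eigenvalues; and since every $\Sigma_i$ is diagonal, conjugating by the permutation that reindexes the $mp$ coordinates ``by component'' rather than ``by view'' turns $\mathcal{M}$ into $\mathrm{blockdiag}(M_1,\dots,M_p)$ and $\mathcal{N}$ into $\mathrm{blockdiag}(N_1,\dots,N_p)$, with $M_j=\mathbf{1}_m\mathbf{1}_m^\top+D_j$, $N_j=I_m+D_j$, and $D_j=\mathrm{diag}(\Sigma_{1j},\dots,\Sigma_{mj})$. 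Thus the spectrum of~\eqref{eq:eigv} is the union over $j=1,\dots,p$ of the generalized eigenvalues of the pencils $(M_j,N_j)$.

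Fix $j$. The matrices $M_j$ and $N_j$ are symmetric and positive definite (because $D_j$ is positive definite and $\mathbf{1}_m\mathbf{1}_m^\top$ is positive semidefinite), so $(M_j,N_j)$ has $m$ positive real eigenvalues. Writing $M_j-\lambda N_j=\mathbf{1}_m\mathbf{1}_m^\top-\Lambda_j(\lambda)$ with $\Lambda_j(\lambda)=\mathrm{diag}\big(\lambda(1+\Sigma_{ij})-\Sigma_{ij}\big)_{i=1\dots m}$, the matrix determinant lemma gives, whenever no diagonal entry of $\Lambda_j(\lambda)$ vanishes, $\det(M_j-\lambda N_j)=\det(-\Lambda_j(\lambda))\,(1-f_j(\lambda))$, where $f_j(\lambda)=\sum_{i=1}^m\big(\lambda(1+\Sigma_{ij})-\Sigma_{ij}\big)^{-1}$ is exactly the function in the statement. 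The exceptional case occurs only at $\lambda=\mu_{ij}:=\Sigma_{ij}/(1+\Sigma_{ij})$, and since $\Sigma_{ij}>0$ all these values lie in $(0,1)$. On the interval $(\max_i\mu_{ij},\infty)$ every summand of $f_j$ is positive and strictly decreasing, with $f_j\to+\infty$ at the left endpoint and $f_j\to0$ at $+\infty$, so $f_j(\lambda)=1$ has a unique root there; this root is $\lambda_j$, and since $f_j(1)=m\ge 3>1$ and $1>\max_i\mu_{ij}$, monotonicity forces $\lambda_j>1$. Any other root of $f_j=1$ must lie below $\max_i\mu_{ij}<1$ (between consecutive poles $f_j$ decreases from $+\infty$ to $-\infty$, and below the smallest pole $f_j<0$), so $\lambda_j$ is indeed the largest solution of the displayed equation.

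It remains to count eigenvalues. Any eigenvalue of $(M_j,N_j)$ that exceeds $1$ lies above every pole $\mu_{ij}$, hence solves $f_j=1$, hence equals $\lambda_j$; moreover $\lambda=1$ is never an eigenvalue since $\det(M_j-N_j)=\det(\mathbf{1}_m\mathbf{1}_m^\top-I_m)=(m-1)(-1)^{m-1}\neq0$. So $(M_j,N_j)$ has exactly one eigenvalue in $(1,\infty)$, namely $\lambda_j$, and its other $m-1$ eigenvalues lie in $(0,1)$. Taking the union over $j$, problem~\eqref{eq:eigv} has exactly $p$ eigenvalues strictly greater than $1$, namely $\lambda_1,\dots,\lambda_p$, and $(m-1)p$ eigenvalues in $(0,1)$; hence $\lambda_1,\dots,\lambda_p$ are its $p$ largest eigenvalues.

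The conceptual core is the first paragraph: recognizing that conjugating by $\mathbf{A}$ and then by a coordinate permutation simultaneously block-diagonalizes the pencil into $p$ copies of the elementary rank-one-plus-diagonal structure $(\mathbf{1}_m\mathbf{1}_m^\top+D_j,\ I_m+D_j)$, after which the problem is just calculus on the scalar function $f_j$. The step needing the most care is the eigenvalue count: one must check there are no ``hidden'' eigenvalues at the pole locations $\lambda=\mu_{ij}$ and that every eigenvalue other than the $\lambda_j$ stays strictly below $1$, so that exactly $p$ eigenvalues exceed $1$.
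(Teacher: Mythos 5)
Your proof is correct and follows essentially the same route as the paper's (which is contained in the proof of Lemma~\ref{lemma:nonzerocoord}): conjugate the pencil by the block-diagonal mixing matrices, decouple into $p$ rank-one-plus-diagonal pencils $(\mathbf{1}_m\mathbf{1}_m^\top + D_j,\ I_m + D_j)$, apply the matrix determinant lemma to obtain the secular equation $\sum_i (\lambda(1+\Sigma_{ij})-\Sigma_{ij})^{-1}=1$, and show by monotonicity that exactly one root per component exceeds $1$ while the rest lie below. Your write-up is somewhat more explicit than the paper's about the coordinate permutation that effects the decoupling and about ruling out eigenvalues at the poles and at $\lambda=1$, but the underlying argument is the same.
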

It is easy to see that we then have $\lambda_1, \dots, \lambda_p$ greater than $1$, while the remaining eigenvalues are lower than $1$.
From this proposition, two things appear clearly. First, Assumption~\ref{ass:uniqueeig} implies Assumption~\ref{ass:diversity}.
Indeed, if the $\lambda_j$'s are distinct, then the sequences $(\Sigma_{ij})_i$ must also be different from the previous proposition.
This is expected as from Theorem~\ref{th:eig}, Assumption~\ref{ass:uniqueeig} implies identifiability, which in turn implies Assumption~\ref{ass:diversity}.

Prop.~\ref{prop:eigvals_from_noise} also allows us to derive cases where Assumption~\ref{ass:diversity} holds but not Assumption~\ref{ass:uniqueeig}. The following Proposition gives a simple case where the model is identifiable but it cannot be solved using Multiset CCA:
\begin{proposition}
\label{counter}
Assume that for two integers $j, j'$, the sequence $(\Sigma_{ij})_i$ is a permutation of $(\Sigma_{ij'})_i$, i.e. that there exists a permutation of $\{1,\dots, p\}$, $\pi$, such that for all $i$, $\Sigma_{ij} = \Sigma_{\pi(i)j'}$.  Then, $\lambda_j = \lambda_{j'}$.
\end{proposition}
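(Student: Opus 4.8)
The plan is to reduce everything to Proposition~\ref{prop:eigvals_from_noise}, which characterizes $\lambda_j$ as the largest root of a scalar equation that depends on the noise variances only through the multiset $\{\Sigma_{ij} : i = 1,\dots,m\}$. Concretely, for $j \leq p$ define
\begin{equation}
  f_j(\lambda) = \sum_{i=1}^m \frac{1}{\lambda(1 + \Sigma_{ij}) - \Sigma_{ij}}.
\end{equation}
By Proposition~\ref{prop:eigvals_from_noise}, $\lambda_j$ is the largest solution of $f_j(\lambda) = 1$, and likewise $\lambda_{j'}$ is the largest solution of $f_{j'}(\lambda) = 1$. So it suffices to show that $f_j$ and $f_{j'}$ are the same function.

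First I would observe that $f_j(\lambda)$ is a sum over the index set $\{1,\dots,m\}$ of a term that depends on $i$ only through $\Sigma_{ij}$, hence $f_j$ is invariant under any reindexing of that sum. Using the hypothesis, there is a permutation $\pi$ of $\{1,\dots,m\}$ with $\Sigma_{ij} = \Sigma_{\pi(i)j'}$ for all $i$. Substituting $i' = \pi(i)$, which is a bijection of $\{1,\dots,m\}$ onto itself, gives
\begin{equation}
  f_j(\lambda) = \sum_{i=1}^m \frac{1}{\lambda(1 + \Sigma_{\pi(i)j'}) - \Sigma_{\pi(i)j'}} = \sum_{i'=1}^m \frac{1}{\lambda(1 + \Sigma_{i'j'}) - \Sigma_{i'j'}} = f_{j'}(\lambda)
\end{equation}
for every $\lambda$ in the common domain (the poles of $f_j$ are $\Sigma_{ij}/(1+\Sigma_{ij})$, which form the same set as the poles of $f_{j'}$, so the domains agree as well).

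Finally, since $f_j \equiv f_{j'}$, the two equations $f_j(\lambda) = 1$ and $f_{j'}(\lambda) = 1$ have exactly the same solution set, and in particular the same largest solution; therefore $\lambda_j = \lambda_{j'}$. I do not anticipate any real obstacle here: the only point requiring a line of care is that ``largest solution'' is well-defined and preserved, which follows immediately once the functions are shown to be identical, so the entire argument is the bijective reindexing of the sum in Proposition~\ref{prop:eigvals_from_noise}.
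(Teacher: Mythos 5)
Your proof is correct and matches the paper's intent: the paper states Prop.~\ref{counter} as an immediate corollary of Prop.~\ref{prop:eigvals_from_noise}, and your bijective reindexing of the sum $\sum_{i=1}^m \bigl(\lambda(1+\Sigma_{ij})-\Sigma_{ij}\bigr)^{-1}$ under $\pi$ (a permutation of $\{1,\dots,m\}$, despite the statement's typo saying $\{1,\dots,p\}$) is exactly the intended argument.
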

In this setting, Assumption~\ref{ass:diversity} holds so ShICA is identifiable, while Assumption~\ref{ass:uniqueeig} does not hold, so Multiset CCA cannot recover the unmixing matrices.

\subsection{Sampling noise and improved estimation with joint diagonalization} \label{sec:samplingnoise}

The consistency theory for Multiset CCA developed above is conducted under the assumption that the
covariances $C_{ij}$ are the true covariances of the model, and not
approximations obtained from observed samples. In practice, however, a serious limitation of Multiset CCA is that even a slight error of estimation on the covariances, due to ``sampling noise'', can yield a large error in the estimation of the unmixing matrices, as will be shown next.

We begin with an empirical illustration. We take $m=3$, $p=2$, and $\Sigma_i$ such that $\lambda_1 = 2 + \varepsilon$ and $\lambda_2 =2$ for $\varepsilon > 0$.
In this way, we can control the \emph{eigen-gap} of the problem, $\varepsilon$.
We take $W_i$ the outputs of Multiset CCA applied to the true covariances $C_{ij}$.
Then, we generate a perturbation $\Delta = \delta \cdot S$, where $S$ is a random positive symmetric $pm \times pm$ matrix of norm $1$, and $\delta >0$ controls the scale of the perturbation. 
We take $\Delta_{ij}$ the $p\times p$ block of $\Delta$ in position $(i, j)$, and $\tilde{W}_i$ the output of Multiset CCA applied to the covariances $C_{ij} + \Delta_{ij}$.
We finally compute the sum of the Amari distance between the $W_i$ and $\tilde{W}_i$: the Amari distance measures how close the two matrices are, up to scale and permutation~\cite{amari1996new}.
\begin{wrapfigure}{r}{.4\textwidth}
  \centering
  \includegraphics[width=.99\linewidth]{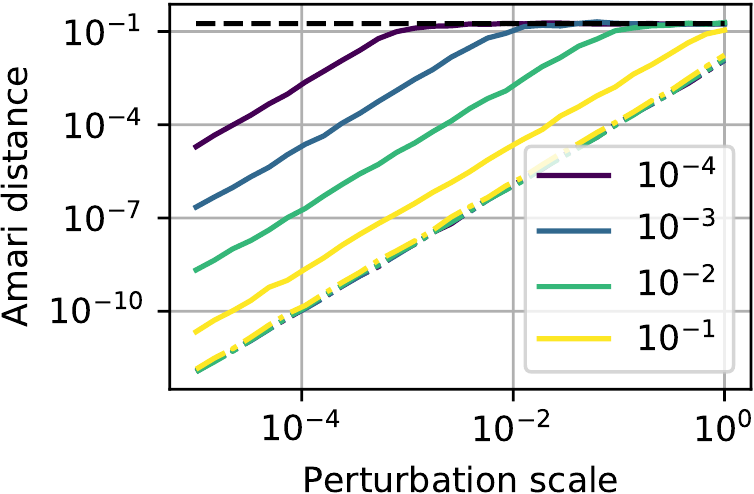}
  \caption{Amari distance between true mixing matrices and estimates of Multiset
    CCA when covariances are perturbed. Different solid curves correspond to different
    eigen-gaps. The black dotted line shows the chance level. When the gap is small, a small perturbation can lead to complete mixing. Joint-diagonalization (colored dotted lines) fixes the problem.}
  \label{fig:cca_gap}
  \end{wrapfigure}
Fig~\ref{fig:cca_gap} displays the median Amari distance over 100 random repetitions, as the perturbation scale $\delta$ increases. The different curves correspond to different values of the eigen-gap $\varepsilon$. We see clearly that the robustness of Multiset CCA critically depends on the eigen-gap, and when it is small, even a small perturbation of the input (due, for instance, to sampling noise) leads to large estimation errors.

This problem is very general and well studied~\cite{stewart1973error}: the mapping from matrices to (generalized) eigenvectors is highly non-smooth.
However, the gist of our method is that the \emph{span} of the leading $p$ eigenvectors is smooth, as long as there is a large enough gap between  $\lambda_p$ and $\lambda_{p+1}$.
For our specific problem we have the following bounds, derived from Prop.~\ref{prop:eigvals_from_noise}.
\begin{proposition}
  We let $\sigma_{\max} = \max_{ij}\Sigma_{ij}$ and $\sigma_{\min} = \min_{ij}\Sigma_{ij}$. Then, $\lambda_p \geq 1 + \frac{m-1}{1+\sigma_{\max}}$, while $\lambda_{p+1}\leq 1 - \frac{1}{1 + \sigma_{min}}$.
\end{proposition}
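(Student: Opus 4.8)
The plan is to first reduce the $pm\times pm$ generalized eigenvalue problem~\eqref{eq:eigv} to $p$ decoupled $m\times m$ problems, thereby extending Proposition~\ref{prop:eigvals_from_noise} to a description of the \emph{whole} spectrum, and then to control the two extreme eigenvalues $\lambda_p,\lambda_{p+1}$ by evaluating the associated scalar equation at the candidate bounds and invoking monotonicity. Writing $\mathcal{A}=\mathrm{blockdiag}(A_1,\dots,A_m)$ and using $C_{ij}=A_iA_j^\top$ for $i\neq j$ and $C_{ii}=A_i(I_p+\Sigma_i)A_i^\top$, one checks that $\mathcal{C}=\mathcal{A}(\mathbf{1}\mathbf{1}^\top\otimes I_p+\Sigma)\mathcal{A}^\top$ and $\mathcal{D}=\mathcal{A}(I_{pm}+\Sigma)\mathcal{A}^\top$, where $\Sigma=\mathrm{blockdiag}(\Sigma_1,\dots,\Sigma_m)$ and $\mathbf{1}\in\bbR^m$ is the all-ones vector. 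Substituting $\vb=\mathcal{A}^\top\ub$ turns~\eqref{eq:eigv} into $(\mathbf{1}\mathbf{1}^\top\otimes I_p+\Sigma)\vb=\lambda(I_{pm}+\Sigma)\vb$. Since both $\Sigma$ and $\mathbf{1}\mathbf{1}^\top\otimes I_p$ only couple coordinates sharing the same component index $j$, reordering coordinates by component block-diagonalizes the problem into $p$ independent $m\times m$ problems $(\mathbf{1}\mathbf{1}^\top+D_j)\mathbf{w}=\lambda(I_m+D_j)\mathbf{w}$ with $D_j=\mathrm{diag}(\Sigma_{1j},\dots,\Sigma_{mj})$.

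For a fixed $j$, setting $c=\mathbf{1}^\top\mathbf{w}$ gives $\mathbf{w}=c(\lambda I_m+(\lambda-1)D_j)^{-1}\mathbf{1}$, and $c\neq 0$ forces $h_j(\lambda):=\sum_{i=1}^m \frac{1}{\lambda(1+\Sigma_{ij})-\Sigma_{ij}}=1$, the equation of Proposition~\ref{prop:eigvals_from_noise}. Each summand has a pole at $p_{ij}=\frac{\Sigma_{ij}}{1+\Sigma_{ij}}\in(0,1)$, and $h_j$ is strictly decreasing on every interval between consecutive poles. Hence $h_j=1$ has exactly one root above the largest pole --- this root exceeds $1$ (indeed $h_j(1)=m>1$) and is the ``big'' eigenvalue $\lambda_j$ of Proposition~\ref{prop:eigvals_from_noise} --- together with one root strictly inside each of the $m-1$ gaps between poles, all of which lie below the largest pole and hence below $1$. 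Collecting over $j$, the $p$ largest eigenvalues are $\{\lambda_j\}_{j\le p}$, so $\lambda_p=\min_j\lambda_j$, while $\lambda_{p+1}=\max_j\nu_j$, where $\nu_j$ is the largest sub-unit root of $h_j=1$.

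The lower bound on $\lambda_p$ is then immediate. At $L=1+\frac{m-1}{1+\sigma_{\max}}$ each denominator equals $\lambda(1+\Sigma_{ij})-\Sigma_{ij}=1+(m-1)\frac{1+\Sigma_{ij}}{1+\sigma_{\max}}\le 1+(m-1)=m$ because $\Sigma_{ij}\le\sigma_{\max}$, so $h_j(L)\ge \sum_i \frac1m=1$; since $h_j$ is decreasing past the top pole and $h_j(\lambda_j)=1$, this yields $\lambda_j\ge L$ for every $j$, hence $\lambda_p\ge 1+\frac{m-1}{1+\sigma_{\max}}$. For $\lambda_{p+1}$ the natural argument is a pole-location bound: every sub-unit root $\nu_j$ sits strictly below the largest pole of component $j$, and all poles satisfy $p_{ij}=\frac{\Sigma_{ij}}{1+\Sigma_{ij}}\le \frac{\sigma_{\max}}{1+\sigma_{\max}}=1-\frac{1}{1+\sigma_{\max}}$, giving $\lambda_{p+1}<1-\frac{1}{1+\sigma_{\max}}$.

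The main obstacle is that this reasoning produces $\sigma_{\max}$, not the $\sigma_{\min}$ written in the second inequality, and the two are genuinely different. In fact the $\sigma_{\min}$ version is too strong to hold: already for $m=3$, $p=1$ and $(\Sigma_{i1})=(1,2,3)$ the poles are $\tfrac12,\tfrac23,\tfrac34$, so the largest sub-unit eigenvalue satisfies $\lambda_{p+1}\in(\tfrac23,\tfrac34)$, whereas $1-\frac{1}{1+\sigma_{\min}}=\tfrac12$, violating the claimed bound. I therefore read the $\sigma_{\min}$ in the statement as a typo for $\sigma_{\max}$ and would prove the corrected bound $\lambda_{p+1}\le 1-\frac{1}{1+\sigma_{\max}}$. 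Combined with $\lambda_p\ge 1+\frac{m-1}{1+\sigma_{\max}}$, this still delivers a strictly positive eigen-gap $\lambda_p-\lambda_{p+1}\ge \frac{m}{1+\sigma_{\max}}$, which is exactly what the subsequent smoothness and joint-diagonalization argument requires.
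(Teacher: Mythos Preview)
Your argument is correct and follows exactly the machinery the paper sets up: the block-diagonal reduction to $p$ independent $m\times m$ problems is the content of the proof of Theorem~\ref{th:eig} and Lemma~\ref{lemma:nonzerocoord}, and the scalar secular equation $h_j(\lambda)=1$ is precisely the equation appearing in Proposition~\ref{prop:eigvals_from_noise} (equivalently, $\chi(\lambda)=0$ in the proof of Lemma~\ref{lemma:nonzerocoord}). The paper does not spell out a separate proof of this proposition, but your monotonicity-plus-evaluation argument for $\lambda_p\ge 1+\tfrac{m-1}{1+\sigma_{\max}}$ is the intended one.

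You are also right that the second inequality, as printed with $\sigma_{\min}$, is false; your $m=3$, $(\Sigma_{i1})=(1,2,3)$ counterexample is valid, since the second-largest root of $h_1(\lambda)=1$ lies strictly between the poles $\tfrac23$ and $\tfrac34$, above $1-\tfrac{1}{1+\sigma_{\min}}=\tfrac12$. The correct uniform bound is the one you give, $\lambda_{p+1}\le 1-\tfrac{1}{1+\sigma_{\max}}$, obtained from the fact (established in Lemma~\ref{lemma:nonzerocoord}) that every non-leading eigenvalue of the $j$-th subproblem is bounded above by its largest pole $\max_i\tfrac{\Sigma_{ij}}{1+\Sigma_{ij}}\le\tfrac{\sigma_{\max}}{1+\sigma_{\max}}$. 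As you note, this still yields $\lambda_p-\lambda_{p+1}\ge \tfrac{m}{1+\sigma_{\max}}$, which is the only consequence the paper actually uses.
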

As a consequence, we have $\lambda_{p} -\lambda_{p+1} \geq \frac{m-1}{1+\sigma_{\max}} + \frac{1}{1+ \sigma_{\min}}\geq \frac m{1+ \sigma_{\max}}$: the gap between these eigenvalues increases with $m$, and decreases with the noise power.

\begin{wrapfigure}{l}{.45\textwidth}
\begin{minipage}{.45\textwidth}
  \begin{algorithm}[H]
  \caption{ShICA-J}
  \label{algo:shicaj}
    \begin{algorithmic}
       \STATE {\bfseries Input :} Covariances $\tilde{C}_{ij} = \bbE[\xb_i\xb_j^{\top}]$
       \STATE $(\tilde{W}_i)_i \leftarrow \mathrm{MultisetCCA}((\tilde{C}_{ij})_{ij})$
       \STATE  $Q \leftarrow \mathrm{JointDiag}((\tilde{W}_i\tilde{C}_{ii}\tilde{W}_i^{\top})_i)$
       \STATE $\Gamma_{ij} \leftarrow Q\tilde{W}_i\tilde{C}_{ij}W_j^\top Q^\top$
       \STATE $(\Phi_i)_i \leftarrow \mathrm{Scaling}((\Gamma_{ij})_{ij})$
         \STATE \textbf{Return : } Unmixing matrices $(\Phi_iQ\tilde{W}_i)_i$.
        \end{algorithmic}
  \end{algorithm}
\end{minipage}
\end{wrapfigure}
In this setting, when the magnitude of the perturbation $\Delta$ is smaller than $\lambda_{p}-\lambda_{p+1}$, ~\cite{stewart1973error} indicates that $\mathrm{Span}([W_1, \dots, W_m]^{\top})\simeq \mathrm{Span}([\tilde{W}_1,\dots, \tilde{W}_m]^\top)$, where $[W_1, \dots, W_m]^{\top}\in\bbR^{pm\times p}$ is the vertical concatenation of the $W_i$'s.
In turn, this shows that there exists a matrix $Q\in\bbR^{p\times p}$ such that
\begin{equation}
    \label{eq:justif_jd}
    W_i \simeq Q\tilde{W}_i\enspace \text{for all} \enspace i.
\end{equation}
We propose to use joint-diagonalization to recover the matrix $Q$. Given the $\tilde{W}_i$'s, we consider the set of symmetric matrices $\tilde{K}_i = \tilde{W}_i\tilde{C}_{ii}\tilde{W}_i^{\top}$, where $\tilde{C}_{ii}$ is the contaminated covariance of $\xb_i$. Following Eq.~\eqref{eq:justif_jd}, we have $Q\tilde{K}_iQ^{\top} = W_i \tilde{C}_{ii}W_i^{\top}$, and using Theorem~\ref{th:eig}, we have $Q\tilde{K}_iQ^{\top} = P\Gamma_i A_i^{-1}\tilde{C}_{ii}A_i^{-\top}\Gamma_iP^{\top}$. Since $\tilde{C}_{ii}$ is close to $C_{ii} = A_i (I_p + \Sigma_i)A_i^\top$, the matrix $P\Gamma_i A_i^{-1}\tilde{C}_{ii}A_i^{-\top}\Gamma_iP^{\top}$ is almost diagonal.
In other words, the matrix $Q$ is an approximate diagonalizer of the $\tilde{K}_i$'s, and we approximate $Q$ by joint-diagonalization of the $\tilde{K}_i$'s. In Fig~\ref{fig:cca_gap}, we see that this procedure mitigates the problems of multiset-CCA, and gets uniformly better performance regardless of the eigen-gap.
In practice, we use a fast joint-diagonalization algorithm~\cite{ablin2018beyond} to minimize a joint-diagonalization criterion for positive symmetric matrices~\cite{pham2001joint}. The estimated unmixing matrices $U_i = Q\tilde{W}_i$ correspond to the true unmixing matrices only up to some scaling which may be different from subject to subject: the information that the components are of unit variance is lost. As a consequence, naive averaging of the recovered components may lead to inconsistant estimation. We now describe a procedure to recover the correct scale of the individual components across subjects.

\textbf{Scale estimation}
We form the matrices $\Gamma_{ij} = U_i\tilde{C}_{ij}U_j^\top$. In order to estimate the scalings, we solve $
\min_{(\Phi_i)} \sum_{i\neq j} \| \Phi_i \diag(\Gamma_{ij}) \Phi_j - I_p \|_F^2$
where the $\Phi_i$ are diagonal matrices.
This function is readily minimized with respect to one of the $\Phi_i$ by the formula
$\Phi_i = \frac{\sum_{j \neq i} \Phi_j \diag(Y_{ij})}{\sum_{j \neq i} \Phi_j^2 \diag(Y_{ij})^2}$ (derivations in Appendix~\ref{app:fixedpoint}). We then iterate the previous formula over $i$ until convergence.
The final estimates of the unmixing matrices are given by
$(\Phi_i U_i)_{i=1}^m$. The full procedure, called ShICA-J, is summarized in Algorithm~\ref{algo:shicaj}.

\subsection{Estimation of noise covariances}

In practice, it is important to estimate noise covariances $\Sigma_i$ in order to take advantage of the fact that some views are noisier than others. As it is well known in classical factor analysis, modelling noise variances allows the model to virtually discard variables, or subjects, that are particularly noisy. 

Using the ShICA model with Gaussian components, we derive an estimate for the noise covariances directly from maximum likelihood. We use an expectation-maximization (EM) algorithm, which is especially fast because noise updates are in closed-form. Following derivations given in appendix~\ref{conditional_density}, the sufficient statistics in the E-step are given by 
\begin{align}
\label{mmse1}
\EE[\sbb|\xb]= \left(\sum_{i=1}^m \Sigma_i^{-1}  + I \right)^{-1}  \sum_{i=1}^m \left(\Sigma_i^{-1} \yb_i \right)
     && \VV[\sbb|\xb]= (\sum_{i=1}^m \Sigma_i^{-1}  + I)^{-1}
\end{align}
Incorporating the M-step we get the following updates that only depend on the covariance matrices:
$
\Sigma_i \leftarrow \diag(\hat{C_{ii}} - 2 \VV[\sbb | \xb]  \sum_{j=1}^m \Sigma_j^{-1} \hat{C}_{ji}  + \VV[\sbb | \xb]  \sum_{j = 1}^m \sum_{l = 1}^m \left(\Sigma_j^{-1} \hat{C}_{jl} \Sigma_l^{-1} \right) \VV[\sbb | \xb] + \VV[\sbb | \xb])
$

\section{ShICA-ML: Maximum likelihood for non-Gaussian components}
ShICA-J only uses second order statistics. However, the ShICA model~\eqref{eq:model} allows for non-Gaussian components. We now propose an algorithm for fitting the ShICA model that combines covariance information with non-Gaussianity in the estimation to optimally separate both Gaussian and non-Gaussian components.
We estimate the parameters by maximum likelihood. Since most non-Gaussian
components in real data are super-Gaussian~\cite{delorme2012independent, calhoun2006unmixing}, we assume that the non-Gaussian components $\sbb$ have the super-Gaussian density \\ $p(s_j) = \frac12\left(\mathcal{N}( s_j; 0, \frac12) + \mathcal{N}( s_j; 0, \frac{3}{2})\right) \enspace.$

We propose to maximize the log-likelihood using a generalized
EM~\cite{neal1998view, dempster1977maximum}. Derivations are available in Appendix~\ref{app:emestep}. Like in the previous section, the E-step is in closed-form yielding the following sufficient statistics:
\begin{align}
\label{mmse2}
  \EE[s_j | \xb] = \frac{\sum_{\alpha \in \{\frac12, \frac32\}} \theta_{\alpha} \frac{\alpha \bar{y}_{j}}{\alpha + \bar{\Sigma_{j}}}}{\sum_{\alpha \in \{0.5, 1.5\}} \theta_{\alpha}} \enspace \text{ and } \enspace \VV[s_j | \xb] = \frac{\sum_{\alpha \in \{\frac12, \frac32\}} \theta_{\alpha} \frac{\bar{\Sigma_{j}}\alpha}{\alpha + \bar{\Sigma_{j}}}}{\sum_{\alpha \in \{0.5, 1.5\}} \theta_{\alpha}}  
\end{align}
    where $\theta_{\alpha} = \Ncal(\bar{y}_{j}; 0 , \bar{\Sigma}_{j} + \alpha)$, 
    $\bar{y}_j = \frac{\sum_i \Sigma_{ij}^{-1} y_{ij}}{ \sum_i
      \Sigma_{ij}^{-1}}$ and $\bar{\Sigma_{j}} = (\sum_i
    \Sigma_{ij}^{-1})^{-1}$ with $\yb_i = W_i \xb_i$.
Noise updates are in closed-form and given by:
$\Sigma_i \leftarrow  \diag((\yb_i - \EE[\sbb | \xb]) (\yb_i - \EE[\sbb | \xb])^{\top}+ \VV[\sbb | \xb])$.
However, no closed-form is available for the updates of unmixing matrices. We therefore perform quasi-Newton updates given by
$W_i \leftarrow (I - \rho (\widehat{\mathcal{H}^{W_i}})^{-1} \mathcal{G}^{W_i}) W_i$ where $\rho \in \mathbb{R}$ is chosen by backtracking line-search,
$\widehat{\mathcal{H}^{W_i}_{a, b, c, d}} =  \delta_{ad} \delta_{bc} +
\delta_{ac} \delta_{bd}\frac{(y_{ib})^2}{\Sigma_{ia}}$
is an approximation of the Hessian
of the negative complete likelihood and $\mathcal{G}^{W_i} = -I + (\Sigma_i)^{-1}(\yb_i - \mathbb{E}[\sbb|\xb])(\yb_i)^{\top}$ is the gradient.

We alternate between computing the statistics $\mathbb{E}[\sbb|\xb]$, 
$\mathbb{V}[\sbb|\xb]$ (E-step) and updates of parameters $\Sigma_i$ and $W_i$ for $i=1 \dots m$ (M-step). Let us highlight that our EM algorithm and in particular the E-step resembles the one used in~\cite{moulines1997maximum}. However because they assume noise on the sensors and not on the components, their formula for $\EE[\sbb| \xb]$ involves a sum with $2^p$ terms whereas we have only $2$ terms. The resulting method is called ShICA-ML.

\paragraph{Minimum mean square error estimates in ShICA}
In ShICA-J as well as in ShICA-ML, we have a closed-form for the expected components given the data $\EE[\sbb | \xb]$, shown in equation~\eqref{mmse1} and~\eqref{mmse2} respectively. This provides minimum mean square error estimates of the shared components, and is an important benefit of explicitly modelling shared components in a probabilistic framework.

\section{Related Work}
ShICA combines theory and methods coming from different branches of ``component analysis''. It can be viewed as a GroupICA method, as an extension of Multiset CCA, as an Independent Vector Analysis method or, crucially, as an extension of the shared response model. In the setting studied here, ShICA improves upon all existing methods.

\paragraph{GroupICA}
GroupICA methods extract independent components from multiple datasets. In its original form\cite{calhoun2001method}, views are concatenated and then a PCA is applied yielding reduced data on which ICA is applied. One can also reduce the data using Multiset CCA instead of PCA, giving a method called \emph{CanICA}~\cite{varoquaux2009canica}. Other works~\cite{Esposito05NI, Hyva11NI} apply ICA separately on the datasets and attempt to match the decompositions afterwards.
Although these works provide very fast methods, they do not rely on a well defined model like ShICA. 
Other GroupICA methods impose some structure on the mixing matrices such as the tensorial method of~\cite{beckmann2005tensorial} or the group tensor model in~\cite{guo2008unified} (which assumes identical mixing matrices up to a scaling) or \cite{svensen2002ica} (which assumes identical mixing matrices but different components). In ShICA the mixing matrices are only constrained to be invertible.
Lastly, maximum-likelihood based methods exist such as
\emph{MultiViewICA}~\cite{richard2020modeling} (MVICA) or the full model
of~\cite{guo2008unified}.
These methods are weaker than ShICA as they use the same noise covariance across views and lack a principled method for shared response inference.

\paragraph{Multiset CCA}
In its basic formulation, CCA identifies a shared space between two datasets.
The extension to more than two datasets is ambiguous, and many
different generalized CCA methods have been proposed. \cite{kettenring1971canonical} introduces 6 objective functions that reduce to CCA when $m=2$ and \cite{nielsen2002multiset} considered 4 different possible constrains leading to 24 different formulations of Multiset CCA. The formulation used in ShICA-J is refered to in~\cite{nielsen2002multiset} as SUMCORR with constraint 4 which is one of the fastest as it reduces to solving a generalized eigenvalue problem. The fact that CCA solves a well defined probabilistic model has first been studied in~\cite{bach2005probabilistic} where it is shown that CCA is identical to multiple battery factor analysis~\cite{browne1980factor} (restricted to 2 views). This latter formulation differs from our model in that the noise is added on the sensors and not on the components which makes the model unidentifiable. Identifiable variants and
generalizations can be obtained by imposing sparsity on the mixing matrices such as in~\cite{archambeau2008sparse, klami2014group, witten2009extensions} or non-negativity~\cite{DELEUS2011143}.
The work in~\cite{li2009joint} exhibits a set of sufficient (but not necessary) conditions under which a well defined model can be learnt by the formulation of Multiset CCA used in ShICA-J. The set of conditions we exhibit in this work are necessary and sufficient. We further emphasize that basic Multiset CCA provides a poor estimator as explained in Section~\ref{sec:samplingnoise}.

\paragraph{Independent vector analysis}
Independent vector analysis~\cite{lee2008independent} (IVA) models the data as a linear mixture of independent components $\xb_i = A_i \sbb_i$ where each component $s_{ij}$ of a given view $i$ can depend on the corresponding component in other views ($(s_{ij})_{i=1}^m$ are not independent).
Practical implementations of this very general idea assume a distribution for
$p((s_{ij})_{i=1}^m)$. In IVA-L~\cite{lee2008independent}, $p((s_{ij})_{i=1}^m)
\propto \exp(-\sqrt{\sum_i (s_{ij})^2})$ (so the variance of each component in
each view is assumed to be the same), in IVA-G~\cite{anderson2011joint} or
in~\cite{via2011maximum}, $p((s_{ij})_{i=1}^m) \sim \mathcal{N}(0, R_{ss})$
and~\cite{engberg2016independent} proposed a normal inverse-Gamma density.
Let us also mention IVA-L-SOS~\cite{bhinge2019extraction}, IVA-GGD~\cite{anderson2014independent} and
IVA with Kotz distribution~\cite{anderson2013independent} that assume a
non-Gaussian density general enough so that they can use both second and higher
order statistics to extract view-specific components.
The model of ShICA can be seen as an instance of IVA 
which specifically enables extraction of shared components from the subject specific components, unlike previous versions of IVA. In fact, ShICA comes with minimum mean square error estimates for the shared components
that is often the quantity of interest.
The IVA theory provides global identifiability conditions in the Gaussian case (IVA-G)~\cite{via2011joint} and local identifiability conditions in the general case~\cite{anderson2014independent} from which local identifiability conditions of ShICA could be derived. However, in this work, we provide global identifiability conditions for ShICA.
Lastly, IVA can be performed using joint diagonalization of cross covariances~\cite{li2011joint, congedo2012orthogonal} although multiple matrices have to be learnt and cross-covariances are not necessarily symmetric positive definite which makes the algorithm slower and less principled.

\paragraph{Shared response model}
ShICA extracts shared components from multiple datasets, which is also the goal
of the shared response model (SRM)~\cite{chen2015reduced}. The robust
SRM~\cite{turek2018capturing} also allows to capture subject specific noise.
However these models impose orthogonality constraints on the mixing matrices
while ShICA does not.
Deep variants of SRM exist such
as~\cite{chen2016convolutional} but while they release the orthogonality
constrain, they are not very easy to train or interpret and have many
hyper-parameters to tune. ShICA leverages ICA theory to provide a much more powerful model of shared responses.

\paragraph{Limitations}
The main limitation of this work is that the model cannot reduce the dimension inside each view : there are as many estimated sources as sensors. This might be problematic when the number of sensors is very high. In line with other methods, view-specific  dimension reduction has to be done by some external method, typically view-specific PCA. Using specialized methods for the estimation of covariances should also be of interest for ShICA-J, where it only relies on sample covariances. Finally, ShICA-ML uses a simple model of a super-Gaussian distribution, while modelling the non-gaussianities in more detail in ShICA-ML should improve the performance.

\section{Experiments}
Experiments used Nilearn~\cite{abraham2014machine} and MNE~\cite{gramfort2013meg} for fMRI and MEG data
processing respectively, as well as the scientific Python ecosystem:
Matplotlib~\cite{hunter2007matplotlib}, Scikit-learn~\cite{pedregosa2011scikit},
Numpy~\cite{harris2020array} and Scipy~\cite{2020SciPy-NMeth}. We use the Picard algorithm for non-Gaussian ICA~\cite{ablin2018faster}, and mvlearn for multi-view ICA~\cite{perry2020mvlearn}. The above libraries use open-source licenses. fMRI experiments used the following datasets: sherlock~\cite{chen2017shared}, forrest~\cite{hanke2014high} , raiders~\cite{ibc} and gallant~\cite{ibc}. The data we use do not contain offensive content or identifiable information and consent was obtained before data collection. Computations were run on a large server using up to 100 GB of RAM and 20 CPUs in parallel.
%
\paragraph{Separation performance}
\label{sec:rotation}
In the following synthetic experiments, data are generated according to model~\eqref{eq:model} with $p=4$ components and $m=5$ views and mixing matrices are generated by sampling coefficients from a standardized Gaussian.
Gaussian components are generated from a standardized Gaussian and their noise
has standard deviation $\Sigma_i^{\frac12}$ (obtained by sampling from a uniform
density between $0$ and $1$) while non-Gaussian components are generated from a
Laplace distribution and their noise standard deviations are equal. We study 3
cases where either all components are Gaussian, all components are non-Gaussian
or half of the components are Gaussian and half are non-Gaussian.
We vary the
number of samples $n$ between $10^2$ and $10^5$ and display in
Fig~\ref{exp:rotation} the mean Amari distance across subjects between the true unmixing
matrices and estimates of algorithms as a function of $n$.
The experiment is repeated $100$ times using different seeds. We report the median result and error bars represent the first and last deciles.
\begin{figure}
  \centering
  \includegraphics[width=\textwidth]{./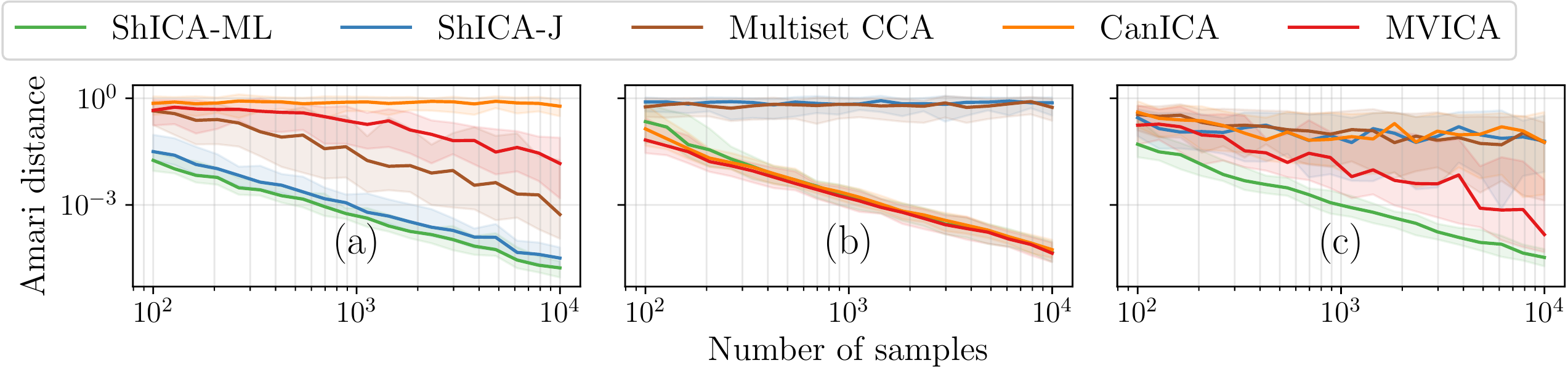}
  \caption{\textbf{Separation performance}: Algorithms are fit on data following model~\ref{eq:model} \textbf{(a)} Gaussian components with noise diversity \textbf{(b)} Non-Gaussian components without noise diversity \textbf{(c)} Half of the components are Gaussian with noise diversity, the other half is non-Gaussian without noise diversity. }
  \label{exp:rotation}
\end{figure}

When all components are Gaussian (Fig.~\ref{exp:rotation}~(a)), CanICA cannot
separate the components at all. In contrast ShICA-J, ShICA-ML, Multiset CCA and
MVICA are able to separate them, but Multiset CCA needs many more samples than
ShICA-J or ShICA-ML to reach a low amari distance, which shows that correcting for the rotation due to sampling noise improves the results. Looking at error bars, we also see that the performance of Multiset CCA varies quite a lot with the random seeds: this shows that depending on the sampling noise, the rotation can be very different from identity.
MVICA needs even more sample than Multiset CCA to reach a low amari distance but
still outperforms CanICA.

When none of the components are Gaussian (Fig.~\ref{exp:rotation}~(b)), only
CanICA, ShICA-ML and MVICA are able to separate the components, as other methods do not make use of non-Gaussianity.
Finally, in the hybrid case (Fig.~\ref{exp:rotation}~(c)), ShICA-ML is able to
separate the components as it can make use of both non-Gaussianity and noise
diversity. MVICA is a lot less reliable than ShICA-ML, it is uniformly worse and
error bars are very large showing that for some seeds it gives poor results.
CanICA, ShICA-J and MultisetCCA cannot separate the components at all.
Additional experiments illustrating the separation powers of algorithms are available in Appendix~\ref{app:separation}.

As we can see, MVICA can separate Gaussian components to some extent and therefore does not completely fail when Gaussian and non-Gaussian components are present. However MVICA is a lot less reliable than ShICA-ML: MVICA is uniformly worse than ShICA-ML and the error bars are very large showing that for some seeds it gives poor results.

\paragraph{Computation time}


\begin{figure}
  \centering
  \begin{subfigure}[b]{0.49\textwidth}
      \centering
      \includegraphics[width=\textwidth]{./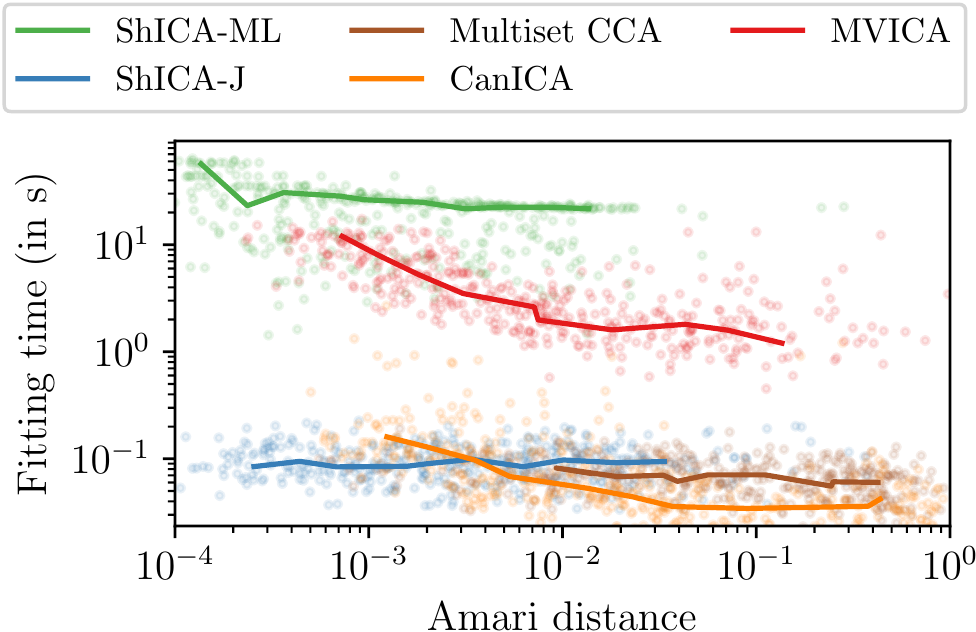}
      \caption{}
      \label{exp:syn_timings}
  \end{subfigure}
  \hfill
  \begin{subfigure}[b]{0.45\textwidth}
    \centering
    \includegraphics[width=\textwidth]{./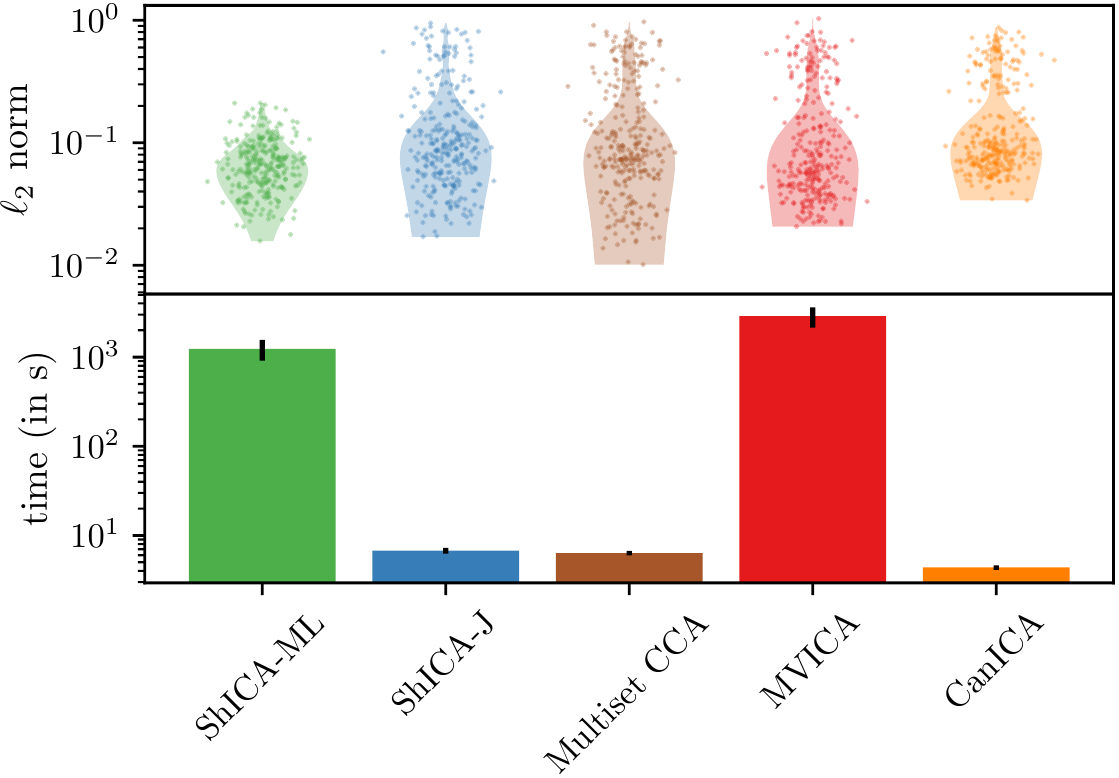}
    \caption{}
    \label{fig:eeg_intragroup_variability}
\end{subfigure}
     \caption{\textbf{Left: Computation time.} Algorithms are fit on data generated from model~\eqref{eq:model} with a super-Gaussian density. For different values of the number of samples, we plot the Amari distance and the fitting time. Thick lines link median values across seeds. \textbf{Right: Robustness w.r.t intra-subject variability in MEG.}
     (\textbf{top}) $\ell_2$ distance between shared components corresponding to the same stimuli in different trials.  (\textbf{bottom}) Fitting time.}
\end{figure}
We generate components using a slightly super Gaussian density: $s_j = d(x)$ with $d(x) = x |x|^{0.2}$ and $x \sim \mathcal{N}(0, 1)$. We vary the number of samples $n$ between $10^2$ and $10^4$. We compute the mean Amari distance across subjects and record the computation time. The experiment is repeated $40$ times. We plot the Amari distance as a function of the computation time in Fig~\ref{exp:syn_timings}. Each point corresponds to the Amari distance/computation time for a given number of samples and a given seed. We then consider for a given number of samples, the median Amari distance and computation time across seeds and plot them in the form of a thick line.  From Fig~\ref{exp:syn_timings}, we see that ShICA-J is the method of choice when speed is a concern while ShICA-ML yields the best performance in terms of Amari distance at the cost of an increased computation time. The thick lines for ShICA-J and Multiset CCA are quasi-flat, indicating that the number of samples does not have a strong impact on the fitting time as these methods only work with covariances. On the other hand CanICA or MVICA computation time is more sensitive to the number of samples.

%


\paragraph{Robustness w.r.t intra-subject variability in MEG}

In the following experiments we consider the Cam-CAN
dataset~\cite{taylor2017cambridge}. We use the magnetometer data from the MEG of
$m=100$ subjects chosen randomly among 496.
In appendix~\ref{app:preprocessing}
we give more information about Cam-CAN dataset.
Each subject is repeatedly presented three audio-visual stimuli. 
For each stimulus, we divide the trials into two sets and within each set, 
the MEG signal is averaged across trials to isolate the evoked response. This
procedure yields 6 chunks of individual data (2 per stimulus).
%
%
We study the similarity between shared components corresponding to repetitions of the same stimulus. This gives a measure of robustness of each ICA algorithm with respect
to intra-subject variability.
Data are first reduced using a subject-specific PCA with $p=10$ components.
The initial dimensionality of the data before PCA is $102$ as we only use the 102 magnetometers.
Algorithms are run 10 times with different seeds on the 6 chunks of data,
and shared components are extracted.
When two chunks of data correspond to repetitions of the same stimulus they should yield similar
components.
For each component and for each stimulus, we therefore measure the $\ell_2$
distance between the two repetitions of the stimulus.
 This yields $300$ distances per algorithm that are
plotted on Fig~\ref{fig:eeg_intragroup_variability}.

The components recovered by ShICA-ML have a much lower variability than other approaches. The performance of ShICA-J is competitive with MVICA while being much faster to fit. Multiset CCA yields satisfying results compared with ShICA-J. However we see that the number of components that do not match at all across trials is greater in Multiset CCA.
Additional experiments on MEG data are available in Appendix~\ref{app:phantom}.

\paragraph{Reconstructing the BOLD signal of missing subjects}
\begin{wrapfigure}{l}{.42\textwidth}
  \centering
  \includegraphics[width=0.99\linewidth]{./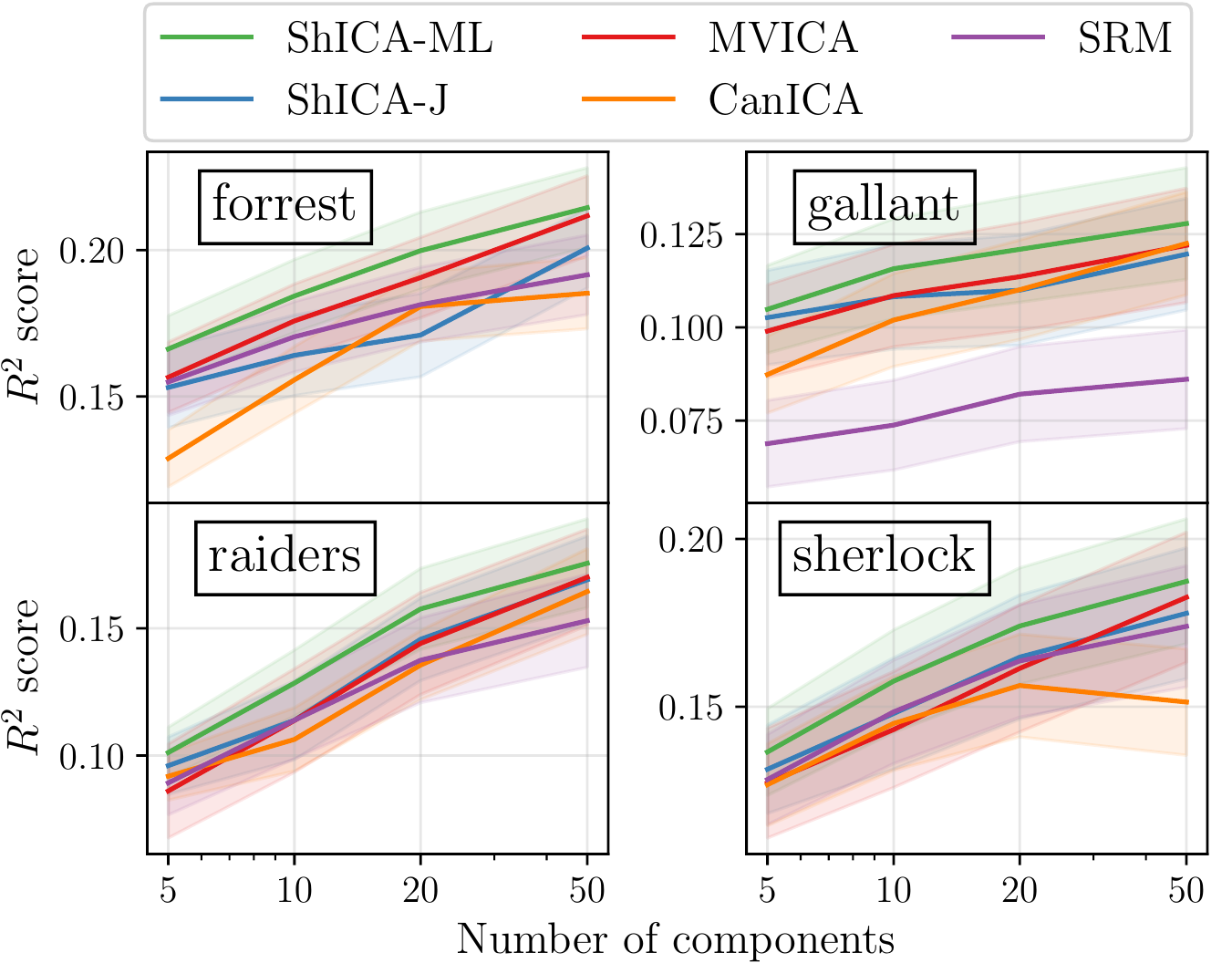}
  \includegraphics[width=0.99\linewidth]{./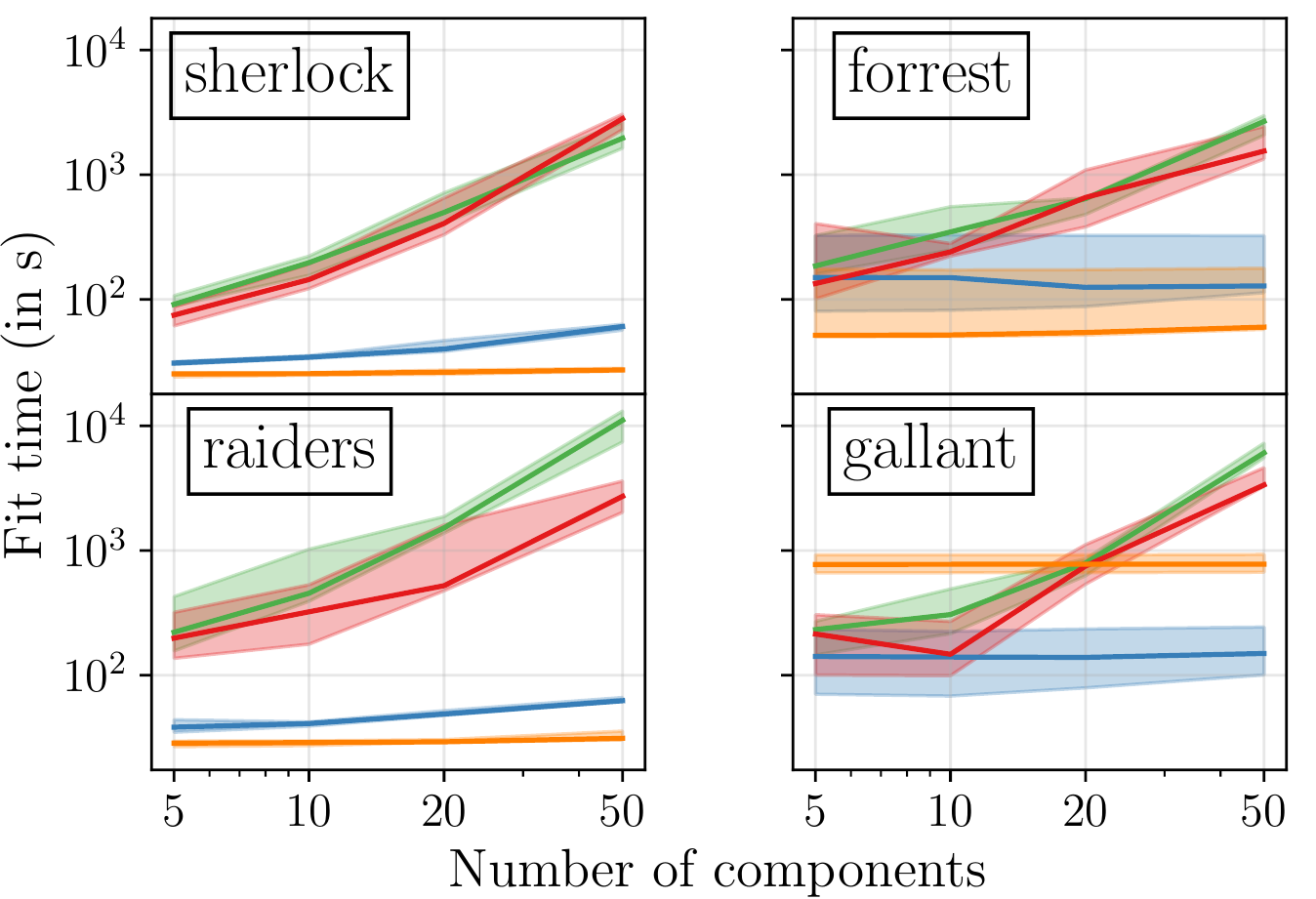}
  
  \caption{\textbf{Reconstructing the BOLD signal of
      missing subjects}. (\textbf{top}) Mean $R^2$ score between reconstructed data and true
    data. (\textbf{bottom}) Fitting time.
    }
  \label{fig:reconstruction}
\end{wrapfigure}
We reproduce the experimental pipeline of~\cite{richard2020modeling} to benchmark GroupICA methods using their ability to reconstruct fMRI data of a left-out subject.
The preprocessing involves a dimension reduction step performed using the shared response model~\cite{chen2015reduced}. Detailed preprocessing pipeline is described in Appendix~\ref{app:preprocessing}. We call an \emph{unmixing operator} the product of the dimension
reduction operator and an unmixing matrix and a \emph{mixing operator} its pseudoinverse. There is one unmixing operator and one mixing operator per view.
The unmixing operators are learned using all subjects
and $80\%$ of the runs. Then they are applied on the remaining $20\%$ of the runs using $80\%$
of the subjects yielding unmixed data from which shared components are
extracted.
The unmixed data are combined by averaging (for SRM and other baselines) or using the MMSE estimate for ShICA-J and ShICA-ML.
We
then apply the mixing operator of the remaining $20\%$ subjects on the shared components to reconstruct their data.
Reconstruction accuracy is measured via the coefficient of determination, \aka $R^2$ score, that
yields for each voxel the relative discrepancy between the true time course and the predicted one.
%
For each compared algorithm, the experiment is run 25 times with different seeds to obtain error bars. We report the mean $R^2$ score across voxels in a region of interest (see Appendix~\ref{app:preprocessing} for details)
 and display the results in Fig~\ref{fig:reconstruction}. The error bars represent a $95\%$ confidence interval.
The chance level is given by the $R^2$ score of an algorithm that samples the
coefficients of its unmixing matrices and dimension reduction operators from a
standardized Gaussian. The median chance level is below $10^{-3}$ on all
datasets.
ShICA-ML yields the best $R^2$ score in all datasets and for any number of
components. ShICA-J yields competitive results with respect to MVICA
while being much faster to fit. A popular benchmark especially in the SRM
community is the time-segment matching experiment~\cite{chen2015reduced}: we
include such experiments in Appendix~\ref{app:timesegment}.
In
appendix~\ref{app:table}, we give the performance of ShICA-ML, ShICA-J and MVICA
in form of a table.


%


\section{Conclusion, Future work and Societal impact}

We introduced the ShICA model as a principled unifying solution to the problems of shared response modelling and GroupICA. ShICA is able to use both the diversity of Gaussian variances and non-Gaussianity for optimal estimation. We presented two algorithms to fit the model: ShICA-J, a fast algorithm that uses noise diversity, and ShICA-ML, a maximum likelihood approach that can use non-Gaussianity on top of noise diversity. ShICA algorithms come with principled procedures for shared components estimation, as well as adaptation and estimation of noise levels in each view (subject) and component. On simulated data, ShICA clearly outperforms all competing methods in terms of the trade-off between statistical accuracy and computation time. On brain imaging data, ShICA gives more stable decompositions for comparable computation times, and more accurately predicts the data of one subject from the data of other subjects, making it a good candidate to perform transfer learning. 
Our code is available at \url{https://github.com/hugorichard/ShICA}.
\footnote{Regarding the ethical aspects of this work, we think this work presents exactly the same issues as any brain imaging analysis method related to ICA.}
\clearpage
\paragraph{Acknowledgement and funding disclosure}
This work has received funding
from the European Union’s Horizon 2020 Framework Programme for Research and Innovation under
the Specific Grant Agreement No. 945539 (Human Brain Project SGA3), the KARAIB AI chair
(ANR-20-CHIA-0025-01), the Grant SLAB ERC-StG-676943 and the BrAIN AI chair (ANR-20-CHIA-0016). PA acknowledges funding by the French government under management of Agence Nationale de la Recherche as part of the “Investissements d’avenir” program, reference ANR19-P3IA-0001 (PRAIRIE 3IA Institute). AH received funding from a CIFAR Fellowship. 
\bibliographystyle{plain}
\bibliography{biblio}


\newpage
\appendix

\section{Proofs and Lemmas}
\subsection{Proof of Theorem~\ref{thm:identif}}
\label{proof:identif}
\begin{proof}
By hypothesis, the covariances verify $C_{ij} =\bbE[\xb_i\xb_j^\top] = A_i(I_p + \delta_{ij}\Sigma_i)A_j^{\top} = A'_i(I_p + \delta_{ij}\Sigma'_i){A'_j}^{\top}$ for all $i, j$. We let $P_i=A_i^{-1}A_i'$. The previous relationship for $j\neq i$ gives $P_iP_j^{\top} = I_p$. Because there are more than 3 views, there is another integer $k \notin\{i,j\}$, and we have $P_iP_k^{\top}= P_jP_k^{\top}=I_p$. This shows that $P_i = P_j$: all these matrices are equal, and we call $P$ their common value. The previous equation also gives $PP^{\top} = I_p$, so $P$ is orthogonal. 
We have that $s + n_i$ and $s' + n_i'$ have independent components and $s + n_i = P(s' + n_i')$. Lemma~\ref{lemma:ica} (a direct consequence of classical ICA results~\cite{comon1994independent}, Theorem 10) gives $P=\Pi^{-1} \Omega \Pi'$ where $\Pi$ and $\Pi'$ are sign and permutation matrices such that the first $g$ components of $\Pi(s + n_i)$ and $\Pi'(s' + n_i')$ are Gaussian, and $\Omega$ is a block diagonal matrix given by
\[
\Omega = \begin{bmatrix} \Omega_g & 0 \\ 0 & I_{p - g} \end{bmatrix}
\]
where $\Omega_g$ is orthogonal.
We call $A^{(g)}$ the first $g \times g$ block of a matrix $A$ so that $\Omega^{(g)} = \Omega_g$.

Then, considering only the Gaussian components, we can write for $i=j$:  
$(\Pi \Sigma_i)^{(g)} = \Omega_g (\Pi' \Sigma'_i)^{(g)} \Omega_g^{\top}$ for all $i$. This, combined with Assumption~\ref{ass:diversity}, implies that $\Omega_g$ is a sign and permutation matrix (see Lemma~\ref{lemma:eigdecomp}) and therefore $P$ is a sign and permutation matrix. Then it follows that $I + \Sigma_i = P(I + \Sigma'_i)P^{\top}$ and therefore $\Sigma_i = P \Sigma'_i P^{\top}$ so $\Sigma'_i = P^{\top} \Sigma_i P$.
\end{proof}

\subsection{Proof of Theorem~\ref{th:eig}}
\label{proof:eig}
\begin{proof}
  Let us denote $W \in \bbR^{mp \times mp}$ the block diagonal matrix with block $i$ given by
  $(A^i)^{-1}$. We have $\mathcal{C} \ub = \lambda \mathcal{D} \ub  \iff W \mathcal{C} W^\top \zb = \lambda W \mathcal{D} W^\top \zb
  $ where $\ub = W^\top \zb$. We call $\zb$ a reduced eigenvector.
  Each
  block in $W \mathcal{C} W^\top$ and in $W \mathcal{D} W^\top$ is diagonal so any reduced eigenvector $\zb = \begin{bmatrix} \zb_1 \\ \vdots \\ \zb_m \end{bmatrix}$ is
  such that the matrix $Z = [\zb_1 \dots \zb_m]$ has exactly one non-zero line.
  Following Lemma~\ref{lemma:nonzerocoord}, the first $p$ leading reduced
  eigenvectors $\zb^1, \dots, \zb^p$ all have different first non-zero coordinates.
  Therefore the concatenation of the first $p$ leading reduced eigenvectors is given
  by $[\zb^1, \dots \zb^p] = \begin{bmatrix} \Gamma_1 \\ \vdots \\ \Gamma_m \end{bmatrix} P^{\top}$ where $P^{\top} \in \mathbb{R}^{p \times p}$ is a permutation matrix and $\Gamma_i
  \in \mathbb{R}^{p \times p}$ is a diagonal matrix. Therefore, the first $p$
  eigenvectors are given by $[\ub^1 \dots \ub^p] = \begin{bmatrix} W_1^{\top} \\ \vdots \\ W_m^{\top} \end{bmatrix} = \begin{bmatrix} (A_1^{-1})^{\top} \Gamma_1 P^{\top} \\ \vdots \\ (A_m^{-1})^{\top} \Gamma_m P^{\top} \end{bmatrix}$  and so $W_i = P \Gamma_i A_i^{-1}$
\end{proof}

\begin{lemma}
\label{lemma:ica}
Let $\sbb \in \mathbb{R}^k$ and $\sbb'\in \mathbb{R}^k$ have independent components among which $g$ are Gaussian, and $P$ a rotation matrix such that $\sbb = P\sbb'$. Then, $P=\Pi^{-1} O \Pi'$ where $\Pi$ and $\Pi'$ are sign and permutation matrices such that the first $g$ components of $\Pi \sbb$ and $\Pi' \sbb'$ are Gaussian and $O$ is a block diagonal matrix such that $O^{(g)}$, the first $g \times g$ block of $O$, is orthogonal and the other block is identity.
\end{lemma}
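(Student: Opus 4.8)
The plan is to derive the lemma from the classical uniqueness result for ICA, namely the Darmois--Skitovitch theorem (which is the engine behind \cite{comon1994independent}, Theorem~10): a non-Gaussian independent component cannot appear with nonzero coefficient in two distinct linear combinations of a fixed set of independent variables unless those variables are themselves Gaussian. Everything beyond that invocation is linear-algebra bookkeeping that exploits that $P$ is orthogonal.

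First I would normalize the ordering: pick permutation matrices $\Pi_0,\Pi_0'$ sending the $g$ Gaussian components of $\sbb$ (resp.\ $\sbb'$) to the first $g$ coordinates, and set $\hat P := \Pi_0 P (\Pi_0')^{\top}$, which is orthogonal and satisfies $\Pi_0\sbb = \hat P(\Pi_0'\sbb')$. It then suffices to show that $\hat P$ is block diagonal, $\hat P = \mathrm{diag}(\hat P_g, \hat P_{ng})$, with $\hat P_g \in \mathbb{R}^{g\times g}$ orthogonal and $\hat P_{ng}$ a sign-and-permutation matrix. Indeed, setting $O := \mathrm{diag}(\hat P_g, I_{k-g})$ (so $O^{(g)}=\hat P_g$ is orthogonal and the other block is identity), $\Pi' := \Pi_0'$, and $\Pi := \mathrm{diag}(I_g,\hat P_{ng})^{\top}\Pi_0$ (a sign-and-permutation matrix, which keeps the first $g$ components of $\Pi\sbb$ Gaussian since it only permutes and sign-flips the last $k-g$ coordinates), one checks directly that $P = \Pi^{-1} O \Pi'$, which is the claim.

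For the block structure, write $\Pi_0'\sbb' = \hat P^{\top}(\Pi_0\sbb)$, so that component $a$ of $\Pi_0'\sbb'$ has $\hat P_{ja}$ as the coefficient of the $j$-th (independent) component of $\Pi_0\sbb$. Applying Darmois--Skitovitch to two distinct components of $\Pi_0'\sbb'$, any $j$ with $\hat P_{ja}\neq 0$ and $\hat P_{jb}\neq 0$ for some $a\neq b$ must index a Gaussian component of $\Pi_0\sbb$; hence for each non-Gaussian index $j$ (i.e.\ $j>g$) the $j$-th row of $\hat P$ has at most one nonzero entry, exactly one by invertibility, and it equals $\pm 1$ since the row has unit norm. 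The matching component of $\Pi_0'\sbb'$ is then $\pm$ that component of $\Pi_0\sbb$, hence non-Gaussian, so its index is also $>g$; moreover unit norm of that column forces all its other entries to vanish. The resulting map from non-Gaussian rows to non-Gaussian columns is injective (two such rows cannot be proportional, as $\hat P$ is invertible), hence a bijection between these two $(k-g)$-element index sets. Consequently the non-Gaussian rows are supported on non-Gaussian columns and vice versa, so the two off-diagonal blocks of $\hat P$ vanish; the non-Gaussian diagonal block has exactly one $\pm1$ per row and per column, i.e.\ is a sign-and-permutation matrix; and since $\hat P$ is orthogonal and block diagonal, the Gaussian diagonal block is orthogonal.

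The only non-elementary step is the invocation of Darmois--Skitovitch; the rest is careful accounting. The point where care is most needed is recognizing that orthogonality of $P$ (unit-norm rows \emph{and} columns) is exactly what upgrades ``generalized permutation'' to ``sign-and-permutation'' on the non-Gaussian block and thus lets the residual factor be absorbed into the sign-and-permutation matrices $\Pi,\Pi'$: with only invertibility one would instead obtain $P=\Pi^{-1}O\Pi'$ for a block-diagonal $O$ whose non-Gaussian block is merely diagonal, which is not the stated conclusion.
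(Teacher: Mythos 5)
Your proof is correct and follows essentially the same route as the paper's: reduce to a reordered orthogonal matrix via permutations, invoke the classical ICA uniqueness result (you cite Darmois--Skitovitch directly, the paper cites Theorem~10 of Comon, which is the same engine) to show each non-Gaussian row/column has a single $\pm 1$ entry, use the Gaussian/non-Gaussian mismatch to force block-diagonality, and absorb the residual sign-and-permutation block into one of the outer permutation factors (you absorb it into $\Pi$ on the left, the paper into $\Pi'$ on the right --- an immaterial difference). No gaps.
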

\begin{proof}
  From~\cite{comon1994independent}, Theorem 10:
  Assume $\sbb = P\sbb'$, if the column $j$ of $P$ has more than one non-zero element then $s'_j$ is Gaussian. 
  
  Let us define permutations $\Pi_1$, $\Pi'_1$ such that the first $g$ components of $\Pi_1 \sbb$ and $\Pi'_1 \sbb'$ are Gaussian and $P_1  = \Pi_1 P (\Pi'_1)^{-1}$. We can see that $P_1$ is orthogonal.
  
  We have $\Pi_1 \sbb = P_1 \Pi'_1 \sbb'$. So the last $p-g$ columns of $P_1$ contain at most one non-zero element. Using orthogonality of $P_1$ this non-zero element has value $1$ or $-1$ and is also the only one in its line. Let us focus on column $l > g$. Assume column $l$ has its non-zero element at index $k \leq g$. Then line $k$ in $P_1$ is only non-zero at index $l$ and therefore $(\Pi_1 \sbb)_k$ (which is Gaussian) is equal to $(\Pi'_1 \sbb')_l$ (which is not). Therefore column $l$ can only have its non-zero element at an index greater than $g$. This shows that $P_1$ is block diagonal $P_1 = \begin{bmatrix} O_g & 0 \\ 0 & P_2 \end{bmatrix}$ where $O_g$ is orthogonal  and $P_2$ is a sign and permutation matrix.
  \begin{align}
      &\begin{bmatrix} O_g & 0 \\ 0 & P_2 \end{bmatrix} = \Pi_1 P (\Pi'_1)^{-1} \\
      & \iff \begin{bmatrix} O_g & 0 \\ 0 & I \end{bmatrix} \begin{bmatrix} I & 0 \\ 0 & P_2 \end{bmatrix}  = \Pi_1 P (\Pi'_1)^{-1} \\
      & \iff \Pi_1^{-1} \begin{bmatrix} O_g & 0 \\ 0 & I \end{bmatrix} \begin{bmatrix} I & 0 \\ 0 & P_2 \end{bmatrix} \Pi'_1  = P
  \end{align}
  
  Therefore setting $\Pi' =   \begin{bmatrix} I & 0 \\ 0 & P_2 \end{bmatrix} \Pi'_1$ and $\Pi = \Pi_1$ and $O= \begin{bmatrix}O_g & 0 \\ 0 & I  \end{bmatrix}$ concludes the proof.
  
\end{proof}

\begin{lemma}
\label{lemma:eigdecomp}
Assume that Assumption 2 holds for $\Sigma_i$, and that there is an orthogonal matrix $P$ and diagonal matrices $\Sigma_i'$ such that for all $i$, $\Sigma_i' = P\Sigma_iP^{\top}$. Then, $P$ is a permutation matrix.
\end{lemma}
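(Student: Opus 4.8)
The plan is to read the hypothesis $\Sigma_i' = P\Sigma_iP^{\top}$ as the statement that the orthogonal matrix $P$ \emph{simultaneously diagonalizes} all the $\Sigma_i$, and then to use the noise-diversity condition (Assumption~\ref{ass:diversity}, i.e.\ $(\Sigma_{ij})_i \neq (\Sigma_{ij'})_i$ whenever $j\neq j'$) to pin down the joint eigenstructure. First I would rewrite the relation as $\Sigma_i P^{\top} = P^{\top}\Sigma_i'$ and denote by $p_k = P^{\top}e_k$ the $k$-th column of $P^{\top}$; evaluating both sides at $e_k$ gives $\Sigma_i p_k = \Sigma_{ik}'\,p_k$ for every $i$, so each $p_k$ is a common eigenvector of all the $\Sigma_i$ simultaneously, with eigenvalue $\Sigma_{ik}'$ for $\Sigma_i$, and $\{p_1,\dots,p_p\}$ is an orthonormal basis since $P$ is orthogonal.

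Next I would show each $p_k$ is proportional to a standard basis vector. Suppose, toward a contradiction, that some $p_k$ has nonzero coordinates in two distinct positions $j$ and $j'$. Reading the eigenvector equation $\Sigma_i p_k = \Sigma_{ik}' p_k$ in coordinates $j$ and $j'$ (using that $\Sigma_i$ is diagonal) and dividing through by the nonzero entries $(p_k)_j$ and $(p_k)_{j'}$ gives $\Sigma_{ij} = \Sigma_{ik}' = \Sigma_{ij'}$ for every $i = 1,\dots,m$, i.e.\ the sequences $(\Sigma_{ij})_i$ and $(\Sigma_{ij'})_i$ coincide — contradicting Assumption~\ref{ass:diversity}. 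Hence every $p_k$ has exactly one nonzero coordinate, which equals $\pm 1$ because $\|p_k\| = 1$, and orthogonality of the $p_k$ forces these nonzero entries to lie in distinct rows. Therefore $P^{\top}$, and hence $P$, is a signed permutation matrix; since the signs act trivially under the conjugation $\Sigma_i \mapsto P\Sigma_iP^{\top}$, this yields exactly the permutation structure claimed, which is precisely the form needed in the proof of Theorem~\ref{thm:identif}.

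The only subtlety worth flagging — and the reason the argument must be set up jointly over $i$ rather than one matrix at a time — is that no single $\Sigma_i$ need have distinct diagonal entries, so no individual matrix distinguishes the basis vectors $e_j$; what Assumption~\ref{ass:diversity} buys us is that the map $j \mapsto (\Sigma_{ij})_{i=1}^m$ is injective, and the contradiction above is extracted from that injectivity rather than from any one matrix. Beyond arranging the bookkeeping this way there is no real obstacle: the core computation is a one-line coordinate comparison in the eigenvector equation.
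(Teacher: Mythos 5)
Your proof is correct, but it takes a genuinely different route from the paper's. You read $\Sigma_i P^{\top}=P^{\top}\Sigma_i'$ column by column, observe that each column $p_k$ of $P^{\top}$ is a simultaneous eigenvector of all the $\Sigma_i$ with eigenvalues $\Sigma'_{ik}$, and extract the contradiction with Assumption~\ref{ass:diversity} from a coordinate comparison: two nonzero coordinates $j,j'$ of $p_k$ would force $\Sigma_{ij}=\Sigma'_{ik}=\Sigma_{ij'}$ for every $i$. The paper instead reduces to a single-matrix statement: it first shows (by a covering argument --- a finite union of proper subspaces of $\bbR^m$ cannot equal $\bbR^m$) that some linear combination $\sum_i\alpha_i\Sigma_i$ has pairwise distinct diagonal entries, and then invokes uniqueness of the spectral decomposition of that one symmetric matrix to conclude $P$ is a (signed) permutation. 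Your argument is more elementary and self-contained --- it needs neither the auxiliary existence lemma nor the appeal to eigendecomposition uniqueness --- while the paper's trick packages the "diversity over $i$" hypothesis into a single matrix with simple spectrum, which is a reusable reduction. Both approaches correctly address the key subtlety you flag, namely that no individual $\Sigma_i$ need have distinct entries, and both in fact conclude that $P$ is a sign-and-permutation matrix (the paper's statement of the lemma says only ``permutation,'' but that is how the lemma is used in the proof of Theorem~\ref{thm:identif}); your remark that the signs are harmless under conjugation is the right way to reconcile this.
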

\begin{proof}
The proof is in two parts. First, we show that there exist some coefficients $\alpha_1, \dots, \alpha_m$ such that the matrix $\sum_i\alpha_i\Sigma_i$ has distinct coefficients on the diagonal. Then, since we have $\sum_i\alpha_i\Sigma'_i = P\left(\sum_i\alpha_i\Sigma_i\right)P^{\top}$, and the diagonal $\sum_i\alpha_i\Sigma_i$ has distinct entries, we can invoke the unicity of the eigenvalue decomposition for symmetric matrices, which shows that $P$ is necessarily a permutation matrix.
Now, the only thing left is to prove is that Assumption 2 implies the existence of this linear combination.

We assume by contradiction that any linear combination of the $\Sigma_i$ has two equal entries.

For $\alpha = [\alpha_1, \dots, \alpha_m]$, we let $\mathcal{S}(\alpha) = \diag(\sum_i\alpha_i\Sigma_i)\in\bbR^p$, where $\diag(\cdot)$ extracts the diagonal entries. The operator $\mathcal{S}$ is linear.
We now define for $j, j'\leq p$ the linear form $\ell_{jj'}(\alpha) = \mathcal{S}(\alpha)_j - \mathcal{S}(\alpha)_{j'}\in\bbR$. The assumption on the linear combinations of $\Sigma_i$ simply rewrites:
For all $\alpha\in\bbR^m$, there exists $j, j'\leq p$ such that $\ell_{jj'}(\alpha) = 0$.

From a set point of view, this relationship writes
$$
\bigcup_{j, j'}\mathrm{Ker}(\ell_{jj'}) = \bbR^m\enspace.
$$
Since the $\ell_{jj'}$ are all linear forms, the $\mathrm{Ker}(\ell_{jj'})$ are subspaces of dimensions $m$ or $m-1$, and since their union is of dimension $m$, there exists $j, j'$ such that  $\mathrm{Ker}(\ell_{jj'}) = \bbR^m$, i.e. such that $\ell_{jj'} = 0$.

As a consequence, we have for all $\alpha$, $\mathcal{S}(\alpha)_j = \mathcal{S}(\alpha)_{j'}$. This implies that the sequences $(\Sigma_{ij})_i$ and $(\Sigma_{ij'})_i$ are equal, which contradicts Assumption 2.

We have therefore shown that Assumption 2 implies the existence of a linear combination of the $\Sigma_i$ that has distinct entries, which concludes the proof.
\end{proof}

\begin{lemma}
\label{lemma:nonzerocoord}
Let us consider the following eigenvalue problem:
 \begin{align}
  & \begin{bmatrix} I + \Sigma_1 & I & \dots & I \\
    I & I + \Sigma_2 & \ddots & \vdots \\
    \vdots &  \ddots & \ddots & I  \\
    I & \dots & I  &I + \Sigma_m
  \end{bmatrix} \zb = \lambda \begin{bmatrix}
    I + \Sigma_1 & 0 & \dots  & 0 \\
    0 & I + \Sigma_2 & \ddots & \vdots \\
    \vdots & \ddots & \ddots & 0 \\
    0& \dots  & 0 &  I + \Sigma_m  \\
  \end{bmatrix} \zb
  \label{reducedeig}
\end{align}
where $\forall i, \enspace 1 \leq i \leq m, \enspace  \Sigma_m \in \bbR^{p, p}$ are positive diagonal matrices and I is the identity matrix.
If the first $p$ eigenvalues are distincts, the first $p$ eigenvectors $\zb^1, \dots, \zb^p, \zb^i \in \mathbb{R}^{mp}$ have different first non-zero coordinates.
\end{lemma}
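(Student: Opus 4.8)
The plan is to exploit that, in the pencil \eqref{reducedeig}, every $p\times p$ block of both matrices is diagonal, so the generalized eigenvalue problem decouples into $p$ independent $m\times m$ problems. First I would apply the permutation of the $mp$ indices that regroups them by component index $j\in\{1,\dots,p\}$ instead of by view index $i\in\{1,\dots,m\}$. Under this relabelling both matrices in \eqref{reducedeig} become block diagonal with $p$ blocks of size $m\times m$, the $j$-th pair being $N_j=\mathbf 1_m\mathbf 1_m^\top+\diag(\Sigma_{1j},\dots,\Sigma_{mj})$ and $D_j=\diag(1+\Sigma_{1j},\dots,1+\Sigma_{mj})$. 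Hence a vector is a generalized eigenvector of \eqref{reducedeig} with eigenvalue $\mu$ iff, after relabelling, each of its $p$ length-$m$ chunks is either zero or a generalized eigenvector of $(N_j,D_j)$ with the same $\mu$; in particular the spectrum of \eqref{reducedeig} is the disjoint union of the spectra of the $(N_j,D_j)$. Note that a length-$m$ chunk of the relabelled vector is exactly a row of the matrix $Z^l=[\zb^l_1,\dots,\zb^l_m]\in\mathbb{R}^{p\times m}$ attached to the eigenvector $\zb^l=(\zb^l_1;\dots;\zb^l_m)$.

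Next I would analyse a single pencil $(N_j,D_j)$. Rewriting $N_j\mathbf w=\mu D_j\mathbf w$ as $(\mathbf 1_m^\top\mathbf w)\,\mathbf 1_m=\big(\mu I_m+(\mu-1)\diag(\Sigma_{ij})\big)\mathbf w$, one sees that if $\mathbf 1_m^\top\mathbf w\neq 0$ then $\mu(1+\Sigma_{ij})-\Sigma_{ij}\neq 0$ for all $i$ (otherwise the corresponding coordinate forces $\mathbf 1_m^\top\mathbf w=0$), hence $\mathbf w_i=(\mathbf 1_m^\top\mathbf w)/(\mu(1+\Sigma_{ij})-\Sigma_{ij})$ for every $i$ — so all coordinates of $\mathbf w$ are nonzero — and $\mu$ is a root of $\sum_{i=1}^m\big(\mu(1+\Sigma_{ij})-\Sigma_{ij}\big)^{-1}=1$; whereas if $\mathbf 1_m^\top\mathbf w=0$ then $\mu=\Sigma_{i_0j}/(1+\Sigma_{i_0j})<1$ for some index $i_0$. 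Since the function $\mu\mapsto\sum_i(\mu(1+\Sigma_{ij})-\Sigma_{ij})^{-1}$ equals $m>1$ at $\mu=1$, tends to $0$ at $+\infty$, and is strictly decreasing on $(1,\infty)$, it has a unique root $\lambda_j$ there; thus $(N_j,D_j)$ has exactly one eigenvalue $>1$, namely the $\lambda_j$ of Proposition~\ref{prop:eigvals_from_noise}, it is simple, and its eigenvector has all $m$ coordinates nonzero, while all other eigenvalues of $(N_j,D_j)$ are $<1$ (consistently with the remark after Proposition~\ref{prop:eigvals_from_noise}).

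Finally I would assemble the two observations. By Proposition~\ref{prop:eigvals_from_noise} the $p$ largest eigenvalues of \eqref{reducedeig} are $\lambda_1,\dots,\lambda_p$, all $>1$ and all remaining eigenvalues $<1$, so writing the $l$-th largest as $\lambda_{\sigma(l)}$ defines a permutation $\sigma$ of $\{1,\dots,p\}$. Because the $\lambda_j$ are pairwise distinct (the hypothesis), for $j\neq\sigma(l)$ the unique eigenvalue $>1$ of $(N_j,D_j)$ is $\lambda_j\neq\lambda_{\sigma(l)}$; hence $\lambda_{\sigma(l)}$ is an eigenvalue of $(N_j,D_j)$ for $j=\sigma(l)$ only. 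Therefore the $\lambda_{\sigma(l)}$-eigenspace of \eqref{reducedeig} is one-dimensional and any vector in it has, in matrix form, exactly one nonzero row — row $\sigma(l)$ — whose entries are all nonzero. Consequently the first nonzero coordinate of $\zb^l$ lies at position $\sigma(l)$ inside its first block $\zb^l_1$, and since $l\mapsto\sigma(l)$ is a bijection, $\zb^1,\dots,\zb^p$ have pairwise different first nonzero coordinates.

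The routine part is the decoupling; the main obstacle is the per-pencil spectral analysis of the middle step — in particular establishing rigorously that each $(N_j,D_j)$ contributes exactly one eigenvalue above $1$, that it is simple, and that its eigenvector is fully supported (here $m\ge 3$, giving $m>1$ at $\mu=1$, is what makes the counting work) — together with keeping the bookkeeping straight when translating between the relabelled chunks and the rows of $Z^l$.
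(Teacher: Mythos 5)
Your proof is correct and follows essentially the same route as the paper's: decouple the pencil into $p$ independent $m\times m$ problems, then use the secular equation $\sum_i(\mu(1+\Sigma_{ij})-\Sigma_{ij})^{-1}=1$ to show each sub-pencil has exactly one simple eigenvalue above $1$ whose eigenvector is fully supported, so the top $p$ eigenvectors live in distinct groups. If anything, your version is slightly more explicit than the paper's on two points it leaves implicit: why every generalized eigenvector concentrates on a single row, and why all $m$ entries of that row are nonzero (which is what pins the \emph{first} nonzero coordinate inside the first block).
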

\begin{proof}
We sort the eigenvectors in $p$ groups of $m$ vectors so that all
vectors in group $l$ have their $l$-th coordinate
different from 0.
Let $\zb^{(l)}$ be an eigenvector in group $l$ and let us call $\wb_l \in
\mathbb{R}^{m}$ the non-zero coordinates of this eigenvector: $\forall i \in \{1 \dots m \}, w_{li} = z^{(l)}_{l + (i-1)p}$.

We have:
\begin{align}
\begin{bmatrix}
  1 + \Sigma_{1l} & 1 & \dots & 1  \\
  1 & 1 + \Sigma_{2l} & \ddots  &\vdots \\
  \vdots & \ddots & \ddots & 1  \\
  1 & \dots & 1 & 1 + \Sigma_{ml}  \\
\end{bmatrix} \wb_l =  \begin{bmatrix}
  1 + \Sigma_{1l} & 0 & \dots  & 0 \\
  0 & 1 + \Sigma_{2l} & \ddots & \vdots \\
  \vdots & \ddots & \ddots & 0 \\
  0& \dots  & 0 &  1 + \Sigma_{ml}  \\
\end{bmatrix} \wb_l \lambda_l
\label{eigsimp}
\end{align}

We now show that the biggest eigenvalue of~\eqref{eigsimp} is strictly above 1 while all
others are strictly below 1. The core of the proof comes from the study of the eigenvalues of a matrix modified by a rank 1 matrix. The reasoning we use here follows~\cite{golub1973some} (end of section 5).

Let us introduce 
$K^l = \diag(\Sigma_{1l} \dots \Sigma_{ml})$ and $\ub = \begin{bmatrix} 1 \\ \vdots \\ 1 \end{bmatrix}$.
Let us drop the index $l$ in the notations for simplicity.

The problem can be rewritten
\begin{align}
  &(\ub \ub^{\top} + K) \wb =  (I + K) \wb \lambda \\
  & \iff (I + K)^{-1}(\ub \ub^{\top} + K) \wb =   \wb \lambda
\end{align}

The characteristic polynomial is given by:
\begin{align}
  &\mathcal{P}(\lambda) = \det( (I + K)^{-1} K - \lambda I + (I + K)^{-1} \ub \ub^{\top}) \label{carpol} \\
  &\propto \det( I + ((I + K)^{-1} K - \lambda I)^{-1}(I + K)^{-1} \ub \ub^{\top})
\end{align}
where we implicitly focus here on eigenvalues $\lambda$ such that $\det((I + K)^{-1} K - \lambda I) \neq 0 \iff \forall i, \lambda \neq \frac{k_i}{1 + k_i}$.

We then use the following property:
Let $A \in \mathbb{R}^{a, b}$ and $B \in \mathbb{R}^{b, a}$ we have
$\det(I_a + AB) = \det(I_b + BA)$.

Let us call $\chi(\lambda) = \det( I + ((I + K)^{-1} K - \lambda I)^{-1}(I + K)^{-1} \ub \ub^{\top})$ we have:
\begin{align}
\chi(\lambda)
  &= 1 + \ub^{\top}((I + K)^{-1} K - \lambda I)^{-1}(I + K)^{-1} \ub \\
  &= 1  + \sum_{i=1}^m \frac1{1 + k_i} \frac1{ \frac{k_i}{1 + k_i} - \lambda}
\end{align}
where $k_i = \Sigma_{il} > 0$.
Taking the derivative we get 
\begin{align}
\chi'(\lambda) = \sum_{i=1}^m \frac1{1 + k_i} \frac1{ (\frac{k_i}{1 + k_i} - \lambda)^2} > 0
\end{align}

Trivially, $\forall i, \frac{k_i}{1 + k_i} < 1$. We also have
\begin{align}
  \chi(1) = 1 + \sum_{i=1}^{m} \frac1{1 + k_i} \frac1{ \frac{k_i}{1 + k_i} - 1} = 1 - m < 0
\end{align}
 and $\lim_{\lambda \rightarrow + \infty} \chi(\lambda) = 1$ so as $\chi$
 is continuous and strictly increasing on $[1, +\infty[$. Therefore, it reaches $0$ only once on this interval (excluding 1 since we know $\chi(1) \neq 0$). Therefore the greatest eigenvalue $\lambda^*$ is strictly above $1$ while all other eigenvalues are strictly below $1$.
 
  Note that because $\chi' > 0$, $\lambda^*$ is of multiplicity $1$. In the analysis above we ignored those eigenvalues $\lambda$ such that $\lambda = \frac{k_i}{1 + k_i}$ for some $i$. However since $\frac{k_i}{1 + k_i} < 1$, none of these eigenvalues can be the largest one.
 
 Finally, the $p$ first eigenvectors belong to different groups (the
 corresponding eigenvalues are all strictly above 1). This shows that these eigenvectors have
 different first non-zero coordinates. 
 
\end{proof}

\section{Identifiability results for $m< 3$}
\label{app:identifiability}
We have a slightly weaker identifiability result when $m=2$.
\begin{proposition}
  \label{prop:identifiability_2d}
  Let $m=2$, and suppose that the scalars $(1 + \Sigma_{1j})(1+\Sigma_{2j})$ for $j=1\dots p$ are all different. We let $\Theta'=(A_1', A_2', \Sigma_1',\Sigma_2')$ that also generates $\xb_1, \xb_2$. Then, there exists a permutation and scale matrix $P$ such that $A'_1 =A_1P$ and $A'_2 = A_2P^{-\top}$.
\end{proposition}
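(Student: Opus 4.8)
The plan is to argue entirely from second-order statistics. With only two views the covariance blocks are too few to force the relevant change of basis to be orthogonal (contrary to the $m\ge 3$ situation in Theorem~\ref{thm:identif}), so the stronger hypothesis on the products $(1+\Sigma_{1j})(1+\Sigma_{2j})$ must do that work instead.

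First I would record the covariance identities implied by the two parametrizations. Since the individual noises are independent of $\sbb$ and of each other across views, $C_{11}=A_1(I_p+\Sigma_1)A_1^\top$, $C_{22}=A_2(I_p+\Sigma_2)A_2^\top$ and $C_{12}=A_1A_2^\top$, and likewise for $\Theta'$. Setting $P_1=A_1^{-1}A_1'$ and $P_2=A_2^{-1}A_2'$ (both invertible), the identity $A_1A_2^\top=A_1'{A_2'}^\top$ gives $P_1P_2^\top=I_p$, that is $P_2=P_1^{-\top}$; this is already the claimed relation $A_2'=A_2P_1^{-\top}$, so it only remains to prove that $P_1$ is a permutation-and-scale matrix, and then set $P=P_1$.

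Next, equating the $(1,1)$ blocks gives $I_p+\Sigma_1=P_1(I_p+\Sigma_1')P_1^\top$, and equating the $(2,2)$ blocks together with $P_2=P_1^{-\top}$ gives $I_p+\Sigma_2=P_1^{-\top}(I_p+\Sigma_2')P_1^{-1}$, i.e.\ $(I_p+\Sigma_2)^{-1}=P_1(I_p+\Sigma_2')^{-1}P_1^\top$. The key step is to eliminate $P_1^\top$ between these two congruences: solving the first for $P_1^\top$ and substituting into the second yields the \emph{similarity}
\[
(I_p+\Sigma_2)^{-1}(I_p+\Sigma_1)^{-1}=P_1\bigl[(I_p+\Sigma_2')^{-1}(I_p+\Sigma_1')^{-1}\bigr]P_1^{-1}.
\]
Both sides are products of diagonal matrices, hence diagonal; the left-hand side is $\diag\bigl([(1+\Sigma_{1j})(1+\Sigma_{2j})]^{-1}\bigr)$, whose entries are pairwise distinct by hypothesis.

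Finally I would invoke uniqueness of eigenvectors for a matrix with simple spectrum: the displayed similarity says $P_1$ conjugates a diagonal matrix with distinct diagonal entries to another diagonal matrix with the same spectrum, so $P_1$ must map each canonical basis vector to a nonzero scalar multiple of a canonical basis vector; equivalently $P_1$ has exactly one nonzero entry in each row and column, i.e.\ it is a permutation-and-scale matrix. This gives $A_1'=A_1P_1$ and $A_2'=A_2P_1^{-\top}$, as desired (and, if wanted, $\Sigma_1'$ and $\Sigma_2'$ are then read off from the two congruences). The only real subtlety --- the ``main obstacle'' --- is recognizing that one should pass from the two congruence relations to a single similarity relation, after which the distinct-eigenvalue argument closes the proof; note that this route never uses non-Gaussianity, which is why it needs the genuinely stronger, and necessary, assumption on the noise-variance products.
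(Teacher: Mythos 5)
Your proposal is correct and follows essentially the same route as the paper: identical covariance identities, the relation $A_2^{-1}A_2'=P^{-\top}$, and the distinctness of the products $(1+\Sigma_{1j})(1+\Sigma_{2j})$ exploited through uniqueness of the eigendecomposition for a matrix with simple spectrum. The only cosmetic difference is that the paper factors $P$ through an auxiliary orthogonal matrix $U$ via $P=(I_p+\Sigma_1)^{1/2}U(I_p+\Sigma_1')^{-1/2}$ and identifies $U$ as a permutation, whereas you eliminate $P^{\top}$ directly to obtain a similarity for $P$ itself; both close the argument the same way.
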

\begin{proof}
  We let $P=A_1^{-1}A_1'$. Since $C_{12} = I_p$, it holds 
  $A_2^{-1}A_2'= P^{-\top}$. Then, we have
  $I_p + \Sigma_1 = P(I_p + \Sigma'_1)P^{\top}$. This means that there exists $U\in\mathcal{O}_p$ such that $P = (I_p + \Sigma_1)^{\frac12}U(I_p + \Sigma'_1)^{-\frac12}$. Since $P^{-\top}(I_p+\Sigma'_2)P^{-1} = I_p+\Sigma_2$, we find
  $U(I_p+\Sigma'_1)(I_p+\Sigma'_2)U^{\top} = (I_p+\Sigma_1)(I_p+\Sigma_2)$. By identification, $U$ is a permutation matrix, and $P$ is a scale and permutation matrix.
\end{proof}
As a consequence, when there are only two subjects, it is possible to recover the components and noise levels up to a scaling factor.
When there is only one view, $m=1$, there is a global rotation indeterminacy: 
$
A_1(I_p + \Sigma_1)A_1^{\top} = A'_1(I_p + \Sigma_1){A'_1}^{\top}
$
for $A'_1 = A_1(I_p + \Sigma_1)^{\frac12}U(I_p + \Sigma_1)^{-\frac12}$ where $U$ is any orthogonal matrix. In this case, we lose identifiability.

\section{Derivation of fixed point updates for scalings}

We want to minimize

\begin{align}
\label{app:fixedpoint}
L((\Phi_i)_{i=1}^m)= \sum_i \sum_{j \neq i} \| \Phi_i \diag(Y_{ij}) \Phi_j - I_p \|_F^2
\end{align}
for $\Phi_i$ diagonal. With respect to each $\Phi_i$, this function is strongly-convex, which means that the minimization w.r.t $\Phi_i$ can be done by cancelling the gradient.
The gradient is given by
\begin{align}
\partialfrac{\Phi_i}{L} = 2\sum_{j \neq i} (\Phi_i \diag(Y_{ij}) \Phi_j - I_p) \Phi_j
\end{align}
Therefore we get
\begin{align}
&\partialfrac{\Phi_i}{L} = 0 \\
&\iff 2\sum_{j \neq i} (\Phi_i \diag(Y_{ij}) \Phi_j - I_p) \Phi_j = 0 \\
&\iff \Phi_i \sum_{j \neq i} \diag(Y_{ij}) \Phi_j^2 - \sum_{j \neq i} \Phi_j = 0 \\
&\iff \Phi_i = \frac{\sum_{j \neq i} \Phi_j}{\sum_{j \neq i} \diag(Y_{ij}) \Phi_j^2}
\end{align}

This update equation ensures that $\Phi_i = \arg\min_{\Phi_i} L((\Phi_j)_{i=1}^m)$, and we then loop through the $\Phi_i$ to get an alternate minimization scheme, which is guaranteed to converge to a stationary point of~\eqref{app:fixedpoint}.


\section{EM E-step and M-step for ShICA with Gaussian components}
\subsection{E-step}
\label{conditional_density}
The derivations are the same as in section~\ref{app:emestep1} but the sum over
$\alpha \in {\frac12, \frac32}$ is replaced by just $\alpha=1$.

\subsection{M-step}
The function to minimize in the M-step is then given by:
\begin{align}
  \Jcal &= -\log p(\xb, \sbb) \\
  &= \sum_{i=1}^m \log(|\Sigma_i|) + \frac12 \tr(\Sigma_i^{-1} \left[(\yb_i - \EE[\sbb | \xb]) (\yb_i - \EE[\sbb | \xb])^{\top}+ \VV[\sbb | \xb]\right]) + c
\end{align}
where $c$ does not depend on $\Sigma_i$

Therefore we get closed-form updates for $\Sigma_i$: 
\begin{align}
\Sigma_i \leftarrow  \diag((\yb_i - \EE[\sbb | \xb]) (\yb_i - \EE[\sbb | \xb])^{\top}+ \VV[\sbb | \xb])
\end{align}
Plugging in the closed-form formula for $\EE[\sbb|\xb]$ and $\VV[\sbb|\xb]$ we get updates that only depends on the covariances $\hat{C_{ij}} = \EE[\xb_i \xb_j^{\top}]$.
\begin{align*}
\Sigma_i \leftarrow \diag(\hat{C_{ii}} - 2 \VV[\sbb | \xb]  \sum_{j=1}^m \Sigma_j^{-1} \hat{C}_{ji}  + \VV[\sbb | \xb]  \sum_{j = 1}^m \sum_{l = 1}^m \left(\Sigma_j^{-1} \hat{C}_{jl} \Sigma_l^{-1} \right) \VV[\sbb | \xb] + \VV[\sbb | \xb])
\end{align*}

\section{EM E-step and M-step for ShICA with non-Gaussian components}
\label{app:emestep}

  \subsection{E-step}
  \label{app:emestep1}
  The complete likelihood is given by
\begin{align}
  p(\xb, \sbb) &= \prod_i p(\xb_i | \sbb) p(\sbb) \\
  &= \prod_i p(\xb_i | \sbb) \prod_j \sum_{\alpha \in \{0.5, 1.5\}}p(s_j | \alpha) \\
\end{align}
where 
\begin{align}
  p(s_j | \alpha) = \Ncal(s_j; 0, \alpha)
\end{align}

We have
\begin{align}
  p(\xb_i | \sbb) &=|W_i|\Ncal(\yb_i; \sbb, \Sigma_i)  \\
                  &= |W_i| \prod_j \Ncal(y_{ij}; s_j, \Sigma_{ij})
\end{align}
where $\Sigma_{ij}$ is the coefficient $j, j$ of $\Sigma_i$ and $\yb_i = W \xb_i$.

Let us introduce a first lemma:
\begin{lemma}
  \label{lemma:multmgauss}
  \begin{align*}
    \prod_{i=1}^m \Ncal(x_i; u, v_i) = \prod_{i=1}^m \Ncal(x_i; \bar{x}, v_i) \sqrt{2 \pi \bar{v}} \Ncal(\bar{x}; u, \bar{v})
  \end{align*}
  where $\bar{v} = (\sum_{i=1}^m v_i^{-1})^{-1}$ and $\bar{x} = \frac{\sum_i v_i^{-1}
    x_i}{\sum_i v_i^{-1}}$.
\end{lemma}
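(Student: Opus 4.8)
The statement to prove is Lemma~\ref{lemma:multmgauss}: a product of Gaussian densities $\prod_{i=1}^m \Ncal(x_i; u, v_i)$ in a common mean parameter $u$ factors as $\prod_{i=1}^m \Ncal(x_i; \bar x, v_i)\cdot\sqrt{2\pi\bar v}\cdot\Ncal(\bar x; u, \bar v)$, with $\bar v = (\sum_i v_i^{-1})^{-1}$ and $\bar x = \bar v\sum_i v_i^{-1}x_i$. The plan is a direct ``completing the square'' computation on the exponents, tracking the normalizing constants separately. First I would write the left-hand side as $\left(\prod_i (2\pi v_i)^{-1/2}\right)\exp\!\left(-\tfrac12\sum_i \tfrac{(x_i-u)^2}{v_i}\right)$, and expand the quadratic form $\sum_i v_i^{-1}(x_i-u)^2$ in the variable $u$. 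This is $\left(\sum_i v_i^{-1}\right)u^2 - 2\left(\sum_i v_i^{-1}x_i\right)u + \sum_i v_i^{-1}x_i^2 = \bar v^{-1}u^2 - 2\bar v^{-1}\bar x\, u + \sum_i v_i^{-1}x_i^2$.

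Next I would complete the square in $u$: $\bar v^{-1}(u-\bar x)^2 = \bar v^{-1}u^2 - 2\bar v^{-1}\bar x u + \bar v^{-1}\bar x^2$, so that $\sum_i v_i^{-1}(x_i-u)^2 = \bar v^{-1}(u-\bar x)^2 + \left(\sum_i v_i^{-1}x_i^2 - \bar v^{-1}\bar x^2\right)$. The key algebraic identity needed is that the residual term equals $\sum_i v_i^{-1}(x_i-\bar x)^2$; this follows from the standard bias-variance-type decomposition, since $\bar x$ is precisely the $v_i^{-1}$-weighted mean of the $x_i$, so the cross term $-2\bar x \sum_i v_i^{-1}(x_i-\bar x)$ vanishes by definition of $\bar x$. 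Substituting back, the exponent splits as $-\tfrac12\sum_i \tfrac{(x_i-\bar x)^2}{v_i} - \tfrac12\tfrac{(u-\bar x)^2}{\bar v}$, which exactly matches the exponents of $\prod_i \Ncal(x_i;\bar x, v_i)$ and $\Ncal(\bar x; u,\bar v)$ on the right-hand side.

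Finally I would reconcile the normalizing constants. The left-hand side carries $\prod_i (2\pi v_i)^{-1/2}$, while $\prod_i \Ncal(x_i;\bar x,v_i)\cdot\Ncal(\bar x;u,\bar v)$ carries $\prod_i (2\pi v_i)^{-1/2}\cdot (2\pi\bar v)^{-1/2}$; multiplying the right-hand side by the stated factor $\sqrt{2\pi\bar v}$ cancels the extra $(2\pi\bar v)^{-1/2}$, giving equality of the prefactors. Combining the exponent identity with the prefactor identity yields the claim. There is no real obstacle here — it is a routine Gaussian manipulation — but the one step requiring care is the algebraic verification that $\sum_i v_i^{-1}x_i^2 - \bar v^{-1}\bar x^2 = \sum_i v_i^{-1}(x_i-\bar x)^2$, i.e. making sure the weighted-variance decomposition is applied with the correct weights $v_i^{-1}$ and that the definition of $\bar x$ is exactly what makes the cross term drop.
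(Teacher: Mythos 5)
Your proposal is correct and follows essentially the same route as the paper: both proofs reduce to the decomposition $\sum_i v_i^{-1}(x_i-u)^2 = \sum_i v_i^{-1}(x_i-\bar x)^2 + \bar v^{-1}(\bar x - u)^2$, which hinges on the cross term vanishing because $\bar x$ is the $v_i^{-1}$-weighted mean, and then match the normalizing constants via the extra $\sqrt{2\pi\bar v}$ factor. The only cosmetic difference is that you complete the square in $u$ while the paper inserts $\bar x$ and expands $(x_i-\bar x+\bar x-u)^2$ directly; these are the same identity.
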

\begin{proof}
  We have that
  \begin{align}
    \sum_i \frac1{v_i}(x_i - u)^2 &= \sum_i\frac1{v_i}(x_i - u)^2 \\
                                  &= \sum_i \frac1{v_i}(x_i - \bar{x} + \bar{x} - u)^2 \\
                                  &= \sum_i \frac1{v_i}(x_i - \bar{x})^2 + \sum_i \frac1{v_i}(\bar{x} - u)^2 \label{eq:lem:multigauss}
  \end{align}
  and therefore
  \begin{align}
    &\prod_i (\frac1{\sqrt{2 \pi v_i}}\exp(-\frac1{2v_i}(x_i - \mu)^2)) \\
    &= \prod_i \frac1{\sqrt{2 \pi v_i}}\exp(\sum_i -\frac12 (\frac1{v_i}(x_i - \bar{x})^2 + \frac1{v_i}(\bar{x} - u)^2)) \\ 
    &= \prod_i \Ncal(x_i, \bar{x}, v_i) \exp(-\frac12( \sum_i \frac1{v_i})(\bar{x} - u)^2)) \\ 
  \end{align}
  so the desired result follow.
\end{proof}

By Lemma~\ref{lemma:multmgauss}, we have
\begin{align}
  \prod_i p(\xb_i | \sbb) &= \prod_i |W_i| \prod_j \Ncal(y_{ij}; \bar{y}_{j}, \Sigma_{ij}) \sqrt{2 \pi \bar{\Sigma_{j}}} \Ncal(\bar{y}_j; s_j, \bar{\Sigma_{j}})  \\
\end{align}
where $\bar{y}_j = \frac{\sum_i \Sigma_{ij}^{-1} y_{ij}}{ \sum_i
  \Sigma_{ij}^{-1}}$ and $\bar{\Sigma_{j}} = (\sum_i
\Sigma_{ij}^{-1})^{-1}$.
Hiding variable that do not depend on $\sbb$ we obtain

\begin{align}
  \prod_i p(\xb_i | \sbb) &\propto \prod_j \Ncal(\bar{y}_j; s_j, \bar{\Sigma_{j}})  \\
\end{align}

Then we get
\begin{align}
  p(\xb, \sbb) &\propto \prod_j \sum_{\alpha \in \{0.5, 1.5\}} \Ncal(s_j; \bar{y}_j, \bar{\Sigma_{j}}) \Ncal(s_j; 0, \alpha)
\end{align}

Let us now prove a second Lemma:
\begin{lemma}
  \label{lemma:multgauss}
  \begin{align*}
    \mathcal{N}(x; y, \nu) \mathcal{N}(x, 0, \alpha) = \mathcal{N}(y; 0, \nu + \alpha) \mathcal{N}(x; \frac{\alpha y}{\alpha + \nu}, \frac{\nu \alpha}{\alpha + \nu})
  \end{align*}
\end{lemma}
\begin{proof}

  We have 
\begin{align}
  &\mathcal{N}(x; y, \nu) \mathcal{N}(x, 0, \alpha) = \frac{\exp \left(-\frac{(x-y)^2}{2\nu} \right)}{\sqrt{2 \pi \nu}} \frac{\exp \left(-\frac{x^2}{2\alpha}\right)}{\sqrt{2 \pi \alpha}}
\end{align}

Then,
\begin{align}
  &\exp \left(-\frac{(x-y)^2}{2\nu} \right) \\
  &= \exp\left(- \frac{\alpha (x-y)^2 + \nu x^2}{2 \alpha \nu} \right) \\
  &= \exp\left(- \frac{\alpha (x^2 -2xy + y^2) + \nu x^2}{2 \alpha \nu} \right) \\ 
  &=\exp\left(- \frac{x^2(\alpha + \nu) -2x(\alpha y) + \alpha y^2 }{2 \alpha \nu} \right) \\
  &= \exp\left( -\frac{x^2 -2x\frac{\alpha y}{\alpha + \nu} + \frac{\alpha y^2}{\alpha + \nu} }{2 \frac{\alpha \nu}{\alpha + \nu}} \right) \\
  &= \exp\left( -\frac{(x - \frac{\alpha y}{\alpha + \nu})^2 - ( \frac{\alpha y}{\alpha + \nu} )^2 + \frac{\alpha y^2}{\alpha + \nu} }{2 \frac{\alpha \nu}{\alpha + \nu}} \right) \\
  &= \exp\left( -\frac{(x - \frac{\alpha y}{\alpha + \nu})^2}{2\frac{\alpha \nu}{\alpha + \nu}}\right) \exp \left(-\frac{ - \alpha^2 y^2 + (\alpha + \nu)\alpha y^2 }{2 \alpha \nu(\alpha + \nu)} \right) \\
  &= \exp\left( -\frac{(x - \frac{\alpha y}{\alpha + \nu})^2}{2\frac{\alpha \nu}{\alpha + \nu}}\right) \exp \left(-\frac{\nu\alpha y^2 }{2 \alpha \nu(\alpha + \nu)} \right)
\end{align}

and
\begin{align}
  \frac1{\sqrt{2 \pi \nu}\sqrt{2 \pi \alpha}} &= \frac1{\sqrt{2 \pi (\nu + \alpha)}\sqrt{2 \pi \frac{\nu \alpha}{\nu + \alpha}}}
\end{align}
so that the desired result follow.
\end{proof}
                 
By Lemma~\ref{lemma:multgauss}, we have:
\begin{align}
  &p(\xb, \sbb) \\
  &\propto \prod_j \sum_{\alpha \in \{0.5, 1.5\}} \Ncal(\bar{y}_{j}; 0 , \bar{\Sigma}_{j} + \alpha) \Ncal(s_j; \frac{\alpha \bar{y}_{j}}{\alpha + \bar{\Sigma_{j}}}, \frac{\bar{\Sigma_{j}}\alpha}{\alpha + \bar{\Sigma_{j}}})
\end{align}

and therefore we get:
\begin{align}
  p(\sbb | \xb) &= \frac{p(\sbb, \xb)}{\int_{\sbb} p(\sbb, \xb)} \\
                &= \prod_j \frac{\sum_{\alpha \in \{0.5, 1.5\}} \theta_{\alpha} \Ncal(s_j; \frac{\alpha \bar{y}_{j}}{\alpha + \bar{\Sigma_{j}}}, \frac{\bar{\Sigma_{j}}\alpha}{\alpha + \bar{\Sigma_{j}}})}{\sum_{\alpha \in \{0.5, 1.5\}} \theta_{\alpha}}
\end{align}
where $\theta_{\alpha} = \Ncal(\bar{y}_{j}; 0 , \bar{\Sigma}_{j} + \alpha)$.

So we obtain the desired result:
\begin{align}
  &\EE[s_j | \xb] = \frac{\sum_{\alpha \in \{0.5, 1.5\}} \theta_{\alpha} \frac{\alpha \bar{y}_{j}}{\alpha + \bar{\Sigma_{j}}}}{\sum_{\alpha \in \{0.5, 1.5\}} \theta_{\alpha}} \\
  & \VV[s_j | \xb] = \frac{\sum_{\alpha \in \{0.5, 1.5\}} \theta_{\alpha} \frac{\bar{\Sigma_{j}}\alpha}{\alpha + \bar{\Sigma_{j}}}}{\sum_{\alpha \in \{0.5, 1.5\}} \theta_{\alpha}}  
\end{align}

\subsection{M-step}
The function to minimize in the M-step is then given by:
\begin{align}
  \Jcal &= -\log p(\xb, \sbb) \\
  &= \sum_{i=1}^m - \log(|W_i|) + \log(|\Sigma_i|) + \frac12 \tr(\Sigma_i^{-1} \left[(\yb_i - \EE[\sbb | \xb]) (\yb_i - \EE[\sbb | \xb])^{\top}+ \VV[\sbb | \xb]\right]) + c
\end{align}
where $c$ does not depend on $\Sigma_i$ or $W_i$

Therefore we get closed-form updates for $\Sigma_i$: 
\begin{align}
\Sigma_i \leftarrow  \diag((\yb_i - \EE[\sbb | \xb]) (\yb_i - \EE[\sbb | \xb])^{\top}+ \VV[\sbb | \xb])
\end{align}

We update $W_i$ by performing a quasi-Newton step. 

We use the relative gradient $\mathcal{G}^{W_i}$ and $\mathcal{H}^{W_i}$ defined
by \\
$\mathcal{J}(W_i + \varepsilon W_i) = \mathcal{J}(W_i) + \langle
 \varepsilon|\mathcal{G}^{W_i}\rangle + \frac12 \langle
 \varepsilon|\mathcal{H}^{W_i} \varepsilon \rangle$.

 We get:
 \begin{align}
   \mathcal{J}(W_i + \varepsilon W_i) &= \sum_{i=1}^m \left[ -\log(|W_i|) -\log(|I_k + \varepsilon|) - \log(\mathcal{N}(\yb_i + \varepsilon \yb^i; \sbb; \Sigma_i)) \right] + const \\
                                      &= \mathcal{J}(W_i) - \tr(\varepsilon) + \frac12 \tr(\varepsilon^2) \\& \enspace \enspace + \frac12 \left[\langle \varepsilon \yb^i| (\Sigma_i)^{-1} (\yb^i - \sbb) \rangle + \langle (\yb^i - \sbb)| (\Sigma_i)^{-1} \varepsilon \yb^i \rangle + \langle \varepsilon \yb^i| (\Sigma_i)^{-1} \varepsilon \yb^i \rangle \right] \\ & \enspace + o(\|\varepsilon\|^2) \\
   &= \mathcal{J}(W_i) - \sum_a \varepsilon_{a, a} + \frac12 \sum_{a, b} \varepsilon_{a,b} \varepsilon_{b, a} \\& \enspace \enspace + \sum_{a, b} \varepsilon_{a, b} \left[(\Sigma_i)^{-1}(\yb^i - \sbb) (\yb^i)^{\top} \right]_{a, b} + \frac12 \sum_{a, b} \varepsilon_{a, b} \left[(\Sigma_i)^{-1} \varepsilon \yb^i (\yb^i)^{\top}\right]_{a, b} \\ & \enspace + o(\|\varepsilon\|^2) \\
   &= \mathcal{J}(W_i) - \sum_a \varepsilon_{a, a} + \frac12 \sum_{a, b} \varepsilon_{a,b} \varepsilon_{b, a} \\& \enspace \enspace + \sum_{a, b} \varepsilon_{a, b} \left[(\Sigma_i)^{-1}(\yb^i - \sbb) (\yb^i)^{\top} \right]_{a, b} + \frac12 \sum_{a, b, d} \varepsilon_{a, b} (\Sigma_i)^{-1}_{a, a} \varepsilon_{a, d} \left[\yb^i (\yb^i)^{\top}\right]_{d, b} \\ & \enspace + o(\|\varepsilon\|^2) \\
 \end{align}

 So:
 \begin{equation}
 \mathcal{G}^{W_i}_{a, b} =  -\delta_{a,b} + \left[(\Sigma_i)^{-1} (\yb^i - \sbb)(\yb^i)^{\top}\right]_{a, b}
 \end{equation}
 and
 \begin{equation}
 \mathcal{H}^{W_i}_{a, b, c, d} = \delta_{a, d}\delta_{b, c} + \delta_{a, c} \frac{y_{ib} y_{id}}{\Sigma_{ia}}
 \end{equation}
 
 We approximate the Hessian by
 \begin{align}
 \widehat{\mathcal{H}^{W_i}_{a, b, c, d}} = \delta_{ad} \delta_{bc} + \delta_{ac} \delta_{bd}\frac{(y_{ib})^2}{\Sigma_{ia}}
\end{align}
where the Hessian approximation is exact when the unmixed data have truly independent components.

Updates for $W_i$ are then given by
$W_i \leftarrow (I - \rho (\widehat{\mathcal{H}^{W_i}})^{-1} \mathcal{G}^{W_i}) W_i$, 
where $\rho$ is chosen by backtracking line-search.
We alternate between computing the statistics $\mathbb{E}[\sbb|\xb]$ and
$\Var[\sbb|\xb]$ (E-step) and updates of parameters $\Sigma_i$ and $W_i$ for $i=1 \dots m$ (M-step).

\section{Description of the datasets and the preprocessing pipeline}
\label{app:preprocessing}
All datasets are resampled and  masked using the brain mask available at
\url{http://cogspaces.github.io/assets/data/hcp_mask.nii.gz}. The dimensionality
of the data is given by the number of voxels in the mask: 212445. Data are
detrended and standardized so that each voxels' timecourse has zero mean and unit variance.  

When reconstructing the BOLD signal of missing subjects, data are preprocessed with a 6~mm smoothing.
In the timesegment matching experiment, we use unsmoothed data except for the sherlock dataset for which the available data are already smoothed.
Multiple acquisitions (called runs) are necessary to build the datasets. Each run lasts approximately 10 minutes.

Sherlock data are available at \url{http://arks.princeton.edu/ark:/88435/dsp01nz8062179}. We refer the reader to~\cite{chen2017shared} for a precise description of the study cohort,
experimental design and pre-processing pipeline.
The data are split manually into 4 runs of $395$ timeframes and one run of $396$
timeframes so that cross validation can be performed. Subject 5 is removed
because of missing data. The repetition time (TR) is 1.5s and the spatial
resolution is of 3~mm.

Forrest data are downloaded from OpenfMRI~\cite{poldrack2013toward}. Data
are acquired with a 7T scanner with an isotropic spatial resolution of 1~mm and
then resampled to a spatial resolution of 3~mm. A complete description of the
experimental design and study cohort are given in~\url{http://studyforrest.org}
and~\cite{hanke2014high}. Subject 10 is discarded as not all runs are available
at the time of writing. Run 8 is discarded as it is missing in some subjects.
We therefore uses 7 runs of respectively 451, 441, 438, 488, 462, 439 and 542 timeframes and 19 subjects.
The repetition time (TR) is 2s and the spatial resolution is of 1~mm.

Raiders and Gallant dataset pertains to the Individual Brain Charting dataset. These data were acquired using a 3T scanner and resampled to an isotropic spatial resolution of 3~mm. More information is available in~\cite{ibc}. Gallant dataset is refered to as clips in~\cite{ibc}. Data are available at \url{https://openneuro.org/datasets/ds00268}. Datasets gallant and raiders are preprocessed using FSL \url{http://fsl.fmrib.ox.ac.uk/fsl} using  slice  time  correction,  spatial  realignment,  co-registration  to  the  T1  image  and  affine  transformation  of  the  functional  volumes  to  a  template  brain (MNI). The repetition time (TR) is 2s and the spatial resolution is of 3~mm.
The Raiders dataset uses 9 runs of respectively 374, 297, 314, 379, 347, 346,
350, 353 and 211 timeframes. The Gallant dataset uses 17 runs of 325 timeframes each. 
The protocol used for Raiders is the same as the one used in~\cite{haxby2011common} and the protocol used for Gallant is the same as the one used in~\cite{nishimoto2011reconstructing}.

A brief summary of the characteristics of the datasets is available in Table~\ref{tab:dataset}
\begin{table}
    \centering
    \begin{tabular}{c|c|c|c|c}
        \textbf{Dataset} & \textbf{Duration} & $m$& \textbf{  Description} \\
        \hline
         Sherlock & 50 min & 16 & Movie watching (BBC TV show "Sherlock") \\
         Forrest & 110 min & 19 & Auditory version "Forrest Gump" \\
         Gallant & 130 min & 12 & various short video clips \\
         Raiders & 110 min & 11 & Movie watching ("Raiders of the lost ark")
    \end{tabular}
    \caption{Information about datasets (name, duration, number of subjects $m$ and short description)}
    \label{tab:dataset}
\end{table}

All datasets used in MEG have dimensionality 102 since we only consider the
magnetometers. The temporal resolution is 1~ms.

The \emph{CamCAN} dataset~\cite{taylor2017cambridge}
contains the MEG data of 496 different
subjects exposed to an audio-visual stimuli. More precisely, subjects are presented simultaneously an
auditory stimuli lasting 300ms at frequency 300, 600 or 1200 Hz  and a
checkerboard pattern lasting 34ms. 120 trials are available.
The protocol used in the CamCAN MEG dataset is described
in~\cite{taylor2017cambridge}.

\section{Reconstructing the BOLD signal of missing subjects}
\label{app:table}
We report in Table~\ref{reconstruction:table} the R2 score obtained with MVICA, ShICA-J
and ShICA-ML with 20 components as well as a 95\% confidence interval on the experiment
``Reconstructing the fMRI data of left-out subjects''. These data are already
reported in Figure~\ref{fig:reconstruction} but are given here in form of a table.

\begin{table}
  \centering
  \begin{tabular}{c | c | c | c}
    \textbf{Dataset}  & \textbf{Method}   & \textbf{$R^2$ score} & \textbf{Confidence interval}   \\  
    \hline
    forrest  & ShICA-ML & 0.200    & [0.187, 0.213]        \\  
    & ShICA-J  & 0.171    & [0.157, 0.185]          \\
    & MVICA    & 0.191    & [0.177, 0.204]          \\
    gallant  & ShICA-ML & 0.121    & [0.107, 0.135]   \\       
    & ShICA-J  & 0.110    & [0.095, 0.125]          \\
    & MVICA    & 0.114    & [0.099, 0.128]          \\
    raiders  & ShICA-ML & 0.158    & [0.142, 0.174]   \\       
    & ShICA-J  & 0.146    & [0.129, 0.162]          \\
    & MVICA    & 0.144    & [0.124, 0.164]          \\
    sherlock & ShICA-ML & 0.174    & [0.157, 0.191]   \\       
    & ShICA-J  & 0.165    & [0.146, 0.183]          \\
    & MVICA    & 0.161    & [0.142, 0.180]   \\
   \end{tabular}
   \caption{\textbf{Reconstructing the BOLD signal of
       missing subjects}. Median $R^2$ score and 95\% confidence interval.
   }
   \label{reconstruction:table}
\end{table}

\section{Additional experiments}
\subsection{Separation performance}
\label{app:separation}
\subsubsection{Separation performance in function of non-Gaussianity}
We generate data according to model~\eqref{eq:model}. Components $\sbb$ are generated using $s_j = d(x)$ with $d(x) = x |x|^{\alpha - 1}$ and $x \sim \mathcal{N}(0, 1)$.   
Mixing matrices $A_i$ are generated by sampling their coefficients from a standardized Gaussian law. The number of samples is fixed to $n=10^5$ and we vary $\alpha$ between $0.8$ and $1.2$. Each experiment is repeated 40 times using different seeds in the random number generator. We use $p=4$ components and $m=5$ views. We display in Fig~\ref{exp:separatingpower} the mean Amari distance across subjects. The experiment is repeated $100$ times using different seeds. We report the median result and error bars  represent the first and last deciles.
 When $\alpha$ is close to 1 (components are almost Gaussian), ShICA-J, ShICA-ML and multiset CCA can separate components well (but multiset CCA reaches higher amari distance than ShICA). In this regime, MVICA yields much higher amari distance than ShICA-J, ShICA-ML or Multiset CCA but is still better than CanICA which cannot separate components at all.
 As non-Gaussianity ($\alpha$) increases, ICA based methods yield better results but ShICA-ML yields uniformly lower amari distance.
\begin{figure}
\centering
  \includegraphics[width=0.8\textwidth]{./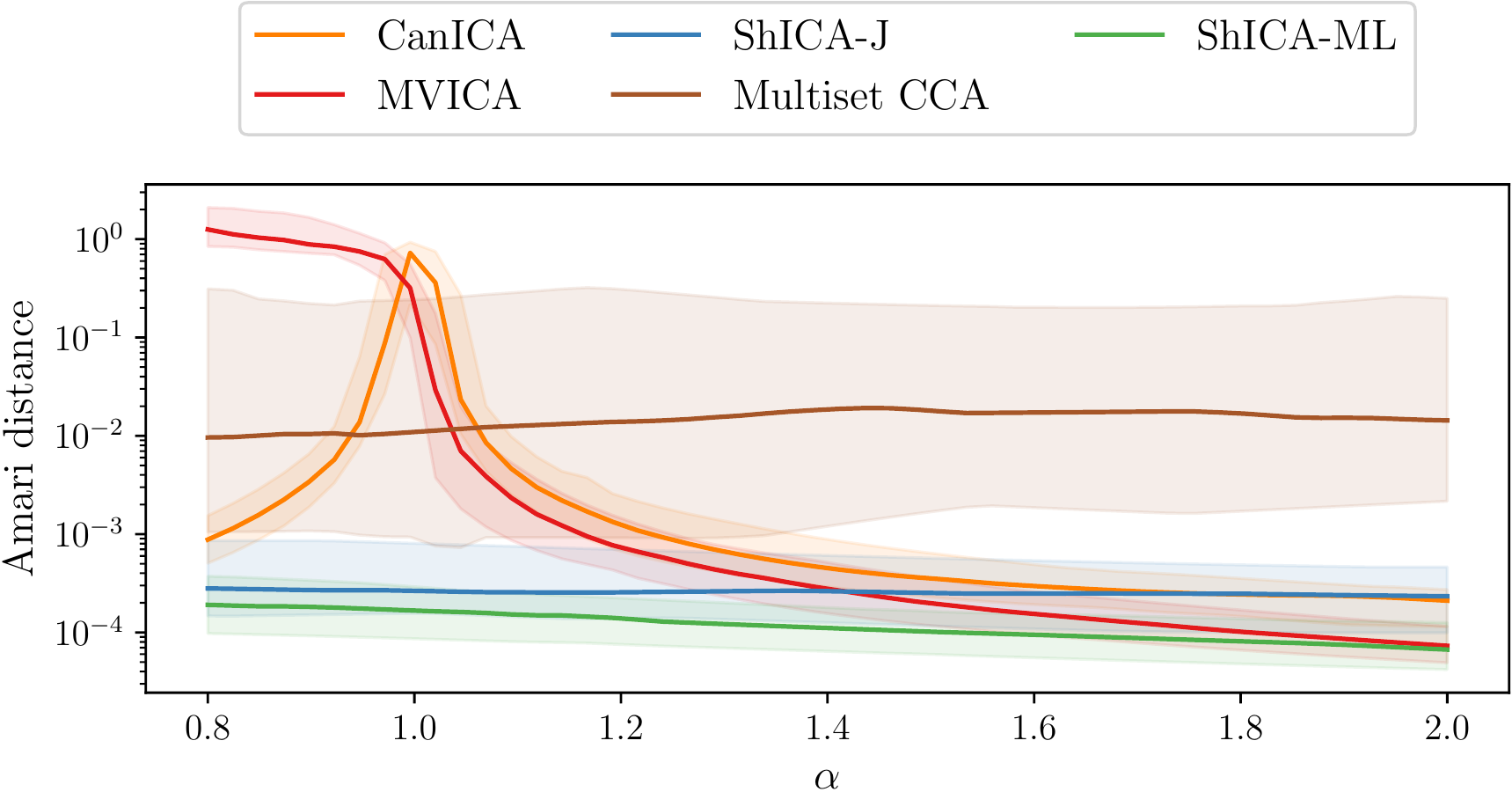}
  \caption{\textbf{ Separation performance in function of non-Gaussianity} Separation performance of algorithms for sub-Gaussian $\alpha < 1$ and super-Gaussian $\alpha > 1$ components}
  \label{exp:separatingpower}
\end{figure}


\subsection{fMRI timesegment matching experiment}
\label{app:timesegment}
We benchmark ShICA on four different real fMRI datasets via a timesegment
matching experiment similar to the one in~\cite{chen2015reduced}. We use full
brain data. The datasets and the preprocessing pipeline are described in Appendix~\ref{app:preprocessing}.
We split the data into a train and test set and algorithms are fitted on the train set.
On the test set, we estimate the shared components from all subjects but one and select a target timesegment containing $9$ consecutive samples in the shared components. We try to localize this timesegment from the components of the left-out subject using a maximum correlation classifier (all possible windows of $9$ consecutive timeframes are considered in the left-out subject excluding the ones partially overlapping with the correct timesegment).
The left panel in Fig~\ref{exp:timesegment} shows that ShICA-ML, MVICA and ShICA-J yield almost equal accuracy and outperform other methods by a large margin. The right panel in Fig~\ref{exp:timesegment} shows that ShICA-J is much faster to fit than MVICA or ShICA-ML.

We would like to highlight here that these experiments are not exactly the same as in~\cite{chen2015reduced} as we use full brain data and they use regions of interest. The code used for this experiment is very similar to the tutorial in \url{https://brainiak.org/tutorials/11-SRM/}. We use the SRM implementation in Brainiak~\cite{kumar2020brainiak}. Also note that the Raiders dataset is different from the one used in~\cite{chen2015reduced} as it involves different subjects and data were acquired in a different neuro-imaging center.

\begin{figure}
    \centering
    \includegraphics[width=0.45\textwidth]{./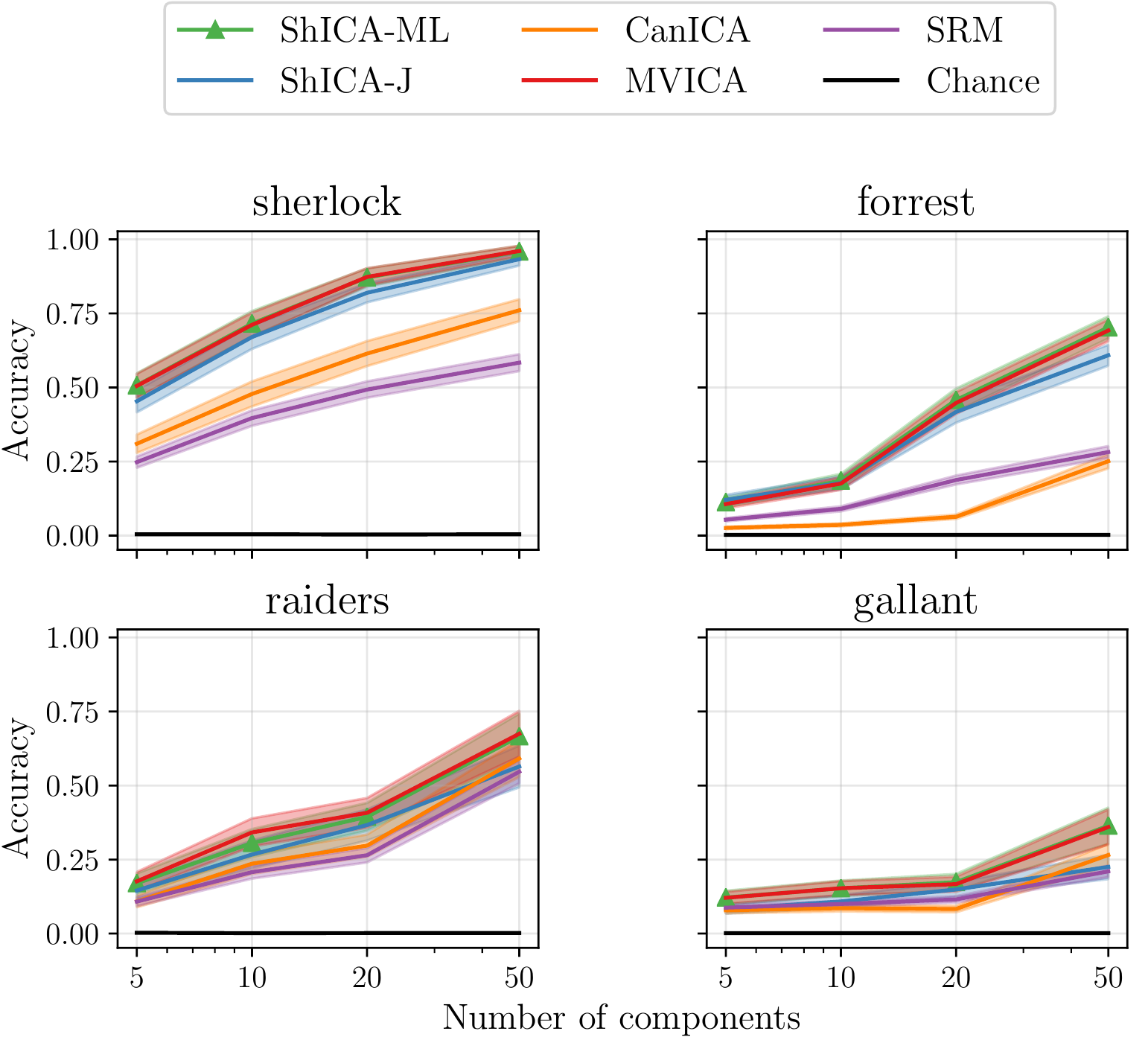}
    \includegraphics[width=0.45\textwidth]{./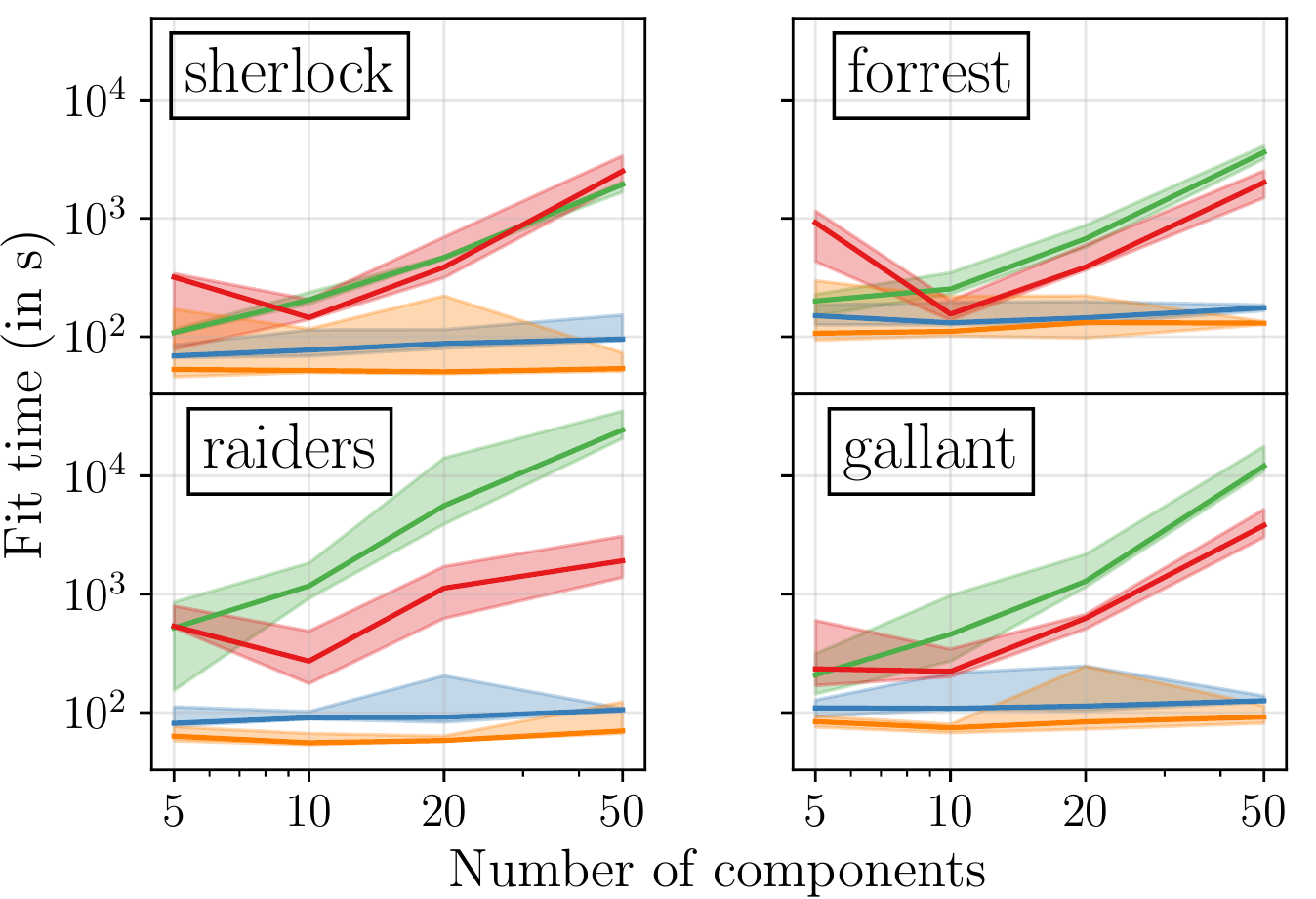}
    \caption{\textbf{Timesegment matching experiment}: (left) Accuracy (right) Fitting time (in seconds)}
    \label{exp:timesegment}
\end{figure}

\subsection{MEG Phantom experiment}
\label{app:phantom}
\subsubsection{Phantom Elektra}
    Dipoles in $m=32$ various locations are emitting the same signal.
    Signal magnitude can be either very high, high or low, leading to 3 datasets: a very clean one, a clean one and a noisy one. These datasets are available as part of the Brainstorm application~\cite{tadel2011brainstorm}. We preprocess the data using Maxwell filtering and low-pass filtering as done in the MNE tutorial \url{https://mne.tools/0.17/auto_tutorials/plot_brainstorm_phantom_elekta.html} and only consider data recorded by the magnetometers.
    We use the very clean dataset to recover the true signal by PCA with 1
    component. Then we reduce the noisy dataset by applying view-specific PCA with $k=20$ components and algorithms are applied on the reduced data. We select the component that is closer to the true one and compute the L2 norm between the predicted component and the true one after normalization.
    Then we attempt to recover the position of each dipole by performing dipole fitting on the mixing operator of each view (using only the column corresponding to the true component). The localization error is defined as the mean l2 distance between the true localization and the predicted localization where the mean is computed across dipoles. 
    Each epoch corresponds to 301 samples and 20 epochs are available in total. We vary the number of epochs between 2 and 18 and display in Fig~\ref{exp:meg_phantom} the reconstruction error and the localization error in function of the number of epochs used.
    ShICA-ML outperforms other methods. ShICA-J gives satisfying results while being much faster.
    
    \subsubsection{Phantom Sinusoidal components}
    For completeness, we display the results obtained on another MEG dataset where the true component is a known sinusoidal and $m=8$ different locations are considered for the dipoles. We vary the number of epochs between 2 and 16 and display in Fig~\ref{exp:meg_phantom_neurips} the reconstruction error and the localization error as a function of the number of epochs used. ShICA-ML outperforms other methods. ShICA-J gives satisfying results while being much faster.
    
\begin{figure}
\centering
  \includegraphics[width=0.7\textwidth]{./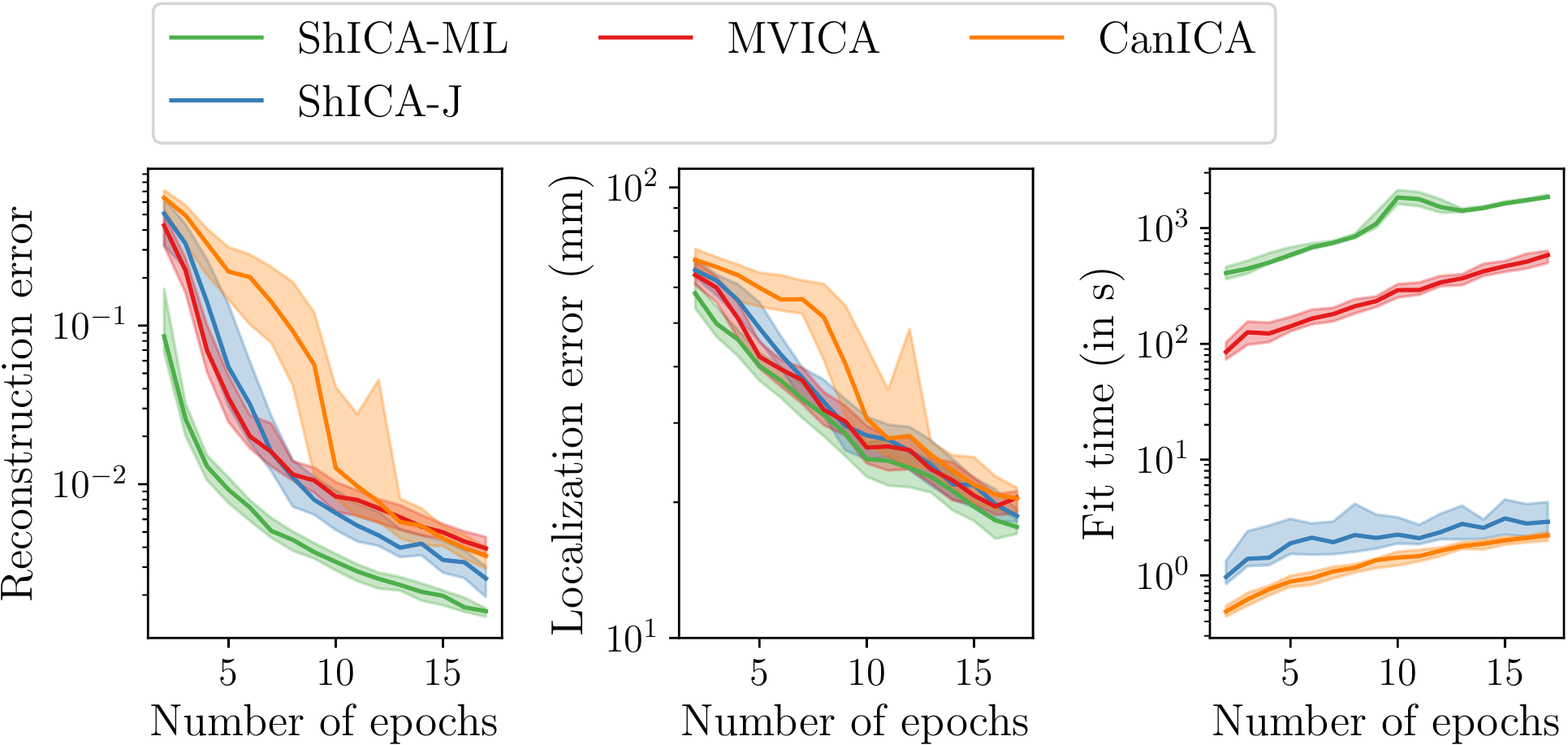}
  \caption{\textbf{MEG Phantom (Elektra)}: (left) L2 distance between the predicted and actual component (middle) Mean error (in mm) between predicted and actual dipoles localization (right) Fitting time (in seconds)}
\label{exp:meg_phantom}
\end{figure}

\begin{figure}
\centering
  \includegraphics[width=0.7\textwidth]{./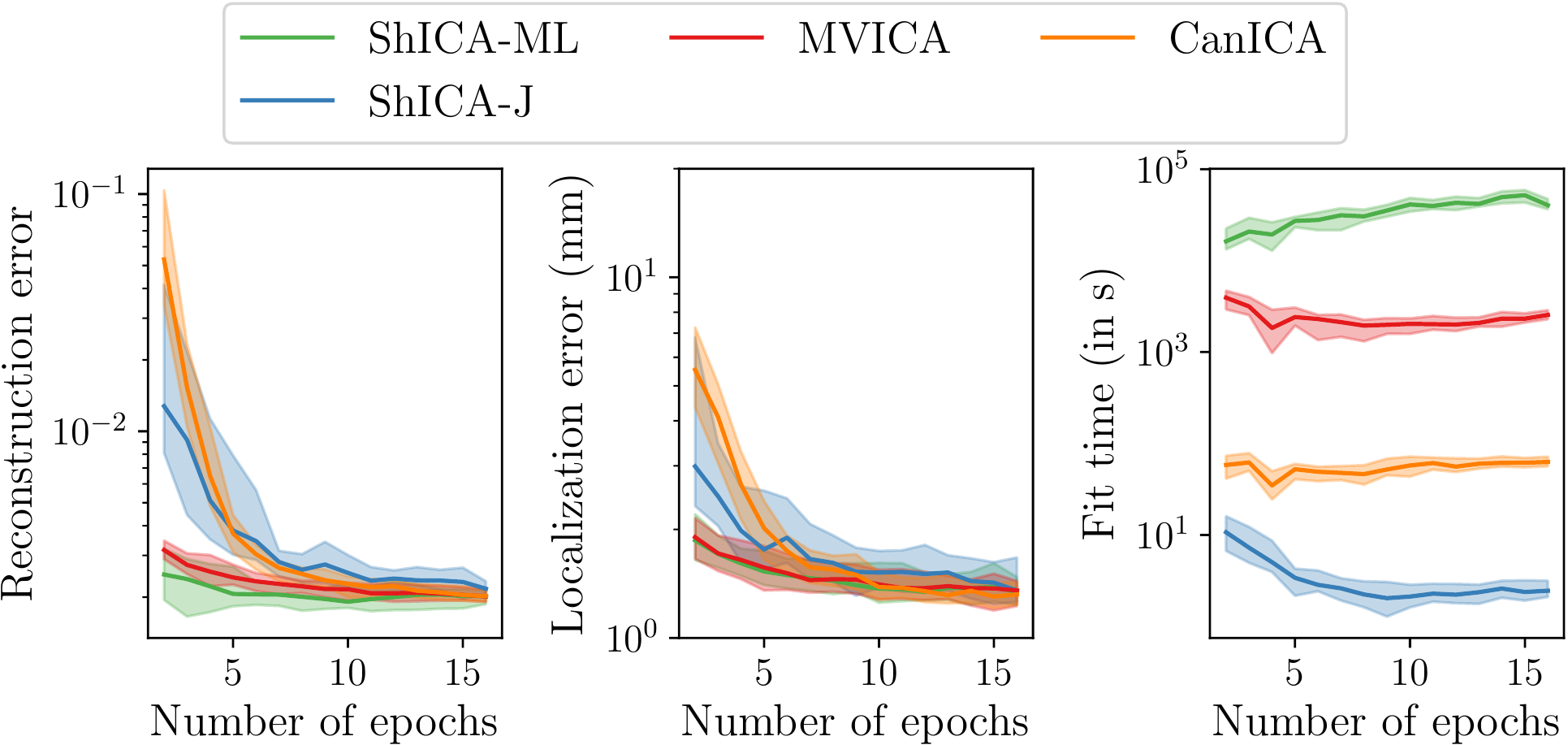}
  \caption{\textbf{MEG Phantom Sinusoidal components}: (left) L2 distance between the predicted and actual component (middle) Mean error (in mm) between predicted and actual dipoles localization (right) Fitting time (in seconds)}
\label{exp:meg_phantom_neurips}
\end{figure}

\subsection{CamCAN MEG components}
We consider the CamCAN dataset used to produce Fig~4. We use $m=496$ subjects and fit ShICA-ML with $p=10$ components. We localize the components of each subject using sLoreta~\cite{pascual2002standardized}. Then components are registered to a common brain and averaged. Thresholded maps are displayed below along with the time courses of each component. Components obtained with ShICA-ML highlight the ventral visual cortex and auditive cortex. The results suggest that the response of the auditive cortex is faster and lasts a shorter time than the response of the ventral visual cortex.

{\centering
\includegraphics[width=0.4\textwidth]{./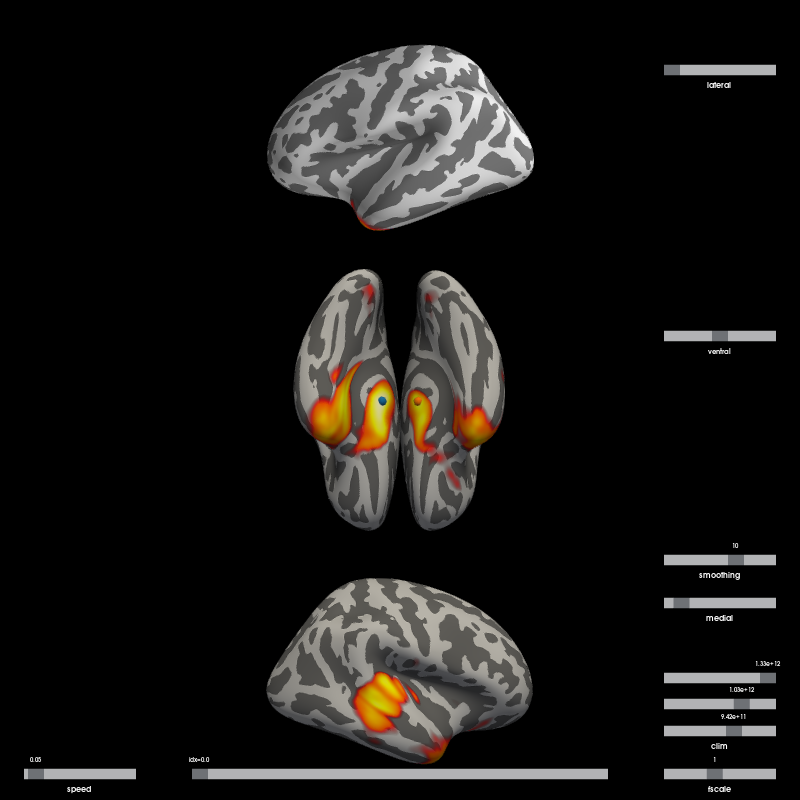}\includegraphics[width=0.4\textwidth]{./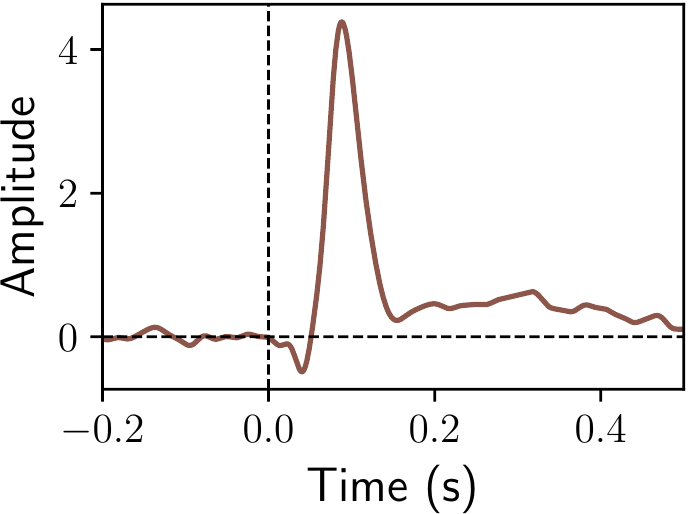} \\
\includegraphics[width=0.4\textwidth]{./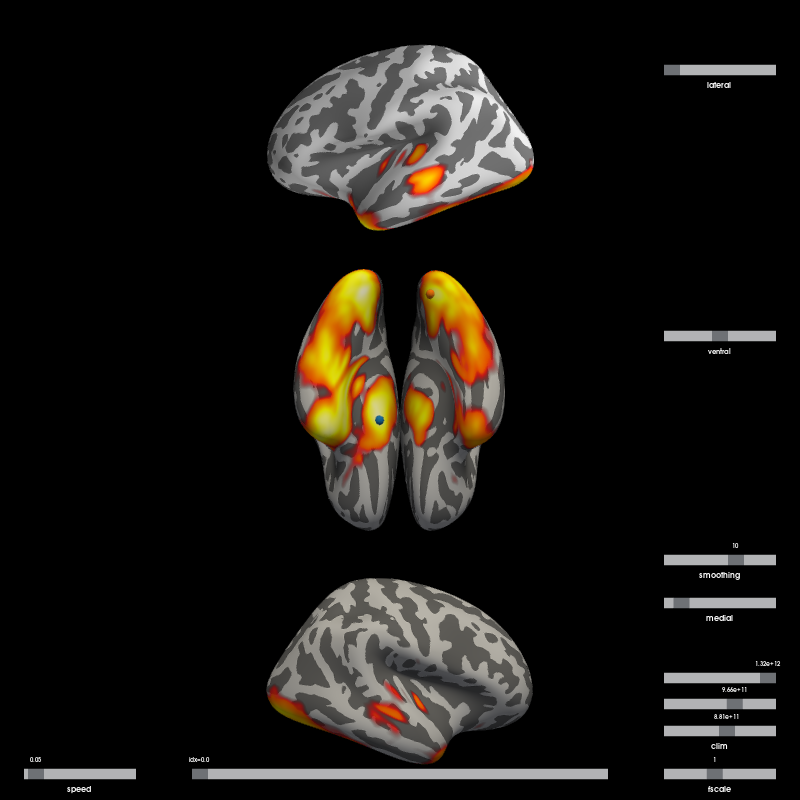}\includegraphics[width=0.4\textwidth]{./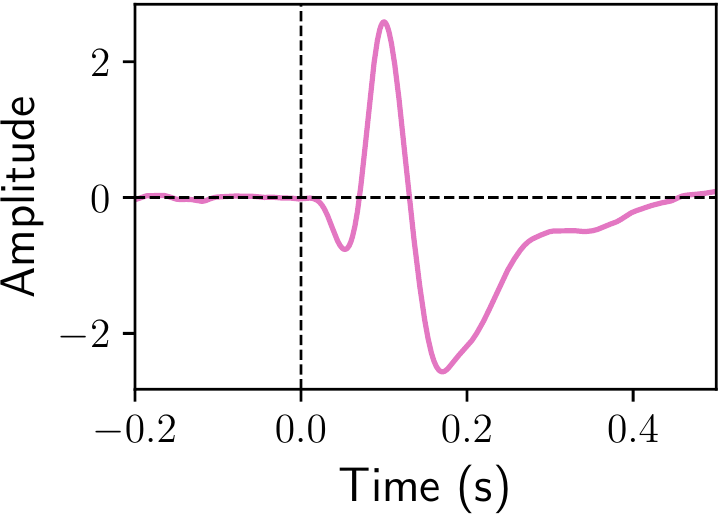} \\
\includegraphics[width=0.4\textwidth]{./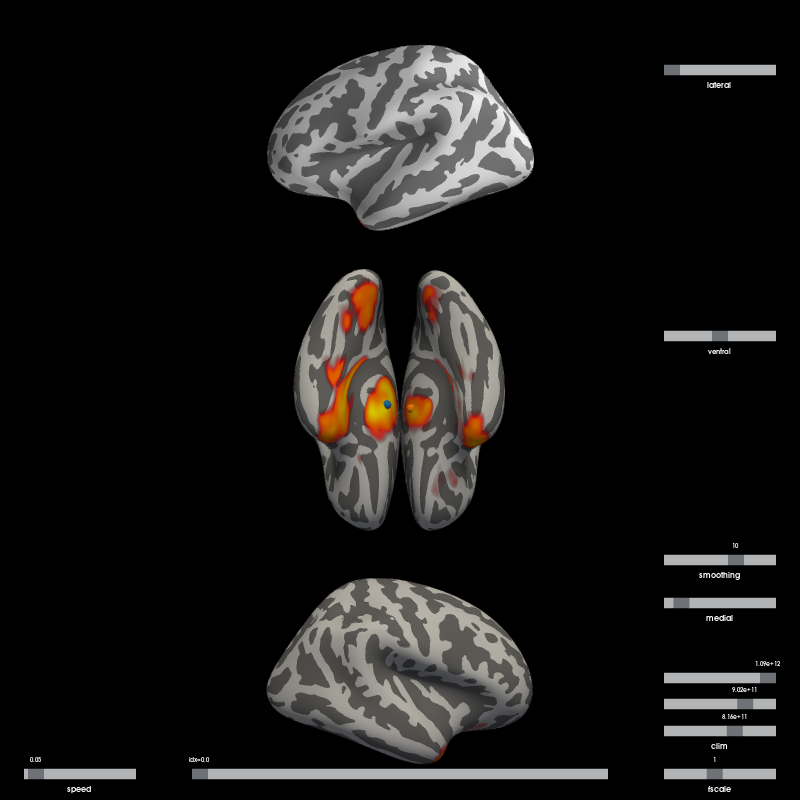}\includegraphics[width=0.4\textwidth]{./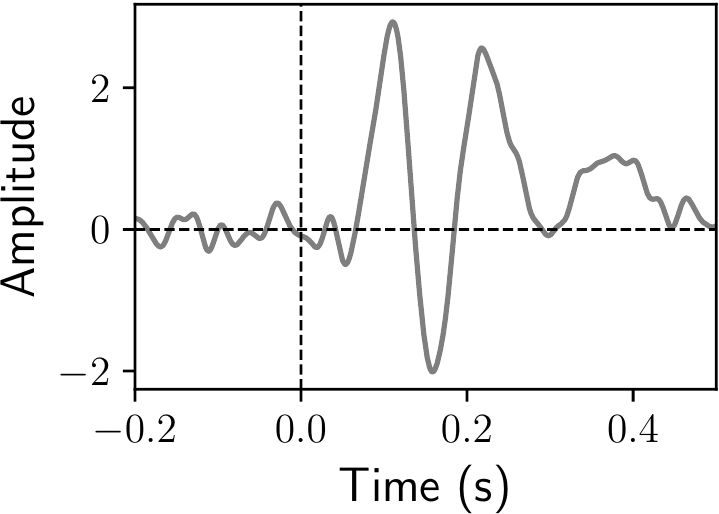} \\
\includegraphics[width=0.4\textwidth]{./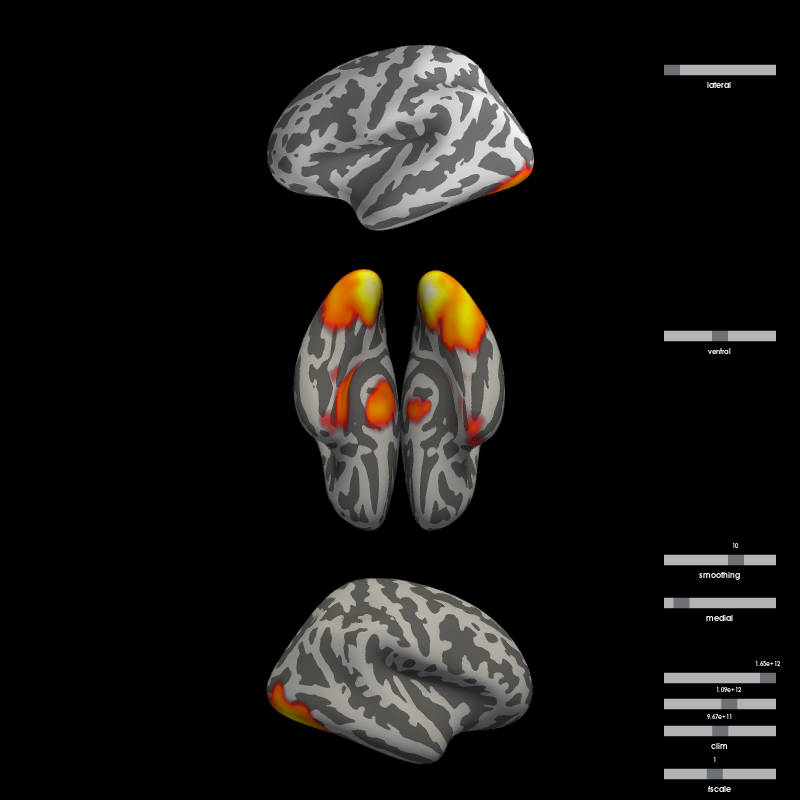}\includegraphics[width=0.4\textwidth]{./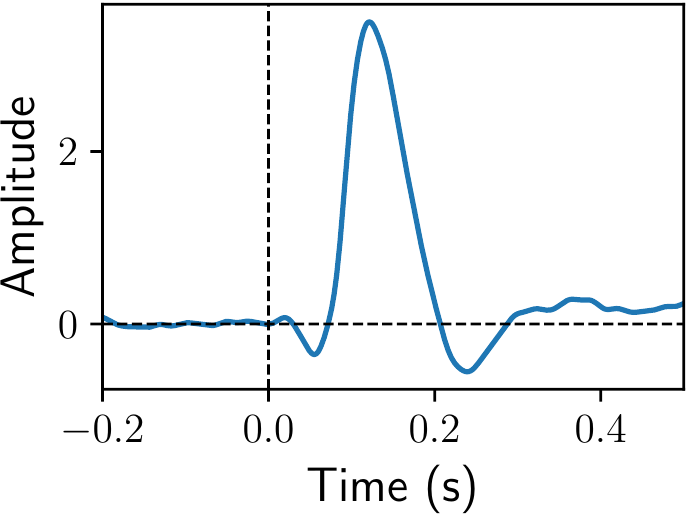} \\
\includegraphics[width=0.4\textwidth]{./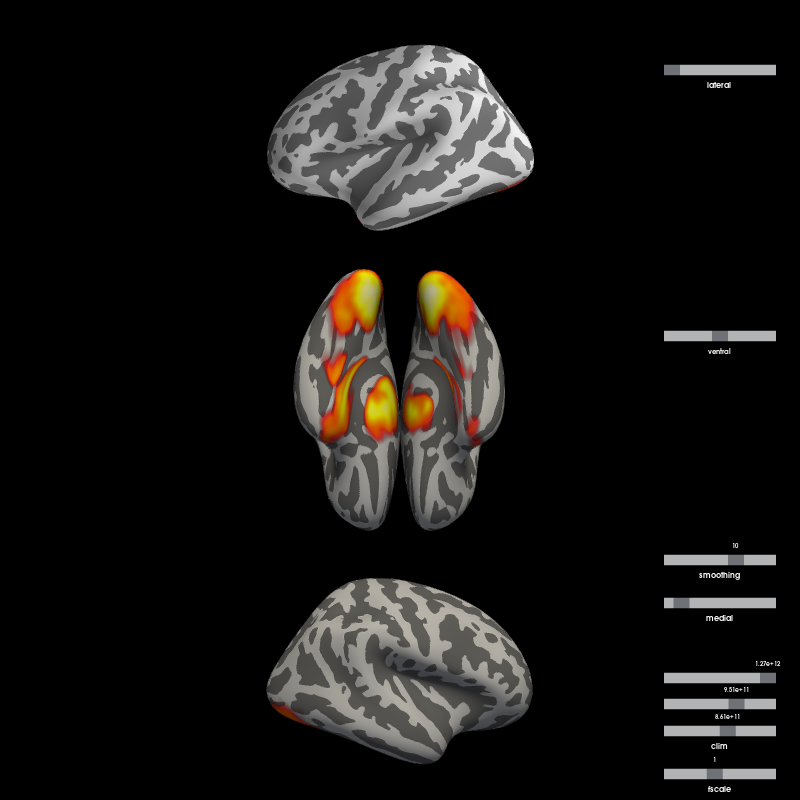}\includegraphics[width=0.4\textwidth]{./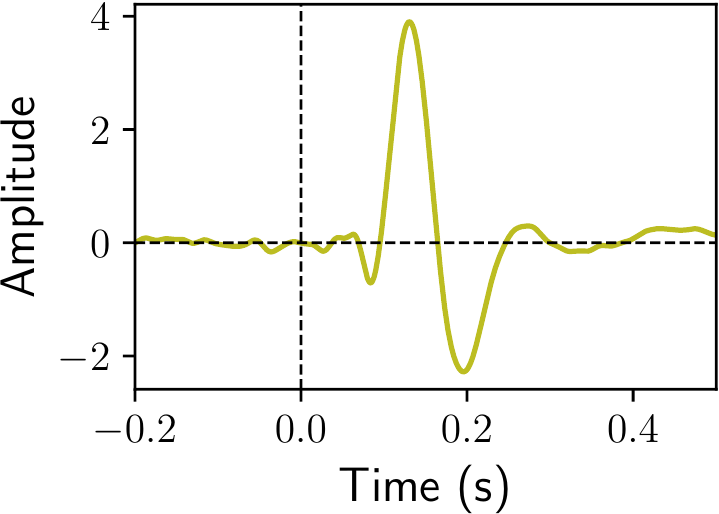} \\
\includegraphics[width=0.4\textwidth]{./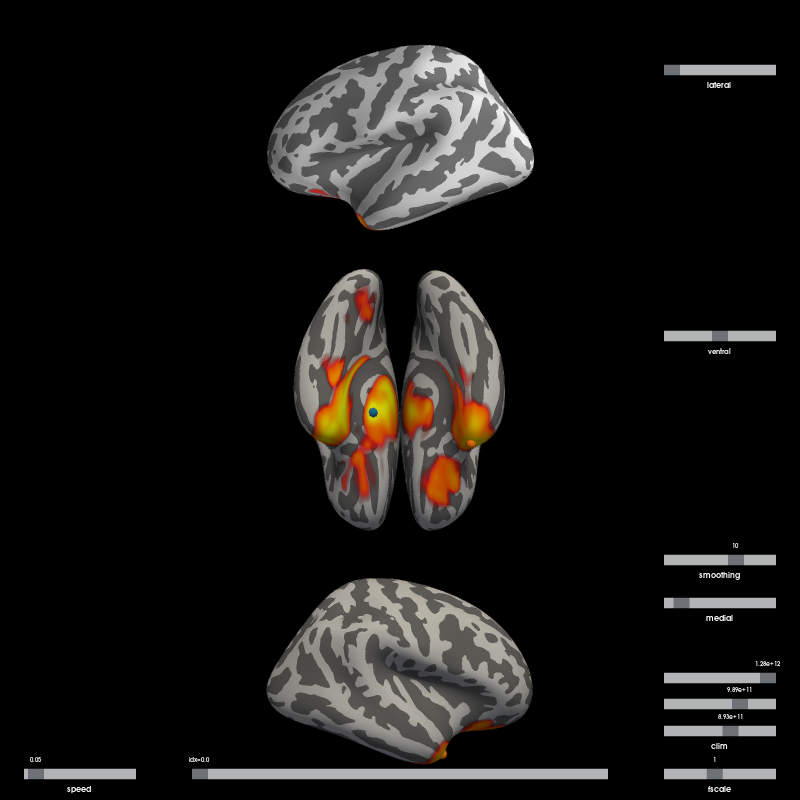}\includegraphics[width=0.4\textwidth]{./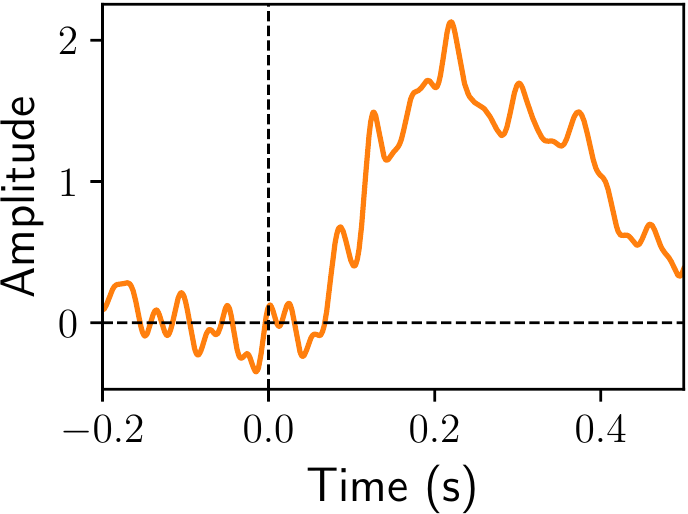} \\
\includegraphics[width=0.4\textwidth]{./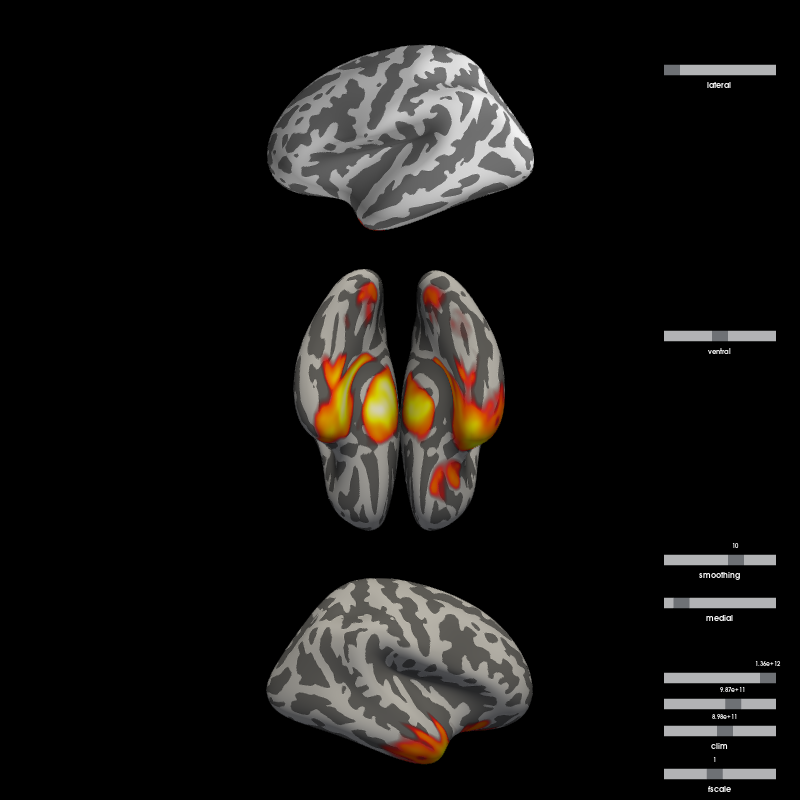}\includegraphics[width=0.4\textwidth]{./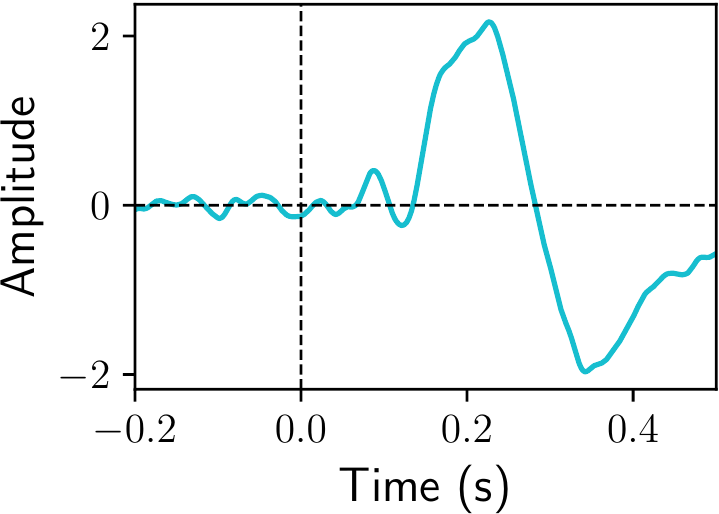} \\
\includegraphics[width=0.4\textwidth]{./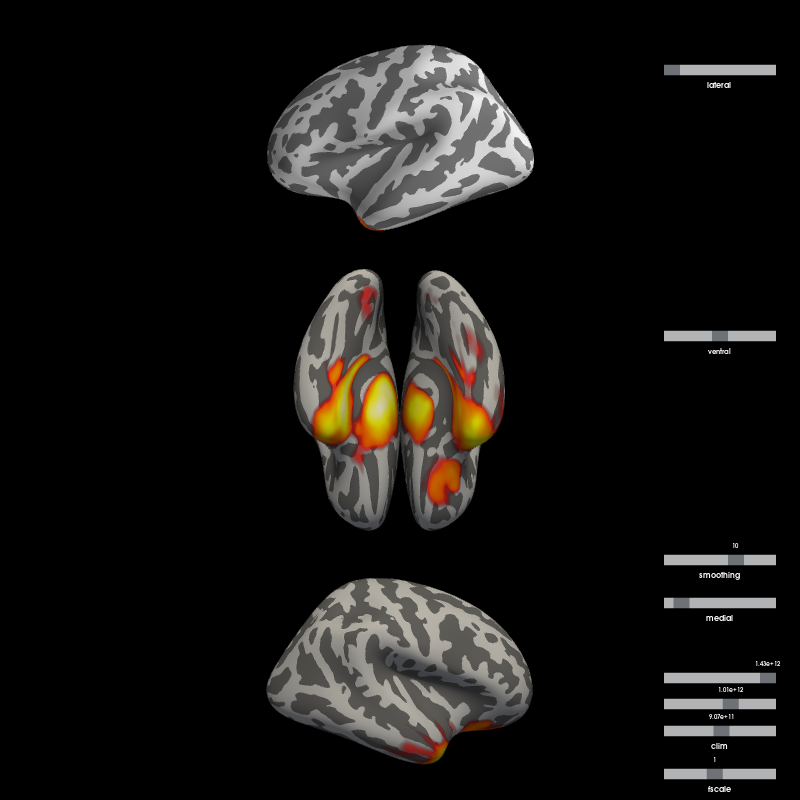}\includegraphics[width=0.4\textwidth]{./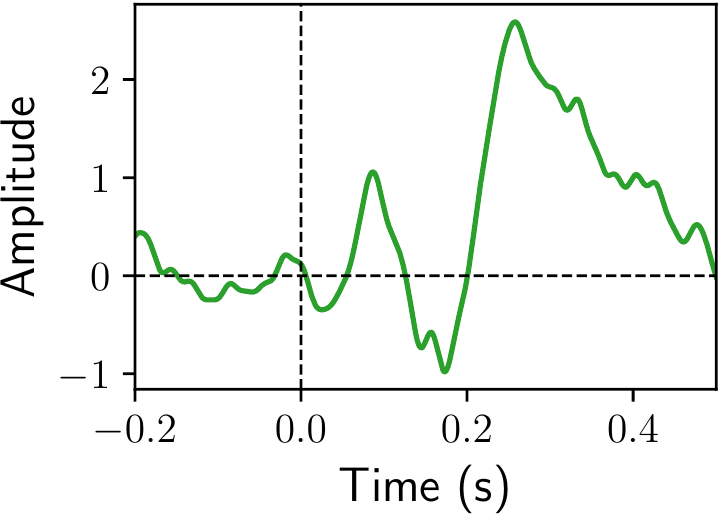} \\
\includegraphics[width=0.4\textwidth]{./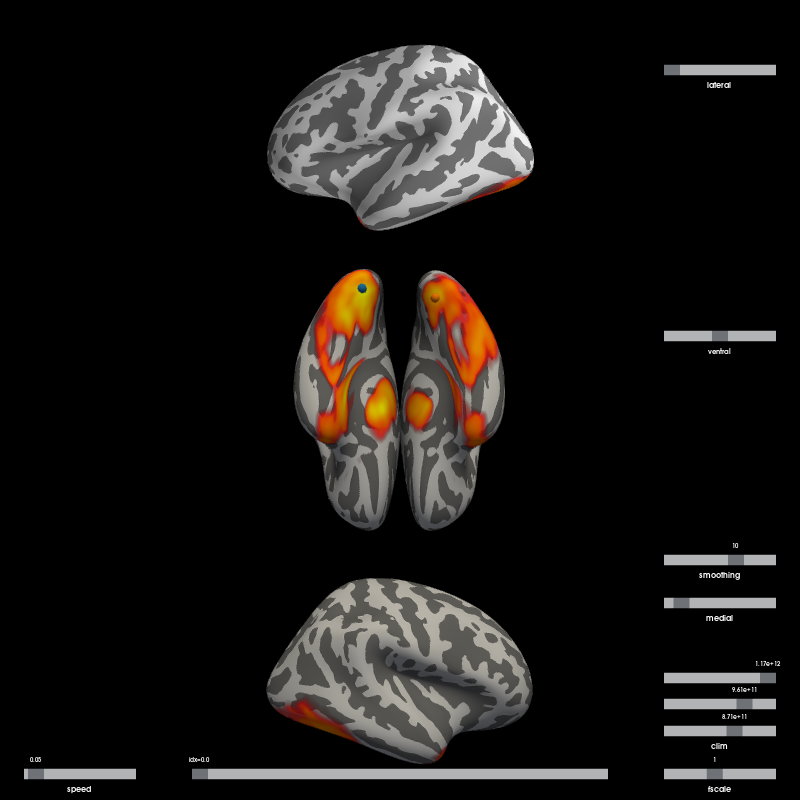}\includegraphics[width=0.4\textwidth]{./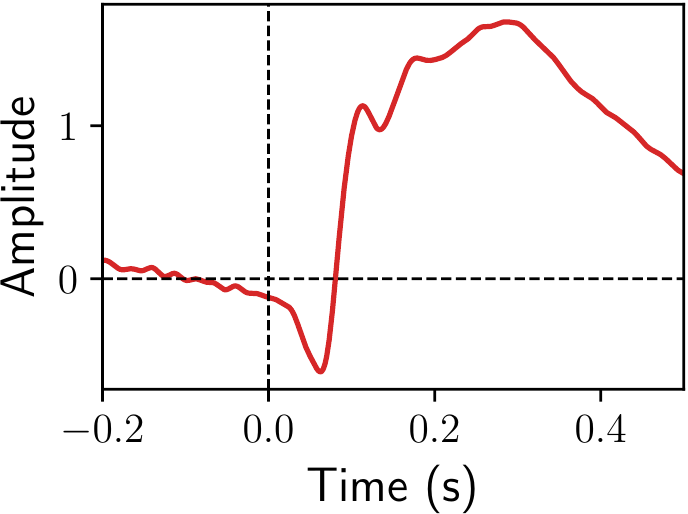} \\
\includegraphics[width=0.4\textwidth]{./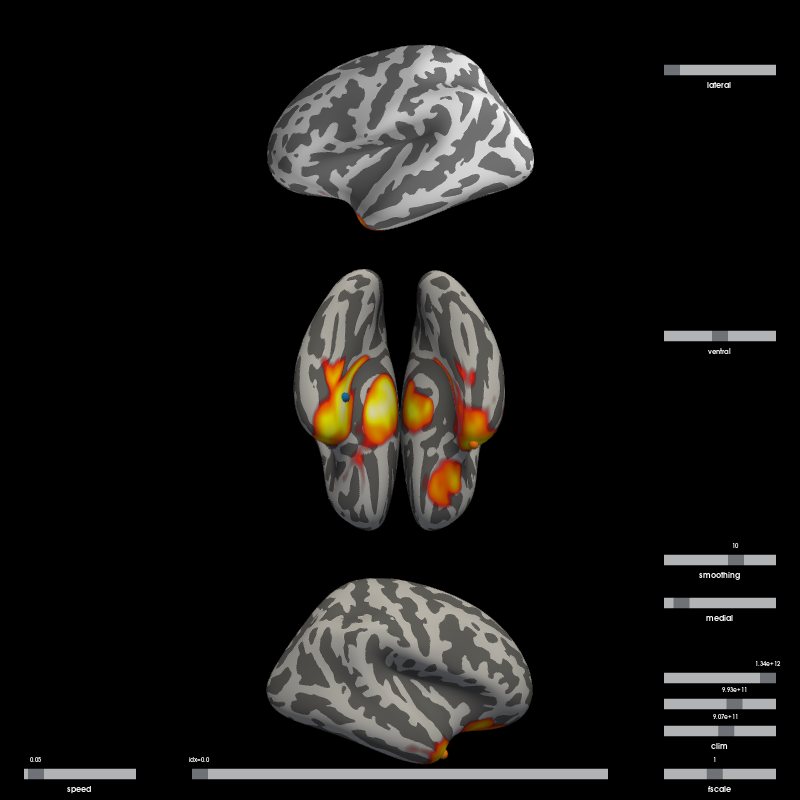}\includegraphics[width=0.4\textwidth]{./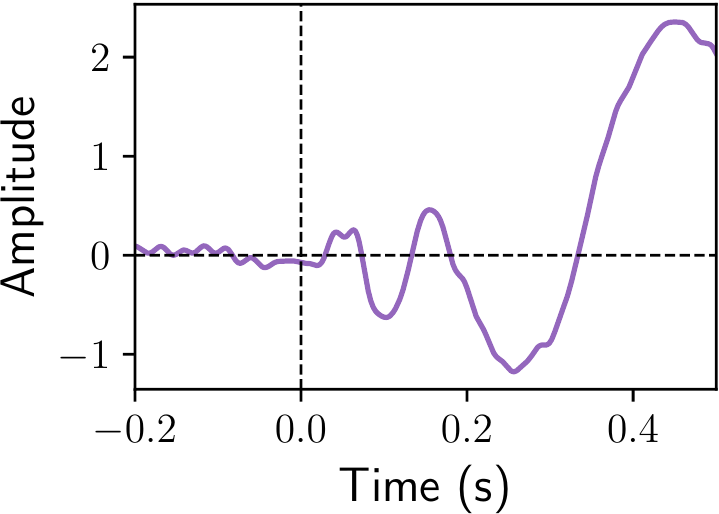} \\
}

\bibliographystyleonline{plain}
\bibliographyonline{biblio}

\end{document}